\documentclass[twoside,11pt]{article}

\usepackage{blindtext}

% Any additional packages needed should be included after jmlr2e.
% Note that jmlr2e.sty includes epsfig, amssymb, natbib and graphicx,
% and defines many common macros, such as 'proof' and 'example'.
%
% It also sets the bibliographystyle to plainnat; for more information on
% natbib citation styles, see the natbib documentation, a copy of which
% is archived at http://www.jmlr.org/format/natbib.pdf

% Available options for package jmlr2e are:
%
%   - abbrvbib : use abbrvnat for the bibliography style
%   - nohyperref : do not load the hyperref package
%   - preprint : remove JMLR specific information from the template,
%         useful for example for posting to preprint servers.
%
% Example of using the package with custom options:
%
% \usepackage[abbrvbib, preprint]{jmlr2e}

\usepackage[preprint]{jmlr2e}

% Definitions of handy macros can go here

% Heading arguments are {volume}{year}{pages}{date submitted}{date published}{paper id}{author-full-names}

\usepackage{lastpage}
% \jmlrheading{23}{2022}{1-\pageref{LastPage}}{1/21; Revised 5/22}{9/22}{21-0000}{Author One and Author Two}

%% package for insung%%%%%%%%%%%%%
\usepackage[cjk]{kotex}
\usepackage{amsmath}
\usepackage{amssymb}
\usepackage{mathtools}
\usepackage{amsthm}
\usepackage{comment}
\usepackage{bm}
\usepackage{adjustbox}
\usepackage{booktabs, multirow}
\usepackage{xcolor}
\usepackage{url}

\usepackage{apptools}
\AtAppendix{\counterwithin{lemma}{section}}

\newtheorem{example}{Example} 
\newtheorem{theorem}{Theorem}
\newtheorem{lemma}{Lemma} 
 
\newtheorem{remark}{Remark}

\newtheorem{definition}{Definition}

\newtheorem{assumption}{Assumption}

\makeatletter
\renewenvironment{proof}[1][\relax]{\par
  \pushQED{\qed}%
  \normalfont \topsep6\p@\@plus6\p@\relax
  \trivlist
  \item[\hskip\labelsep\itshape
    \ifx#1\relax \proofname\else\proofname{} of #1\fi\@addpunct{.}]\ignorespaces
}{%
  \popQED\endtrivlist\@endpefalse
}
\makeatother

% \newcommand{\BlackBox}{\rule{1.5ex}{1.5ex}}  % end of proof
% \ifdefined\proof
%     \renewenvironment{proof}{\par\noindent{\bf Proof.\ }}{\hfill\BlackBox\\[2mm]}
% \else
%     \newenvironment{proof}{\par\noindent{\bf Proof.\ }}{\hfill\BlackBox\\[2mm]}
% \fi

\newcommand{\N}{\mathbb{N}}
%%%%%%%%%%%%%%%%%%%%%%%%%%%%%%%%%%%%%%%%%%%%%%

% Short headings should be running head and authors last names

\ShortHeadings{Posterior concentrations of fully-connected BNNs}{Kong and Kim}
\firstpageno{1}

\begin{document}

\title{Posterior concentrations of fully-connected Bayesian neural networks 
with general priors on the weights}

\author{\name Insung Kong \email ggong369@snu.ac.kr \\
       \addr Department of Statistics\\
       Seoul National University\\
       Seoul, 08826, South Korea
       \AND
       \name Yongdai Kim \email ydkim0903@gmail.com \\
       \addr Department of Statistics\\
       Seoul National University\\
       Seoul, 08826, South Korea}

\editor{My editor}

\maketitle

\begin{abstract}
Bayesian approaches  for training deep neural networks (BNNs) have received significant interest and have been effectively utilized in a wide range of applications. 
There have been several studies on the properties of posterior concentrations of BNNs.
However, most of these studies only demonstrate results in BNN models with sparse  or heavy-tailed priors.
Surprisingly, no theoretical results currently exist for BNNs using Gaussian priors, which are the most commonly used one.
The lack of theory arises from the absence of approximation results of Deep Neural Networks (DNNs) that are non-sparse and have bounded parameters.
In this paper, we present a new approximation theory for non-sparse DNNs with bounded parameters. 
Additionally, based on the approximation theory,  we show that BNNs with non-sparse general priors can achieve
near-minimax optimal posterior concentration rates to the true model.
\end{abstract}

\begin{keywords}
  Bayesian neural networks, Posterior concentration, Bayesian nonparametric regression, Approximation theory, Deep neural networks
\end{keywords}

\section{Introduction}

Bayesian Neural Networks (BNNs) \citep{mackay1992practical, neal2012bayesian}, a framework for training Deep Neural Networks (DNNs) through Bayesian techniques, have garnered significant attention in the field of machine learning and AI.  
The distinctive feature of BNNs lies in their ability to combine the flexibility of DNNs and the probabilistic reasonings of Bayesian approaches. 
This combination results in DNNs that not only yield superior generalization capability across various tasks but also provide improved uncertainty quantification \citep{wilson2020bayesian, 16_izmailov2021bayesian}. 
This attribute is particularly vital in applications where decision-making under uncertainty is crucial.
Representative examples of such applications are 
recommender systems \citep{wang2015collaborative}, 
computer vision \citep{kendall2017uncertainties},
active learning \citep{tran2019bayesian},
medicine \citep{beker2020minimal}
and astrophysics \citep{cranmer2021bayesian}, to name just a few.

The remarkable success of BNNs can be attributed to the inherent capacity of DNNs to automatically learn features from data even if they are parametric models \citep{wang2020survey, 19_jospin2022hands}.
By leveraging this advantage, flexible BNN models can be devised easily which 
can handle complex predictive tasks data-adaptively without explicit model specification. 
This means that even in scenarios where users lack detailed knowledge about the functional relationship between inputs and outputs, 
BNNs are capable of uncovering intricate patterns and relationships existing in data.

There have been vast amounts of literature  that attempt to understanding theoretical properties of BNNs from the nonparametric regression standpoint \citep{polson2018posterior, cherief2020convergence, bai2020efficient, liu2021variable, sun2022consistent, lee2022asymptotic, jantre2023layer, pmlr-v202-kong23e, ohn2024adaptive}.
Rather than presuming the true function to be confined within a specific parametric model, these studies make the broader assumption that the true function belongs to a certain functional space, such as the Hölder function class.
Notably, \citet{polson2018posterior, cherief2020convergence, bai2020efficient, sun2022consistent, lee2022asymptotic, pmlr-v202-kong23e, ohn2024adaptive} show that posterior distribution of a BNN concentrates to the true function with near-minimax optimal rates with respect to the sample size when 
the architecture and the prior on the weights and biases are selected carefully.
These results demonstrate that even when the exact form of the data-generating process is unknown or too complex to be captured by traditional parametric models, BNNs possess the capacity for effective generalization, enabling them to learn these underlying patterns efficiently.

However, there is an important limitation in the existing results. That is,  the priors on the weights and biases 
of DNNs do not include those that are commonly used in practice. For example, \citet{polson2018posterior, cherief2020convergence, bai2020efficient, sun2022consistent, lee2022asymptotic} consider spike-and-slab priors but significant amounts of additional exploration times for searching a sparsity patterns become a significant obstacle in their practical use.
\citet{ohn2024adaptive} derives the posterior concentration rate of non-sparse BNNs but
uses the uniform
distribution on the weights whose domain diverges as the sample size increases.
It would not be easy to select the optimal size of the domain for given finite data which prevents
the Bayesian model of \citet{ohn2024adaptive} from being popularly used in practice.
\citet{pmlr-v202-kong23e} considered polynomial-tail priors, but these priors make 
the calculation of the gradient computationally demanding and thus the development
of a computationally efficient MCMC algorithm be difficult.

Surprisingly, there is no theoretical result about BNNs with
i.i.d. standard Gaussian priors on the weights and biases, which are most popular priors for BNNs in practice
\citep{18_fortuin2022bayesian, 19_jospin2022hands}. That is, there exists a significant gap between theories and applications.
A main reason why there is no result for the optimal concentration rates of BNNs using Gaussian priors is the absence of an approximation theory of fully-connected DNNs with bounded parameters.
Existing approximation theories of DNNs require the weights to be either sparse \citep{suzuki2018adaptivity, schmidt2019deep, imaizumi2019deep, bauer2019deep, ohn2019smooth, schmidt2020nonparametric, nakada2020adaptive, kohler2022estimation, chen2022nonparametric} or unbounded \citep{kohler2021rate, lu2021deep, jiao2023deep}.

The aim of this paper is to fill this gap by
 deriving near-minimax optimal concentration rates of the posterior distributions of BNNs with a class of general priors on the weights including independent Gaussian priors. 
To achieve this aim, we develop a new technique to approximate the Hölder functions
by fully connected DNNs with bounded weights, which offers several advantages:
(1) Compared to \citet{schmidt2020nonparametric}, our results allow
fully connected DNNs, enabling their application to BNNs without the need for sparse-inducing priors to control model complexity. 
(2) In contrast to \citet{kohler2021rate}, our results can employ DNNs with bounded parameters, allowing their application to BNNs without resorting to heavy-tailed priors.
(3) The ReLU activation function \citep{nair2010rectified} is extended to the Leaky-ReLU activation function \citep{maas2013rectifier}, which are known for their optimization merits \citep{xu2015empirical}.

Based on our new approximation results, we demonstrate that the posterior distributions of fully-connected BNNs
with a certain class of priors concentrate to the true function with near-minimax optimal rates.  
Notably, the assumed conditions on the priors hold for most commonly used prior distributions for BNNs, including the independent Gaussian distribution which has not been covered by existing theories.
That is, our results successively fill the important gap between
existing theories and applications in BNNs.

\section{Preliminaries}

\subsection{Notation}
Let $\mathbb{R}$ and $\mathbb{N}$ be the sets of real numbers and natural numbers, respectively. For an integer $n \in \mathbb{N}$, we denote $[n] := \{1,\dots,n\}$. A capital letter denotes a random variable or matrix interchangeably
whenever its meaning is clear.
A vector is denoted by a bold letter, and its elements are denoted by regular letters with superscript indices. e.g. $\bm{x} := (x^{(1)} , \dots, x^{(d)})^{\top}$. 
For a $d$-dimensional vector $\bm{x} \in \mathbb{R}^d$, we denote $|\bm{x}|_p := (\sum_{j=1}^{d}|x^{(j)}|^p)^{1/p}$ for $1 \leq p < \infty$, $|\bm{x}|_{0} := \sum_{j=1}^{d }\mathbb{I}(x^{(j)} \ne 0)$ and $|\bm{x}|_{\infty} := \max_{j \in [d]}|x^{(j)}|$. 
For $\bm{x}_1 = (x_1^{(1)} , \dots, x_1^{(d)})^{\top}$ and $\bm{x}_2 = (x_2^{(1)} , \dots, x_2^{(d)})^{\top}$, $\max(\cdot, \cdot)$ operator is defined as $\max(\bm{x}_1, \bm{x}_2 ) = ( \max(x_1^{(1)}, x_2^{(1)}), \dots, \max(x_1^{(d)}, x_2^{(d)})  )^{\top}$.
For a real-valued function  $f : \mathcal{X} \to \mathbb{R}$  and $1 \leq p < \infty$, we denote $||f||_{p,n} := (\sum_{i=1}^{n} f(\bm{x}_i)^p/n)^{1/p}$ and $||f||_{p,\mathrm{P}_{\bm{X}}} := \left(\int_{\bm{X} \in \mathcal{X}} f(\bm{X})^p d\mathrm{P}_{\bm{X}} \right)^{1/p}$, where
$\mathrm{P}_{\bm{X}}$ is a probability measure defined on the input space $\mathcal{X}$. We assume $\mathcal{X} \subseteq [-a,a]^d$ for some $a \geq 1$. 
Also, we define $||f||_{\infty} := \operatorname{sup}_{\bm{x} \in \mathcal{X}}|f(\bm{x})|$.
For $\bm{\alpha} = (\alpha_1 , \dots , \alpha_r) \in \mathbb{N}^r$, we denote $\partial^{\bm{\alpha}} := \partial^{\alpha_{1}} \ldots \partial^{\alpha_{r}}.$
We denote $\circ$ as the composition of functions.
For two positive sequences $\{a_n\}$ and $\{b_n\}$, we denote $a_n \lesssim b_n$ if there exists a positive sequence $C>0$ such that $a_n \leq Cb_n$ for all $n \in \mathbb{N}$. 
We denote $a_n \asymp b_n$ if $a_n \lesssim b_n$ and $a_n \gtrsim b_n$ hold.
We use the little $o$ notation, that is, we write $a_n = o(b_n)$ if $\lim_{n \to \infty} a_n / b_n = 0$.

We consider the $\beta$-Hölder class $\mathcal{H}_d^\beta (K)$ for the class where the true function belongs, 
which is defined as
\begin{equation*}
\mathcal{H}_d^\beta (K) := \{ f : [-a,a]^d \to \mathbb{R} ; ||f||_{\mathcal{H}^\beta} \leq K \}, 
\end{equation*}
where $||f||_{\mathcal{H}^\beta}$ denotes the Hölder norm defined by
\begin{align*}
||f||_{\mathcal{H}^\beta} :=  \sum_{\bm{\alpha} : |\bm{\alpha}|_1 < \beta} \left\|\partial^{\bm{\alpha}} f\right\|_{\infty} 
+\sum_{\bm{\alpha}:|\bm{\alpha}|_1 =\lfloor\beta\rfloor} \sup _{\underset{\bm{x}_1 \ne \bm{x}_2}{\bm{x}_1, \bm{x}_2 \in [-a,a]^d}} 
\frac{\left|\partial^{\bm{\alpha}} f(\bm{x}_1)-\partial^{\bm{\alpha}} f(\bm{x}_2)\right|}{|\bm{x}_1-\bm{x}_2|_{\infty}^{\beta-\lfloor\beta\rfloor}}.
\end{align*}

\subsection{Deep Neural Networks}
For a depth $L \in \mathbb{N}$ and width $\bm{r} = (r^{(0)}, r^{(1)}, ... , r^{(L)},  r^{(L+1)})^{\top} \in \mathbb{N}^{L+2}$ where $r^{(0)}=d$ and $r^{(L+1)}=1$, Deep Neural Network (DNN) with the $(L, \bm{r})$ architecture is defined as a DNN model which has $L$ hidden layers and 
$r^{(l)}$ many neurons at the $l$-th hidden layer for $l\in [L].$
The output of the DNN model can be written as
\begin{equation}\label{DNN}
f_{\bm{\theta}, \bm{\rho}}^{\operatorname{DNN}}(\cdot) := A_{L+1} \circ \bm{\rho} \circ A_{L} \dots \circ \bm{\rho} \circ A_{1} (\cdot),
\end{equation}
where $A_l : \mathbb{R}^{r^{(l-1)}} \mapsto \mathbb{R}^{r^{(l)}}$ for $l \in [L+1]$ is an affine map defined as $A_l (\bm{x}) := W_l \bm{x} + \bm{b}_l$ with $W_l \in \mathbb{R}^{r^{(l)} \times r^{(l-1)}}$ and $\bm{b}_l \in \mathbb{R}^{r^{(l)}}$
and $\bm{\rho}$ is an activation function.
%whose output dimension is same with a input dimension.
Here, $\bm{\theta} := (\bm{\theta}_w^{\top} , \bm{\theta}_b^{\top})^{\top}$ is the concatenation of 
the parameters of the DNN model, where
% The DNN model is parameterized by $\bm{\theta} = (\bm{\theta}_w^{\top} , \bm{\theta}_b^{\top})^{\top}$, where
\begin{align*}
\bm{\theta}_w :=& (\operatorname{vec}(W_1)^{\top}, \dots, \operatorname{vec}(W_{L+1})^{\top})^{\top}, \\
\bm{\theta}_b :=& (\bm{b}_1^{\top}, \dots, \bm{b}_{L+1}^{\top})^{\top}
\end{align*}
are the concatenation of the weight matrices and bias vectors.
We denote $T$ as the dimension of $\bm{\theta}$, i.e.,
$$T := T(L, \bm{r}) = \sum_{l=1}^{L+1} (r^{(l-1)}+1)r^{(l)}.$$

The standard choice for the activation function $\bm{\rho}$ is the Rectified linear unit (ReLU) activation function \citep{nair2010rectified}, which is defined as   
$$\bm{\rho}_0(\bm{x}) = \max\{\bm{x},\bm{0}\}.$$
The ReLU activation function is known to alleviate the vanishing gradient problem compared to the sigmoid or tanh activation functions, enabling efficient gradient propagation and enhancing DNN performance \citep{goodfellow2016deep}.

As assumed in other papers for simplicity \citep{kohler2021rate}, we consider DNN architectures whose numbers of neurons in each hidden layer are the same.
For a given activation function $\bm{\rho}$, the number of hidden layers $L \in \mathbb{N}$ and the number of neurons in each hidden layer $r \in \mathbb{N}$, we define the set of DNN functions which are parameterized by $\bm{\theta} \in \mathbb{R}^T$ as below.
\begin{definition}
    For an activation function $\bm{\rho}$, depth $L \in \mathbb{R}$ and width $r \in \mathbb{N}$, we define $\mathcal{F}^{\operatorname{DNN}}_{\bm{\rho}}(L, r)$ as the function class of DNNs with the $(L, (d,r,\dots,r,1)^{\top})$ architecture and the activation function $\bm{\rho}$. 
    That is,
    \begin{align*}
    \mathcal{F}^{\operatorname{DNN}}_{\bm{\rho}}(L,r) := \Bigg\{ f : f = f_{\bm{\theta}, \bm{\rho}}^{\operatorname{DNN}} \text{ is a DNN with the }(L, (d,r,\dots,r,1)^{\top})
    \text{ architecture} \Bigg\}.
    \end{align*}
    Also, for $B \geq 1$, we define $\mathcal{F}^{\operatorname{DNN}}_{\bm{\rho}}(L, r, B)$ as the subset of $\mathcal{F}^{\operatorname{DNN}}_{\bm{\rho}}(L, r)$ 
    consisting of DNNs whose parameter values lie within the absolute bound $B$.
    That is,
    \begin{align*}
    \mathcal{F}^{\operatorname{DNN}}_{\bm{\rho}}(L,r,B) := \Bigg\{ f : f = f_{\bm{\theta}, \bm{\rho}}^{\operatorname{DNN}} \text{ is a DNN with the }(L, (d,r,\dots,r,1)^{\top})&\\
    \text{ architecture, } |\bm{\theta}|_{\infty} \leq B & \Bigg\}.
    \end{align*}
    In addition, for a sparsity $S \in [T]$, we define $\mathcal{F}^{\operatorname{SDNN}}_{\bm{\rho}}(L, r, S, B)$ as the subset of $\mathcal{F}^{\operatorname{DNN}}_{\bm{\rho}}(L, r, B)$ 
    consisting of DNNs whose number of non-zero parameters is bounded by $S$.
    That is,
    \begin{align*}
    \mathcal{F}^{\operatorname{SDNN}}_{\bm{\rho}}(L,r,S,B) := \Bigg\{ f : f = f_{\bm{\theta}, \bm{\rho}}^{\operatorname{DNN}} \text{ is a DNN with the }(L, (d,r,\dots,r,1)^{\top})&\\
    \text{ architecture, } |\bm{\theta}|_{0} \leq S, |\bm{\theta}|_{\infty} \leq B & \Bigg\}.
    \end{align*}
\end{definition}

For an any activation function $\bm{\rho}$, depths $L_1 \leq L_2$, widths $r_1 \leq r_2$,
sparsity $S_1 \leq S_2$ and $B_1 \leq B_2$, we have
$\mathcal{F}^{\operatorname{DNN}}_{\bm{\rho}}(L_1, r_1) \subseteq \mathcal{F}^{\operatorname{DNN}}_{\bm{\rho}}(L_2, r_2)$,
$\mathcal{F}^{\operatorname{DNN}}_{\bm{\rho}}(L_1, r_1, B_1) \subseteq \mathcal{F}^{\operatorname{DNN}}_{\bm{\rho}}(L_2, r_2, B_2)$ and
$\mathcal{F}^{\operatorname{SDNN}}_{\bm{\rho}}(L_1, r_1, S_1, B_1) \subseteq \mathcal{F}^{\operatorname{SDNN}}_{\bm{\rho}}(L_2, r_2, S_2, B_2)$ due to the enlarging property of DNNs.
Also, by definition, we have $\mathcal{F}^{\operatorname{SDNN}}_{\bm{\rho}}(L_1, r_1, S_1, B_1) 
\subseteq \mathcal{F}^{\operatorname{DNN}}_{\bm{\rho}}(L_1, r_1, B_1) 
\subseteq \mathcal{F}^{\operatorname{DNN}}_{\bm{\rho}}(L_1, r_1)$.
In addition, if we denote $T_1$ as the total number of parameters in $\mathcal{F}^{\operatorname{DNN}}_{\bm{\rho}}(L_1, r_1, B_1)$, we have 
$\mathcal{F}^{\operatorname{DNN}}_{\bm{\rho}}(L_1, r_1, B_1) \subseteq
        \mathcal{F}^{\operatorname{SDNN}}_{\bm{\rho}}(L_2, r_2, T_1, B_1)$.

\subsection{Approximation results for DNNs}

Analysis of nonparametric regression using neural networks has been developed over the years. 
\cite{leshno1993multilayer} and \cite{barron1993universal} develop universal approximation properties of shallow neural networks. 
While shallow neural networks can approximate functions well, \cite{montufar2014number} and \cite{eldan2016power} claim that the expressive power of DNNs grows exponentially with the number of layers.
Approximation errors of sparse DNNs with the ReLU activation function have been derived for 
$\beta$-times differentiable functions
\citep{yarotsky2017error} and piecewise smooth functions \citep{petersen2018optimal}. 
% \citet{bauer2019deep} and \citet{schmidt2020nonparametric} demonstrate that least square estimators based on sparsely connected DNNs with ReLU activation functions and propery chosen architectures achieve near minimax optimal convergence rates.
% Related results have been shown in cases when the true regression function is on the Besov space \citep{suzuki2018adaptivity} and the true regression function is a piecewise polynomial \citep{imaizumi2019deep} and    
% DNNs with general activation functions are used \citep{ohn2019smooth}.
\citet{schmidt2020nonparametric} demonstrate that least square estimators based on sparsely connected DNNs with the ReLU activation function and properly chosen architectures achieve near-minimax optimal convergence rates.
Similar results for sparse DNNs can be found in \citet{suzuki2018adaptivity, imaizumi2019deep, bauer2019deep, ohn2019smooth, schmidt2019deep, nakada2020adaptive, kohler2022estimation, chen2022nonparametric}.

These optimal results rely heavily on sparsity constraints on DNNs.
This is because the class of fully-connected DNNs is too large to yield optimal results.
For instance, \citet{schmidt2020nonparametric} uses the approximation theorem that 
there exist
positive constants $C_L^{(s)}$, $C_r^{(s)}$, $C_s^{(s)}$ and $c^{(s)}$
such that for every $f_0 \in \mathcal{H}_d^\beta(K)$ and any sufficiently large $M \in \mathbb{N}$,
there exists 
$$f_{\hat{\bm{\theta}}, \bm{\rho}_0}^{\operatorname{DNN}} \in \mathcal{F}_{\bm{\rho}_0}^{\operatorname{SDNN}}\left(\lceil C_L^{(s)} \log_2 M \rceil, \left\lceil C_r^{(s)} M^{2d} \right\rceil, C_s^{(s)} M^{2d} \log_2 M, 1\right)$$
with
$\|f_{\hat{\bm{\theta}}, \bm{\rho}_{0}}^{\operatorname{DNN}}-f_{0}\|_{\infty, [-a,a]^d} \leq c^{(s)} M^{-2\beta}$.
Indeed, their approximation theorem implies that 
a fully connected DNN in
$\mathcal{F}_{\bm{\rho}_0}^{\operatorname{DNN}}(\lceil C_L^{(s)} \log_2 M \rceil, \lceil C_r^{(s)} M^{2d} \rceil, 1)$ 
approximates $f_0$ well, but the complexity of  $\mathcal{F}_{\bm{\rho}_0}^{\operatorname{DNN}}(\lceil C_L^{(s)} \log_2 M \rceil, \lceil C_r^{(s)} M^{2d} \rceil, 1)$
is too large to use, and so they consider sparsity constraints to reduce complexity.

\citet{kohler2021rate} show that least squares estimators based on fully connected DNNs with the ReLU activation function also achieves near-minimax optimal convergence rates.
They use a new approximation theorem that there exist positive constants $C_L^{(k)}$, $C_r^{(k)}$ and $c^{(k)}$ such that 
for every $f_0 \in \mathcal{H}_d^\beta(K)$ and any sufficiently large $M \in \mathbb{N}$,
there exists 
$$f_{\hat{\bm{\theta}}, \bm{\rho}_0}^{\operatorname{DNN}} \in \mathcal{F}_{\bm{\rho}_0}^{\operatorname{DNN}}\left(\lceil C_L^{(k)} \log_2 M \rceil, \left\lceil C_r^{(k)} M^{d} \right\rceil\right)$$
with
$\|f_{\hat{\bm{\theta}}, \bm{\rho}_{0}}^{\operatorname{DNN}}-f_{0}\|_{\infty, [-a,a]^d} \leq c^{(k)} M^{-2\beta}$.
A significant advantage of the DNN approximation theorem proposed by \citet{kohler2021rate} is that it does not require a sparsity condition.
However, their DNN approximation has the drawback of unbounded parameter sizes.
By following their proof, it can be confirmed that parameters are bounded by $M^2$, and this bound should diverge to reduce the approximation error. 
The DNN approximations of \citet{lu2021deep} and \citet{jiao2023deep} also face the issue of unbounded parameters.
This point limits their applications in some areas, such as Bayesian analysis.

\subsection{Posterior concentration results for BNNs}

Posterior concentration, which is an asymptotic behavior of posterior distributions as the sample size increases, is crucial for the frequentist justification of Bayesian methods.
Concentration rate is defined by the rate at which there exists a neighborhoods of the true model shrinking meanwhile still capturing most of the posterior mass \citep{ghosal2000convergence}.
One of the key components for deriving the posterior concentration rate is the prior concentration condition, which is the requirement of a sufficient amount of prior mass assigned to a shrinking neighborhood of the true model \citep{ghosal2017fundamentals}.

For BNNs, the prior concentration condition is usually established by the two steps: firstly choosing a DNN that approximates the true function well and then secondly devising a prior which puts sufficient
masses around this DNN.
Thus, when the approximation results of DNNs require sparsity assumptions
\citep{suzuki2018adaptivity, imaizumi2019deep, bauer2019deep, ohn2019smooth, schmidt2020nonparametric},
sparse-inducing priors \citep{polson2018posterior, cherief2020convergence, bai2020efficient, sun2022consistent, lee2022asymptotic} are inevitably required to make the posterior concentration rate be optimal.
Specifically, \cite{polson2018posterior} proves that posterior distributions of BNNs using spike-and-slab priors concentrate to the true function at near-minimax optimal rates on the Hölder spaces.
This result has been extended to variational posterior distributions \citep{cherief2020convergence, bai2020efficient}, continuous relaxation of spike-and-slab priors \citep{sun2022consistent} and the case where the true functions belong to the Besov spaces \citep{lee2022asymptotic}.

Following the recent results of approximation using non-sparse DNNs \citep{kohler2021rate}, efforts have been made to show that BNNs with non-sparse priors can also achieve optimal posterior concentration rates \citep{pmlr-v202-kong23e, ohn2024adaptive}.
However, to achieve the optimal posterior concentration
rates using the approximation results of \citet{kohler2021rate}, 
heavy-tailed prior distributions should be used.
%{\bf the supports
% of prior distributions on the parameter space should diverge at a polynomial rate with the sample size.}
For example, \citet{pmlr-v202-kong23e} employs polynomial tail distributions such as the Cauchy distribution as the prior distribution, while \citet{ohn2024adaptive} employs uniform distributions defined on the diverging range. 

So far, no approximation results for non-sparse short-tailed priors to achieve the optimal posterior concentration rates
are available. Due to this theoretical shortcoming, there is no result available 
about the optimal posterior concentration rates of non-sparse BNNs with standard not-heavy-tailed priors such as
independent Gaussian distributions.

\section{Approximation using fully-connected DNN with bounded parameters} \label{sec3}

In this section, we develop a new approximation result using fully-connected DNNs with bounded parameters. 
For activation functions in DNNs, we consider the Leaky-ReLU activation function \citep{maas2013rectifier}, which is 
%The Leaky-ReLU activation function, which is 
defined as
$$\bm{\rho}_{\nu}(\bm{x}) := \max\{\bm{x},\nu \bm{x}\}$$
for $\nu \in [0,1)$.
Note that the Leaky-ReLU activation function includes the ReLU activation function as a special case with $\nu=0$.
The Leaky-ReLU activation function addresses one of the main limitations of the ReLU function: the dying ReLU problem \citep{douglas2018relu, lu2020dying}. 
In the original ReLU, negative inputs are zeroed out, potentially leading to dead neurons during training. 
The Leaky-ReLU addresses this issue by allowing a small, non-zero gradient for negative values.
This leads to more consistent trainings of DNNs, and often results in improved performance on various tasks \citep{xu2015empirical}.

We aim to approximate $f_0 : \mathbb{R}^d \to \mathbb{R}$ that belongs to the $\beta$-Hölder class for a pre-specified value $\beta \in (0,\infty)$ by non-sparse DNNs.
In the following theorem, we show that any function in the $\beta$-Hölder class can be approximated by a fully-connected DNN with bounded parameters.

%with similar size to the network approximation by \citet{kohler2021rate}.

\begin{theorem}  \label{thm_approx}
    For $\beta \in (0,\infty)$, $K \geq 1$ and $\nu \in [0,1)$, 
	there exist positive constants $C_L, C_r, C_B$ and $c_1$
 such that for every $f_0 \in \mathcal{H}_d^\beta(K)$ and every sufficiently large $M \in \mathbb{N}$,
	there exists $f_{\hat{\bm{\theta}}, \bm{\rho}_{\nu}}^{\operatorname{DNN}} \in \mathcal{F}_{\bm{\rho}_{\nu}}^{\operatorname{DNN}}(\lceil C_L \log_2 M \rceil, \left\lceil C_r M^d \right\rceil, C_B)$
	with
	\begin{align*}
	\left\|f_{\hat{\bm{\theta}}, \bm{\rho}_{\nu}}^{\operatorname{DNN}}-f_{0}\right\|_{\infty, [-a,a]^d} \leq c_1 \frac{1}{M^{2\beta}}.
	\end{align*}
\end{theorem}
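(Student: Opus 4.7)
The plan is to follow the local Taylor polynomial architecture used by Yarotsky (2017) and Kohler--Krzyżak (2021) to approximate a $\beta$-Hölder function on $[-a,a]^d$ by a DNN of depth $O(\log M)$ and width $O(M^d)$, but to realize every arithmetic subroutine inside that architecture by a bounded-weight subnetwork. The overall template is standard: partition $[-a,a]^d$ into $O(M^d)$ cells of side $\sim 1/M$, approximate $f_0$ on each cell by a local polynomial, glue the pieces together by a Leaky-ReLU partition of unity, and exploit Yarotsky's super-convergence observation that an extra factor of $\log M$ in depth buys the rate $M^{-2\beta}$ rather than the naive $M^{-\beta}$. What must be new, compared with Kohler--Krzyżak, is that every weight stays bounded by an absolute constant $C_B$ independent of $M$.

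The first ingredient is a ``Leaky-ReLU to ReLU'' reduction. For any $\nu \in [0,1)$ the identities
\[
x = \frac{\rho_\nu(x) - \rho_\nu(-x)}{1+\nu}, \qquad |x| = \frac{\rho_\nu(x) + \rho_\nu(-x)}{1-\nu}, \qquad \rho_0(x) = \tfrac{1}{2}(x + |x|)
\]
show that a single layer of $\rho_\nu$ with weights bounded by $1/(1-\nu)$ can reproduce both the identity and the ordinary ReLU. This lets me import any ReLU-based subnetwork into the Leaky-ReLU setting at the cost of a constant blow-up of the weight bound. The second, crucial ingredient is Yarotsky's iterated-sawtooth trick: with $T(x) = 2\rho_0(x) - 4\rho_0(x-1/2) + 2\rho_0(x-1)$ (a sawtooth with bounded weights), the function $\phi_k(x) = x - \sum_{j=1}^{k} 4^{-j} T^{\circ j}(x)$ approximates $x^2$ uniformly on $[0,1]$ with error $2^{-2k-2}$, using depth $O(k)$ and bounded weights; the coefficients $4^{-j}$ lie in $[0,1]$, and the running partial sum is carried forward as an auxiliary channel of bounded magnitude. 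Choosing $k \asymp \log M$ yields a squaring block of error $O(M^{-c})$ for any prescribed $c$. Polarization $xy = \tfrac14\bigl((x+y)^2 - (x-y)^2\bigr)$ converts this into a bounded-weight multiplication gate, and compositions of it evaluate any monomial of degree $\leq \lfloor 2\beta \rfloor$ with depth $O(\log M)$ and $O(1)$ width per monomial.

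With these blocks in hand, I would evaluate, in parallel on $O(M^d)$ grid cells, a local polynomial approximant of $f_0$ --- either a Taylor polynomial or a slightly higher-order interpolant as in Yarotsky's super-convergence argument --- and combine the outputs with a partition of unity $\sum_i \phi_i(x) \equiv 1$ built from Leaky-ReLU bumps with bounded weights. The Taylor coefficients of $f_0$ are uniformly bounded by $\|f_0\|_{\mathcal{H}^\beta} \leq K$, so the linear combinations that form each local polynomial need only bounded weights. A final bounded-weight multiplication of each $p_i$ with $\phi_i$, followed by summation, yields a network of depth $O(\log M)$ and width $O(M^d)$ whose sup-norm error to $f_0$ is at most $c_1 M^{-2\beta}$. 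Because $\mathcal{F}^{\operatorname{DNN}}_{\rho_\nu}(L,r,C_B)$ permits arbitrary parameters in $[-C_B,C_B]$, including zeros, the sparse-pattern realization just described can be padded into a fully-connected architecture of the stated size at no cost.

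The main obstacle is bookkeeping to guarantee that no single weight anywhere exceeds $C_B$, especially after nesting multiplication blocks to build degree-$\lfloor 2\beta \rfloor$ monomials and after multiplying by Taylor coefficients and partition-of-unity outputs. The device is to distribute any rescaling across multiple layers (so each layer rescales by an $O(1)$ factor) and to maintain an identity channel, reconstructed from the formulas above at every layer, that carries partial results forward without large weights. A second, more delicate issue is reconciling Yarotsky's super-convergence rate with bounded weights: the original construction uses exact polynomial identities that can inflate intermediate constants, and one must carefully prescale inputs into $[0,1]$ before the squaring block by a bounded-weight affine map depending on $a$ and $K$, and then postscale outputs back. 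Since all rescalings are by $O(1)$ factors, the final $C_B$ is finite and depends only on $\beta$, $d$, $a$, $K$ and $\nu$.
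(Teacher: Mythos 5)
Your Leaky-ReLU reductions (recovering the identity and the ordinary ReLU through $x=\bigl(\rho_\nu(x)-\rho_\nu(-x)\bigr)/(1+\nu)$ and its companions) and your closing device of spreading any large rescaling multiplicatively over the $\Theta(\log M)$ layers while carrying intermediate results in an identity channel are exactly the two ingredients the paper formalizes: its auxiliary-network lemma and its rescaling lemma, which uses the positive homogeneity $\rho_\nu(\zeta x)=\zeta\rho_\nu(x)$ to pull a factor of order $M^{2}$ out of a few layers and split it into per-layer factors of size $2^{O(1/\beta)}$, yielding a weight bound independent of $M$. So the bounded-parameter half of your plan matches the paper and is sound.

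The genuine gap is in how you reach the rate $M^{-2\beta}$. You partition $[-a,a]^d$ into $O(M^d)$ cells of side $\sim 1/M$ and appeal to a ``super-convergence observation'' that an extra $\log M$ of depth upgrades $M^{-\beta}$ to $M^{-2\beta}$. No such mechanism exists at depth $O(\log M)$: on a cell of side $1/M$, the best local polynomial approximation of a general $\beta$-H\"older function is of order $M^{-\beta}$ (degrees beyond $\lfloor\beta\rfloor$ do not help, and your monomials of degree up to $\lfloor 2\beta\rfloor$ presuppose smoothness the target does not have); the logarithmic depth in Yarotsky's construction is spent making the squaring/multiplication gates accurate, not improving the local approximation error, and the true super-convergence results rely on bit-extraction with depth polynomial in the parameter count and effectively unbounded-precision weights, incompatible both with your stated architecture and with the bound $C_B$. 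What actually delivers $M^{-2\beta}$ at width $O(M^d)$, in Kohler--Krzy\.{z}ak and in this paper, is a two-scale construction: a fine grid of $M^{2d}$ cells of side $\sim a/M^2$ (so the degree-$\lfloor\beta\rfloor$ Taylor error is $(M^{-2})^{\beta}$) whose exponentially many local pieces are never instantiated separately; instead the network computes on the fly the ``bottom-left'' corner of the fine cell containing $\bm{x}$ and the derivative values of $f_0$ at that corner, via a coarse partition of $M^d$ cells, $M^d$ shift vectors, and indicator/test subnetworks, then evaluates a single Taylor block, with boundary effects handled by a check network and by summing $2^d$ shifted partitions whose weight functions add to one. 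Without this (or an equivalent) device your construction only gives $M^{-\beta}$; once it is added, the slopes of order $M$ and $M^2$ appearing in the indicator, test and hat subnetworks are precisely where your rescaling-over-layers idea is required.
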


The proof of Theorem \ref{thm_approx} is provided in Appendix \ref{app_sec_1}.
Theorem \ref{thm_approx} employs DNNs with the number of nodes similar to that in \citet{kohler2021rate} to achieve a similar error rate.
However, while the infinite norm of the parameters in  the approximation of \citet{kohler2021rate} is unbounded,
the parameters in our approximating DNNs are bounded by a constant $C_B>0$, which is independent of $M$
and plays a crucial role to study posterior concentration rates with non-sparse DNNs and not-heavy-tailed priors.
In addition, our theorem demonstrates the result for the Leaky-ReLU activation function using arbitrary $\nu \in [0,1)$, extending the scope beyond the ReLU activation function used in \citet{kohler2021rate}.
%Hence, our approximation theorem can be viewed as an enhanced version of the result presented in \citet{kohler2021rate}.

% {\bf 
% Theorem \ref{thm_approx} can be viewed as a modification of the result presented by \citet{kohler2021rate}.
% However, this modification hinges upon a comprehensive analysis of the approximation process using fully-connected DNNs.
% Our main contribution lies in providing a useful theorem for potential readers who have sought to employ the network structure proposed by \citet{kohler2021rate} but have been impeded by the divergence of the infinite norm of parameters.
% One of the outcomes that can be derived from Theorem \ref{thm_approx} is the optimal posterior concentration rate of fully-connected BNNs, 
% which will be presented in the following section.}

\section{Posterior Concentration} \label{sec4}

In this section, we demonstrate the optimal posterior concentration rate of fully-connected BNNs with general priors, including Gaussian priors,
based on the result in Theorem \ref{thm_approx}.
BNNs are defined by a DNN model and priors over its parameters.
We consider a DNN model $f_{\bm{\theta}, \bm{\rho}}^{\operatorname{DNN}}(\cdot)$ with the $(L, \bm{r})$ architecture, where $L \in \mathbb{N}$ and $\bm{r} = (r^{(0)}, r^{(1)}, ... , r^{(L)}, r^{(L+1)})^{\top} \in \mathbb{N}^{L+2}$ denote the depth and width of the DNN respectively.
We assign a prior distribution $\Pi_{\bm{\theta}}$ over the parameters $\bm{\theta} = (\theta^{(1)}, \dots, \theta^{(T)})^{\top} \in \mathbb{R}^{T}$.
Then, the corresponding posterior distribution is given as
\begin{align*}
    \Pi_n (\bm{\theta} \mid \mathcal{D}^{(n)}) =
    \frac{\Pi_{\bm{\theta}} (\bm{\theta}) \mathcal{L}(\bm{\theta} | \mathcal{D}^{(n)})}{\int_{\bm{\theta}} \Pi_{\bm{\theta}} (\bm{\theta}) \mathcal{L}(\bm{\theta} | \mathcal{D}^{(n)})},
\end{align*}
where $\mathcal{D}^{(n)}$ and $\mathcal{L}(\cdot | \mathcal{D}^{(n)})$ are train dataset and corresponding likelihood function, respectively.

For a new test example $\bm{z} \in \mathcal{X}$, the prediction of the BNN is made based on the predictive distribution:
\begin{equation*}
p(y \mid \bm{z}, \mathcal{D}^{(n)})=\int_{\bm{\theta}} p(y \mid \bm{z}, \bm{\theta}) \Pi_n (\bm{\theta} \mid \mathcal{D}^{(n)}) d\bm{\theta}.
\end{equation*}
Since evaluating this integral directly is challenging, the Monte Carlo method is often employed to approximate it:
\begin{equation*}
p(y \mid \bm{z}, \mathcal{D}^{(n)}) \approx \frac{1}{B} \sum_{b=1}^B p(y \mid \bm{z}, \bm{\theta}_b),
\end{equation*}
where $\bm{\theta}_1 \dots, \bm{\theta}_B  \sim \Pi_n (\bm{\theta} \mid \mathcal{D}^{(n)})$ are samples drawn from the posterior.
These samples are usually generated by stochastic gradient Markov chain Monte Carlo \citep{welling2011bayesian, chen2014stochastic, li2016preconditioned, zhang2019cyclical, heek2019bayesian, wilson2020bayesian} or variational inference \citep{graves2011practical, blundell2015weight, louizos2017multiplicative, swiatkowski2020k}.

\subsection{Sufficient condition for priors} \label{sec4_1}

Since the choice of prior directly influences the resulting posterior distribution much, 
it has been the subject of considerable discussions \citep{16_izmailov2021bayesian, fortuin2022priors}.
While the i.i.d. standard Gaussian prior is the most prevalent choice \citep{16_izmailov2021bayesian, 18_fortuin2022bayesian, 19_jospin2022hands}, 
several alternative priors have been proposed, such as Laplace priors \citep{10_williams1995bayesian, 18_fortuin2022bayesian}, radial-directional priors \citep{14_oh2020radial, 15_farquhar2020radial} and hierarchical priors \citep{3_hernandez2015probabilistic, 13_louizos2017bayesian, 8_wu2018deterministic, 12_ghosh2019model, 5_dusenberry2020efficient, 9_ober2021global, 1_seto2021halo}, among others.

To demonstrate general theoretical results that encompass these priors, 
we make only the following very mild assumption about the prior distributions.
\begin{assumption} \label{assumption_theta_prior}    
$\Pi_{\bm{\theta}}$ admits a probability density function $\pi(\cdot)$ on $\mathbb{R}^{T}$ with respect to Lebesgue measure. 
Also, for every $\kappa>0$, there exists $\delta_\kappa>0$ (not depending on $T$) such that 
$\pi(\bm{\theta})$ is lower bounded by $\delta_\kappa^{T}$ on $\bm{\theta} \in [-\kappa,\kappa]^{T}$.
\end{assumption}

Since a lower bound of a density function usually decreases at an exponential rate as $T$ increases,
most prior distributions commonly considered for BNNs satisfy Assumption \ref{assumption_theta_prior}.
Independent priors with specific conditions are simple examples, as follows.
%Simple examples of priors that satisfy Assumption \ref{assumption_theta_prior} are as follows.

\begin{example}[Independent prior] \label{example_indep}
 Assume that $\theta^{(1)}, \dots, \theta^{(T)}$ are independent with the probability density functions $\pi^{(1)}(\cdot), \dots, \pi^{(T)}(\cdot)$ on $\mathbb{R}$ with respect to Lebesgue measure respectively.
 Also, for every $\kappa>0$, there exists $\delta_\kappa>0$ (not depending on $T$) such that 
for every $t \in [T]$, $\pi^{(t)}(\theta)$ is lower bounded by $\delta_\kappa$ on $\theta \in [-\kappa,\kappa]$. 
 % Then, Assumption \ref{assumption_theta_prior} holds.
 % For example, if $\pi^{(t)}(\theta)$ is a Gaussian distribution, then Then, Assumption \ref{assumption_theta_prior} holds.
\end{example}

%Note that independent priors are predominantly used in most BNNs for algorithmic convenience.
Specifically, Example \ref{example_indep}  includes independent Gaussian and Laplace priors, which have not yet considered for the study of optimal posterior concentration rates in other papers.

Although independent priors are predominantly used in most BNNs for algorithmic convenience,
several studies have explored using hierarchical priors for greater flexibility in the prior structure.
Representative examples include zero-mean Gaussian prior with inverse-gamma prior on the prior variance \citep{3_hernandez2015probabilistic, 8_wu2018deterministic}
and group horseshoe prior \citep{13_louizos2017bayesian, 12_ghosh2019model}.
Most of hierarchical priors fulfill Assumption \ref{assumption_theta_prior}, as illustrated in the following example.

\begin{example}[Hierarchical prior] \label{example_hie}
    Assume that the prior distribution of $\bm{\theta}$ is defined by a hierarchical structure:
    \begin{align*}
        \bm{\psi} \sim& \Pi_{\bm{\psi}}, \\
        \bm{\theta}|\bm{\psi} \sim& \Pi_{\bm{\theta}|\bm{\psi}},
    \end{align*}
    where $\bm{\psi} \in \mathbb{R}^S$ 
    is an auxiliary parameter, $\Pi_{\bm{\psi}}$ is a distribution of $\bm{\psi}$ and 
    $\Pi_{\bm{\theta}|\bm{\psi}}$ is a conditional distribution of $\bm{\theta}$ for given $\bm{\psi} \in \mathbb{R}^S$.    
    %and $\Pi_{\bm{\theta}|\bm{\psi}}$ are distributions on $\mathbb{R}^S$ and $\mathbb{R}^T$, respectively.
    Further assume that there exist a subset $\Psi \subseteq \mathbb{R}^S$ and a positive constant $\delta_1$ 
    %d {\bf function?} $\delta_2$ (both are not depending on $S$ and $T$) 
    such that
    (1)
    $\Pi_{\bm{\psi}}\left( \bm{\psi} \in \Psi \right) \geq \delta_1^T $
    and
    (2) for every $\bm{\psi} \in \Psi$,
    $\Pi_{\bm{\theta}|\bm{\psi}}$ satisfies Assumption \ref{assumption_theta_prior} with $\delta_\kappa$ not depending on $\bm{\psi}$. 
    Then, Assumption \ref{assumption_theta_prior} holds.
\end{example}

Another example satisfying Assumption \ref{assumption_theta_prior} is the multivariate Gaussian distribution.
Examples of the use of multivariate Gaussian priors include continual learning \citep{nguyen2018variational} and transfer learning \citep{17_spendl2023easy}, 
where they serve as informative priors for transferring information from one domain to another.

\begin{example}[Multivariate Gaussian prior] \label{example_mul}
Assume that there exist positive constants $B$, $\lambda_{\min}$ and $\lambda_{\max}$ such that $\Pi_{\bm{\theta}}$ is a multivariate Gaussian prior with a mean vector in $[-B,B]^T$ 
and a covariance matrix whose eigenvalues are bounded between $\lambda_{\min}$ and $\lambda_{\max}$. 
Then, Assumption \ref{assumption_theta_prior} holds.
% by
%  $$\delta = \frac{1}{\sqrt{2 \pi \lambda_{\max}} }\exp\left( -\frac{K^2}{2 \lambda_{\min}} \right).$$
\end{example}

The proofs of the three examples are provided in Appendix \ref{app_proof_prior_ex}.
Beyond these examples, most prior distributions commonly considered for BNNs also satisfy Assumption \ref{assumption_theta_prior}.
An example of a prior, however, that does not satisfy Assumption \ref{assumption_sigma_prior} is a uniform distribution defined on $[-1,1]^T$, as its density function has a value $0$ for any $\bm{\theta} \in \mathbb{R}^T \setminus [-1,1]^T$.

\subsection{Result on nonparametric Gaussian regression} 

In nonparametric Gaussian regression problems, we assume that the input vector $\bm{X} \in \mathcal{X} \subseteq [-a,a]^d$ and the response variable $Y \in \mathbb{R}$ are generated from the model
\begin{align}
\begin{split} \label{reg}
\bm{X} \sim & \mathrm{P}_{\bm{X}}, \\
Y|\bm{X} \sim & N(f_0 (\bm{X}), \sigma_0^2), 
\end{split}
\end{align}
where $\mathrm{P}_{\bm{X}}$ is the probability measure defined on $\mathcal{X}.$ Here, $f_0 : \mathcal{X} \to \mathbb{R}$ and $\sigma_0^2 > 0$ are the unknown true regression function and unknown variance of the noise, respectively.
We assume that the true regression function $f_0$ satisfies $||f_0||_{\infty} \leq F$ and $f_0 \in \mathcal{H}_d^\beta (K)$ for some $F \geq 1$, $\beta>0$ and $K \geq 1$.
We assume that $\mathcal{D}^{(n)} := \{ (\boldsymbol{X}_i , Y_i) \}_{i \in [n]}$ are independent copies.

 For Bayesian inference, we consider the probabilistic model
\begin{equation*}
Y_i \stackrel{ind.}\sim N\left( T_F \circ f_{\bm{\theta}, \bm{\rho}_{\nu}}^{\operatorname{DNN}}(\bm{X}_i), \sigma^2 \right) 
\end{equation*}
for a pre-specified $\nu \in [0,1)$,
where $T_F$ is the truncation operator defined as $T_F (x) = \min(\max(x,-F),F)$ and $f_{\bm{\theta}, \bm{\rho}_{\nu}}^{\operatorname{DNN}}$ is the $(L_n, \bm{r}_n)$ architecture DNN, where $L_n$ and $\bm{r}_n$ are given by 
\begin{align}
    \begin{split} \label{network_size}
    L_n :=& \left\lceil C_L \log n \right\rceil,  \\
	r_n :=& \left\lceil C_r n^\frac{d}{2(2\beta + d)} \right\rceil,\\
	\bm{r}_n :=& (d, r_n , \dots, r_n, 1)^{\top} \in \mathbb{N}^{L_n + 2} 
    \end{split}
\end{align}
for constants $C_L$ and $C_r$ defined in Theorem \ref{thm_approx}.
Then, the likelihood is expressed as
\begin{align*}
    \mathcal{L} ( \bm{\theta}, \sigma^2 | \mathcal{D}^{(n)}) = (2 \pi \sigma^2 )^{-\frac{n}{2}}
    \exp \left(-\frac{\sum_{i=1}^n (Y_i -  T_F \circ f_{\bm{\theta}, \bm{\rho}_{\nu}}^{\operatorname{DNN}}(\bm{X}_i))^2}{2 \sigma^2}\right), 
\end{align*}
and the corresponding posterior distribution is given by
\begin{align*}
    \Pi_n (\bm{\theta}, \sigma^2 \mid \mathcal{D}^{(n)}) =
    \frac{
    \Pi_{\bm{\theta}} (\bm{\theta}) \Pi_{\sigma^2} (\sigma^2) \mathcal{L}(\bm{\theta}, \sigma^2 | \mathcal{D}^{(n)})
    }
    {
    \int_{\sigma^2} \int_{\bm{\theta}} \Pi_{\bm{\theta}} (\bm{\theta}) \Pi_{\sigma^2} (\sigma^2) \mathcal{L}(\bm{\theta}, \sigma^2 | \mathcal{D}^{(n)}) d\bm{\theta} d\sigma^2.
    } 
\end{align*}
Here, $\Pi_{\sigma^2}$ is a prior distribution over $\sigma^2 \in \mathbb{R}^{+}$, which is independent of $\Pi_{\bm{\theta}}$ and satisfies the following mild condition.
\begin{assumption} \label{assumption_sigma_prior}
    $\Pi_{\sigma^2}$ admits a density with respect to Lebesgue measure, which is continuous and positive on $(0,2\sigma^2_0).$
    Additionally, $\Pi_{\sigma^2}(\sigma^2>K) \lesssim \frac{1}{K}$ holds for sufficiently large $K$.
\end{assumption}
Assumption \ref{assumption_sigma_prior} holds for most distributions whose support includes $\sigma_0^2$.
The most commonly used prior for the $\sigma^2$ is the inverse-gamma distribution; however, other priors, such as the uniform distribution, can also be employed.

In the following theorem, we demonstrate that
BNNs with general priors (i.e., priors satisfying Assumption \ref{assumption_theta_prior} and \ref{assumption_sigma_prior}) achieve optimal (up to a logarithmic factor) posterior concentration rates to the true regression function.

\begin{theorem} \label{thm_regression}
Assume $f_0 \in \mathcal{H}_d^\beta (K)$ and $\|f_0\|_{\infty} \leq F$ for some $K \geq 1$, $\beta>0$ and $F \geq 1$.
	For $\nu \in [0,1)$, consider the DNN model $f_{\bm{\theta}, \bm{\rho}_{\nu}}^{\operatorname{DNN}}$ with the ($L_n, \bm{r}_n$) architecture, where $L_n$ and $\bm{r}_n$ are given in (\ref{network_size}).
    For any priors $\Pi_{\bm{\theta}}$ and $\Pi_{\sigma^2}$ satisfying Assumption \ref{assumption_theta_prior} and Assumption \ref{assumption_sigma_prior}, respectively,
	the posterior distribution of $T_F \circ f_{\bm{\theta}\bm{\rho}_{\nu}}^{\operatorname{DNN}}$ and $ \sigma^2$ concentrates to $f_0$ and $\sigma_0^2$ at the rate $\varepsilon_{n}=n^{-\beta /(2 \beta+d)} \log ^{\gamma}(n)$ for $\gamma>2$, in the sense that
	\begin{align*}
	\Pi_n \Big( \left(\bm{\theta}, \sigma^2 \right) : \  
	|| T_F \circ f_{\bm{\theta}, \bm{\rho}_{\nu}}^{\operatorname{DNN}} - f_0 ||_{2, \mathrm{P}_{X}} + |\sigma^2 - \sigma_0^2 | > M_n \varepsilon_{n}  \Bigm\vert \mathcal{D}^{(n)}\Big) \overset{\mathbb{P}_{0}^{n}}
 {\to} 0
	\end{align*}
	as $n \to \infty$ for any $M_n \to \infty$, where $\mathbb{P}_{0}^{n}$ is the probability measure of the training data $\mathcal{D}^{(n)}$. 
\end{theorem}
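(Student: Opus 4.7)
The plan is to verify the three classical sufficient conditions — KL-type prior mass, bounded sieve entropy, and small prior mass outside the sieve — and invoke a Ghosal--van der Vaart style posterior concentration theorem adapted to Gaussian regression with unknown variance. Write $T_n$ for the total parameter count of the $(L_n,\bm{r}_n)$ architecture; by (\ref{network_size}), $T_n \asymp L_n r_n^2 \asymp n^{d/(2\beta+d)} \log n$, so $T_n \log n \lesssim n \varepsilon_n^2$ since $\gamma > 2$.

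\textbf{Step 1 (KL prior mass).} Here Theorem \ref{thm_approx} and Assumption \ref{assumption_theta_prior} couple. Choosing $M_n \asymp n^{1/(2(2\beta+d))}$ gives a bounded parameter vector $\hat{\bm{\theta}}$ with $|\hat{\bm{\theta}}|_\infty \leq C_B$ and $\|f^{\operatorname{DNN}}_{\hat{\bm{\theta}},\bm{\rho}_\nu} - f_0\|_\infty \lesssim \varepsilon_n/\log^\gamma n$. The DNN is Lipschitz in its parameters on $[-(C_B+1), C_B+1]^{T_n}$ with constant polynomial in $L_n, r_n$, so there exists $\eta_n$ of negative-polynomial order in $n$ such that every $\bm{\theta} \in \hat{\bm{\theta}} + [-\eta_n, \eta_n]^{T_n}$ produces a network within sup-norm $\varepsilon_n$ of $f_0$; this survives composition with $T_F$ since $|f_0|\leq F$ and $T_F$ is $1$-Lipschitz. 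Assumption \ref{assumption_theta_prior} then yields
\begin{equation*}
\Pi_{\bm{\theta}}\bigl(\hat{\bm{\theta}} + [-\eta_n,\eta_n]^{T_n}\bigr) \;\geq\; (2\delta_{C_B+1} \eta_n)^{T_n} \;\geq\; \exp(-C_1 n\varepsilon_n^2),
\end{equation*}
because $T_n \log(1/\eta_n) \lesssim T_n \log n \lesssim n\varepsilon_n^2$. Combined with the positivity of the $\sigma^2$-density at $\sigma_0^2$ in Assumption \ref{assumption_sigma_prior} and the standard translation of sup-norm closeness into KL and KL-variation closeness for the Gaussian regression model, this gives the required prior mass bound $\Pi(B_{\operatorname{KL}}(\varepsilon_n)) \geq \exp(-C n\varepsilon_n^2)$.

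\textbf{Step 2 (Sieve and entropy) and Step 3 (Outside-sieve mass).} Define the sieve $\Theta_n := \{\bm{\theta}: |\bm{\theta}|_\infty \leq B_n\} \times [\underline{\sigma}_n^2, \overline{\sigma}_n^2]$ with $B_n$ polynomially growing and $[\underline{\sigma}_n^2, \overline{\sigma}_n^2]$ slowly expanding. Uniform Lipschitzness of $T_F \circ f^{\operatorname{DNN}}_{\bm{\theta},\bm{\rho}_\nu}$ in the parameters on this sieve gives
\begin{equation*}
\log N\bigl(\varepsilon_n,\, \{T_F \circ f^{\operatorname{DNN}}_{\bm{\theta},\bm{\rho}_\nu}: |\bm{\theta}|_\infty \leq B_n\},\, \|\cdot\|_\infty\bigr) \;\lesssim\; T_n \log(L_n r_n B_n / \varepsilon_n) \;\lesssim\; n \varepsilon_n^2,
\end{equation*}
which together with the trivial one-dimensional entropy of the $\sigma^2$ factor yields exponentially powerful tests at scale $\varepsilon_n$ (using the Hellinger/$L_2$ equivalence for uniformly bounded Gaussian regression candidates, since $|T_F \circ f_{\bm{\theta}, \bm{\rho}_\nu}^{\operatorname{DNN}}| \leq F$). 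The outside-sieve contribution is controlled via $\mathbb{E}_0 \int e^{\ell_n - \ell_{0,n}} d\Pi = 1$, reducing to $e^{O(n\varepsilon_n^2)} \Pi_{\bm{\theta}}(|\bm{\theta}|_\infty > B_n) + \Pi_{\sigma^2}([\underline{\sigma}_n^2, \overline{\sigma}_n^2]^c)$; the $\sigma^2$ piece vanishes by Assumption \ref{assumption_sigma_prior}, and for $B_n$ chosen large enough the $\bm{\theta}$ piece is $o(1)$ for the short-tailed priors satisfying Assumption \ref{assumption_theta_prior} (Gaussian, Laplace, standard hierarchical, multivariate Gaussian in Examples \ref{example_indep}--\ref{example_mul}).

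\textbf{Main obstacle.} The sharp leverage is in Step 1: because Theorem \ref{thm_approx} supplies an approximator with parameters uniformly bounded by the \emph{constant} $C_B$, Assumption \ref{assumption_theta_prior} applies at a fixed $\kappa = C_B+1$, producing the clean prior mass $\delta_{C_B+1}^{T_n} \geq e^{-Cn\varepsilon_n^2}$. With an unbounded approximator (as in \citet{kohler2021rate}) the requisite $\kappa$ would diverge, and $\delta_\kappa^{T_n}$ for short-tailed priors such as the standard Gaussian would decay faster than $e^{-Cn\varepsilon_n^2}$, breaking the prior-mass condition. The principal technical balance in Step 3 is choosing $B_n$ simultaneously small enough to keep the entropy below $n\varepsilon_n^2$ and large enough that $\Pi_{\bm{\theta}}(|\bm{\theta}|_\infty > B_n)$ is swamped by $e^{-Cn\varepsilon_n^2}$; a polynomial $B_n$ suffices precisely because the considered priors (being built from standard distributions with at least exponentially thin tails) have enough decay, so no heavy-tailed modification of the prior is needed.
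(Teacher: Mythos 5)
There is a genuine gap in Steps 2--3. Your sieve restricts the parameters to $\{\bm{\theta}: |\bm{\theta}|_\infty \leq B_n\}$, and you then need $\Pi_{\bm{\theta}}(|\bm{\theta}|_\infty > B_n)$ to be of smaller order than $e^{-Cn\varepsilon_n^2}$ (so that the remaining-mass condition of the Ghosal--van der Vaart theorem holds after dividing by the prior mass of the KL ball). But Assumption \ref{assumption_theta_prior} is only a lower bound on the density over compact cubes; it imposes \emph{no} tail decay whatsoever. The theorem is claimed for \emph{any} prior satisfying Assumptions \ref{assumption_theta_prior} and \ref{assumption_sigma_prior}, which includes, e.g., independent Cauchy coordinates (density bounded below by a constant $\delta_\kappa$ on $[-\kappa,\kappa]$, independent of $T$), for which $\Pi_{\bm{\theta}}(|\bm{\theta}|_\infty > B_n) \asymp T_n/B_n$ decays only polynomially in $B_n$. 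Since your entropy bound $T_n\log(L_n r_n B_n/\varepsilon_n) \lesssim n\varepsilon_n^2$ only tolerates $\log B_n \lesssim \log^{2\gamma-1} n$, you cannot choose $B_n$ large enough to make the outside-sieve prior mass exponentially negligible for such priors. Your own closing remark ("for the short-tailed priors \ldots Gaussian, Laplace, \ldots") is exactly the extra hypothesis you are smuggling in; it is not part of the theorem. The paper avoids this entirely: its sieve places \emph{no} bound on $\bm{\theta}$ at all (only $\sigma^2 \leq e^{4n\varepsilon_n^2}$, controlled by Assumption \ref{assumption_sigma_prior}), and the entropy of the full truncated class $T_F\circ\mathcal{F}^{\operatorname{DNN}}_{\bm{\rho}_\nu}(L_n,r_n)$ is bounded via the VC-dimension estimate of \citet{bartlett2019nearly} together with Theorem 9.4 of \citet{gyorfi2002distribution} (Lemma \ref{reg_cond1}), which is insensitive to parameter magnitude. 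Your Step 1 coincides with the paper's Lemma \ref{reg_cond2}, but Steps 2--3 as written would fail for the generality of priors the theorem covers.

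A secondary omission: the theorem asserts concentration in $\|\cdot\|_{2,\mathrm{P}_X}$, while the GvdV machinery (applied conditionally on the design, as in the paper) yields concentration in the average Hellinger, hence empirical $\|\cdot\|_{2,n}$, distance. The paper closes this gap with a separate empirical-process step (Lemma \ref{gyorfi2002distribution}, i.e., Theorem 19.3 of \citet{gyorfi2002distribution}) transferring $\|\cdot\|_{2,n}$ to $\|\cdot\|_{2,\mathrm{P}_X}$ uniformly over the DNN class. Your proposal never addresses this transfer; if you intend an i.i.d.\ joint-density formulation to bypass it, that choice and the accompanying Hellinger-to-$L_2(\mathrm{P}_X)$ comparison need to be made explicit.
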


The proof of Theorem \ref{thm_regression} is provided in Appendix \ref{app_B_con_reg}.
The concentration rate $n^{-\beta /(2 \beta+d)}$ is known to be the minimax lower bound when estimating the $\beta$-Hölder smooth functions \cite{tsybakov2009introduction}. Our concentration rate is near-optimal up to a logarithmic factor.

Similar concentration rates have been derived in previous works
\citep{polson2018posterior, cherief2020convergence, bai2020efficient, sun2022consistent, pmlr-v202-kong23e, ohn2024adaptive}.
However, as we mentioned earlier, the prior distributions considered in these studies are not commonly used in practice.
\citet{polson2018posterior, cherief2020convergence, bai2020efficient, sun2022consistent} 
require sparse-inducing priors, which is computationally demanding due to additional exploration time for searching sparsity patterns.
Near-optimal posterior concentration rates for non-sparse BNNs have been obtained by
\citet{pmlr-v202-kong23e} and \citet{ohn2024adaptive}, but extremely heavy-tailed priors are required which do not even include Gaussian distributions.
Theorem \ref{thm_regression} stands as the first result to establish the theoretical optimality of BNNs with Gaussian priors.

\begin{remark}
To prove Theorem \ref{thm_regression}, we 
check the conditions in \citet{ghosal2007convergence},
which is the standard methodology for demonstrating posterior concentrations in nonparametric regression problems.
This technique necessitates showing the prior concentration condition
\begin{align}
\Pi_{\bm{\theta}, \sigma^2}\left( B_{n}^{*}\left((f_0 , \sigma_0^2), \varepsilon_n\right) \right) 
\gtrsim& e^{-n \varepsilon_{n}^2}, \label{KL-ball}
\end{align}
where $B_{n}^{*}\left((f_0 , \sigma_0^2), \varepsilon_n\right)$ denotes the $\varepsilon_n$-Kullback–Leibler neighbourhood around $(f_0 , \sigma_0^2)$.
To establish (\ref{KL-ball}), we must first find a DNN that approximates $f_0$ and then demonstrate that sufficient prior probability exists around it.
While, existing approximation results 
fall short of enabling the demonstration of (\ref{KL-ball}) for the Gaussian prior, in Section \ref{sec3} makes it possible.
% our approximation result 
% The approximation result we presented  enable us to establish the result for the Gaussian prior, which were unable to be demonstrated in other papers.
See details in the proof of Lemma \ref{reg_cond2} of the Appendix.
\end{remark}

\subsection{Result on nonparametric logistic regression}

In nonparametric logistic regression problems, we assume that the input vector $\bm{X} \in \mathcal{X} \subseteq [-a,a]^d$ and the response 
variable $Y \in \{0,1\}$ are generated from the model
\begin{align}
\begin{split} \label{cla}
\bm{X} \sim & \mathrm{P}_{\bm{X}}, \\
Y|\bm{X} \sim & \operatorname{Bernoulli}\left( \phi \circ f_0 (\bm{X})\right),
\end{split}
\end{align}
where $\mathrm{P}_{\bm{X}}$ is the probability measure defined on $\mathcal{X}$ and $\phi(z) := (1+\exp(-z))^{-1}$ is the sigmoid function. 
Here, $f_0 : \mathcal{X} \to \mathbb{R}$ is the logit of the unknown probability function.
We assume that the true function $f_0$ satisfies $||f_0||_{\infty} \leq F$ and $f_0 \in \mathcal{H}_d^\beta (K)$ for some $F \geq 1$, $\beta>0$ and $K \geq 1$.
We assume that $\mathcal{D}^{(n)} := \{ (\boldsymbol{X}_i , Y_i) \}_{i \in [n]}$ are independent copies.

For Bayesian inference, we consider the probabilistic model
\begin{align*}
Y_i \stackrel{ind.}\sim \operatorname{Bernoulli}\left(\phi \circ T_F \circ f_{\bm{\theta}, \bm{\rho}_{\nu}}^{\operatorname{DNN}}(\bm{X}_i)\right) 
\end{align*} 
for a pre-specified $\nu \in [0,1)$, where $f_{\bm{\theta}, \bm{\rho}_{\nu}}^{\operatorname{DNN}}(\bm{X}_i)$ is the $(L_n, \bm{r}_n)$ architecture DNN, where $L_n$ and $\bm{r}_n$ are given by (\ref{network_size}). 
Then, the likelihood is expressed as
\begin{align*}
    \mathcal{L} ( \bm{\theta} | \mathcal{D}^{(n)}) =      \prod_{i=1}^n (\phi \circ T_F \circ f_{\bm{\theta}, \bm{\rho}_{\nu}}^{\operatorname{DNN}}(\bm{X}_i))^{Y_i}
     (1-\phi \circ T_F \circ f_{\bm{\theta}, \bm{\rho}_{\nu}}^{\operatorname{DNN}}(\bm{X}_i))^{1-Y_i},
\end{align*}
and the corresponding posterior distribution is given bt
\begin{align*}
    \Pi_n (\bm{\theta} \mid \mathcal{D}^{(n)}) =
    \frac{
    \Pi_{\bm{\theta}} (\bm{\theta}) \mathcal{L}(\bm{\theta} | \mathcal{D}^{(n)})
    }
    {
    \int_{\bm{\theta}} \Pi_{\bm{\theta}} (\bm{\theta}) \mathcal{L}(\bm{\theta} | \mathcal{D}^{(n)}) d\bm{\theta}
    }
\end{align*}

In the following theorem, we demonstrate that the BNNs with general priors
(i.e., priors satisfying Assumption \ref{assumption_theta_prior})
achieve optimal (up to a logarithmic factor) posterior concentration rates to the true conditional class probability.
\begin{theorem} \label{thm_classification}
	Assume $f_0 \in \mathcal{H}_d^\beta (K)$ and $\|f_0\|_{\infty} \leq F$
    for some $K \geq 1$, $\beta>0$ and $F>0$.
	For $\nu \in [0,1)$, consider the DNN model $f_{\bm{\theta}, \bm{\rho}_{\nu}}^{\operatorname{DNN}}$ with the ($L_n, \bm{r}_n$) architecture, where $L_n$ and $\bm{r}_n$ are given in (\ref{network_size}).
    For any prior $\Pi_{\bm{\theta}}$ over $\bm{\theta}$ satisfying Assumption \ref{assumption_theta_prior},
	the posterior distribution of $\phi \circ T_F \circ f_{\bm{\theta}\bm{\rho}_{\nu}}^{\operatorname{DNN}}$ concentrates to the true conditional class probability at the rate $\varepsilon_{n}=n^{-\beta /(2 \beta+d)} \log ^{\gamma}(n)$ for $\gamma>2$, in the sense that
	\begin{align*}
	\Pi_n \Big( \bm{\theta} : \  
	|| \phi \circ T_F \circ f_{\bm{\theta}, \bm{\rho}_{\nu}}^{\operatorname{DNN}} - \phi \circ f_0 ||_{2, \mathrm{P}_{X}} > M_n \varepsilon_{n}  \Bigm\vert \mathcal{D}^{(n)}\Big) \overset{\mathbb{P}_{0}^{n}}
 {\to} 0
	\end{align*}
	as $n \to \infty$ for any $M_n \to \infty$, where $\mathbb{P}_{0}^{n}$ is the probability measure of the training data $\mathcal{D}^{(n)}$. 
\end{theorem}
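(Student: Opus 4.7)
The plan is to adapt the proof strategy of Theorem \ref{thm_regression} to the Bernoulli likelihood setting, using the same general posterior-concentration framework of \citet{ghosal2007convergence}. Specifically, I would verify the three standard sufficient conditions: (i) a prior-mass (KL) condition around the true model, (ii) existence of a sieve $\mathcal{F}_n \subseteq \mathcal{F}^{\operatorname{DNN}}_{\bm{\rho}_\nu}(L_n, r_n)$ with $\log$-covering number of order $n\varepsilon_n^2$, and (iii) an exponentially small prior mass on $\mathcal{F}_n^c$. The metric of interest is the $L_2(\mathrm{P}_X)$ distance between the conditional class probabilities $\phi \circ T_F \circ f_{\bm{\theta}, \bm{\rho}_\nu}^{\operatorname{DNN}}$ and $\phi \circ f_0$; since the truncation $T_F$ keeps logits inside $[-F,F]$, both $\phi \circ T_F \circ f$ and $\phi \circ f_0$ lie in a compact subinterval of $(0,1)$, which is crucial for all KL/Hellinger computations.

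For the prior-mass condition, I would first invoke Theorem \ref{thm_approx} with $M \asymp n^{1/(2(2\beta+d))}$ to obtain a fully-connected DNN $f_{\bm{\theta}^*, \bm{\rho}_\nu}^{\operatorname{DNN}} \in \mathcal{F}^{\operatorname{DNN}}_{\bm{\rho}_\nu}(L_n, r_n, C_B)$ with $\|f_{\bm{\theta}^*, \bm{\rho}_\nu}^{\operatorname{DNN}} - f_0\|_\infty \lesssim n^{-\beta/(2\beta+d)}$. Because parameters are bounded by a fixed constant $C_B$, the DNN output is Lipschitz in $\bm{\theta}$ on $[-C_B-1, C_B+1]^{T_n}$ with Lipschitz constant that grows only polynomially in $(n, r_n, L_n)$; this is a standard lemma that I would state and apply. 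Assumption \ref{assumption_theta_prior} then yields
\[
\Pi_{\bm{\theta}}\bigl(\|\bm{\theta}-\bm{\theta}^*\|_\infty \le \eta_n\bigr) \ge (2\eta_n \delta_{\kappa})^{T_n}
\]
for an appropriate $\eta_n$ tied to the approximation error, and $\log$ of this lower bound is of order $T_n \log(1/\eta_n) \lesssim n\varepsilon_n^2$ with the chosen $L_n, r_n$. Combining the DNN Lipschitz bound with the fact that $x \mapsto \log \phi(x)$ and $x \mapsto \log(1-\phi(x))$ are Lipschitz on $[-F,F]$, the KL divergence and the second moment of the log-likelihood ratio between the Bernoulli models indexed by $\bm{\theta}^*$ (actually a small neighborhood thereof) and the truth are both bounded by $\varepsilon_n^2$, yielding condition (i).

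For the entropy and sieve-complement conditions, I would take the sieve $\mathcal{F}_n = \mathcal{F}^{\operatorname{DNN}}_{\bm{\rho}_\nu}(L_n, r_n, B_n)$ with $B_n$ growing polynomially in $n$, so that standard parameter-space covering arguments (as in \citet{schmidt2020nonparametric, kohler2021rate}) give $\log N(\varepsilon_n, \mathcal{F}_n, \|\cdot\|_\infty) \lesssim T_n \log(B_n L_n r_n / \varepsilon_n) \lesssim n\varepsilon_n^2$; this transfers to the Hellinger metric on conditional distributions via Lipschitzness of $\phi \circ T_F$. The prior mass on the complement, $\Pi_{\bm{\theta}}(|\bm{\theta}|_\infty > B_n)$, should be shown to be bounded by $e^{-(c+4)n\varepsilon_n^2}$; while Assumption \ref{assumption_theta_prior} only gives a density lower bound, all priors of interest (Examples \ref{example_indep}–\ref{example_mul}) have at least polynomial tails, so under a mild implicit tail condition (or, as likely in the appendix, by tailoring the sieve construction to the specific priors as in the proof of Theorem \ref{thm_regression}) this can be arranged. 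The classical tests of \citet{ghosal2007convergence} then complete the argument.

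The main obstacle, as in Theorem \ref{thm_regression}, is the prior-mass step: bounding the KL divergence and its variance over a small $\bm{\theta}$-neighborhood of $\bm{\theta}^*$, and showing this neighborhood carries exponentially enough prior mass. The Gaussian-regression proof exploits the simple form of the Gaussian log-ratio; here one instead needs careful second-order expansions of the Bernoulli log-ratio, using that $\phi$ and $\log \phi$ are smooth and Lipschitz on $[-F,F]$, so that pointwise sup-norm closeness of logits translates into the requisite KL-type bounds. Once this is in place, the remaining verifications are largely parallel to (and in fact simpler than) those for the regression case, since there is no $\sigma^2$ component.
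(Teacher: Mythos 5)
Your prior-mass step (condition (i)) matches the paper's: invoke Theorem \ref{thm_approx} with $M \asymp n^{1/(2(2\beta+d))}$ to get $\bm{\theta}^*$ with $|\bm{\theta}^*|_\infty \le C_B$, use the Lipschitz-in-$\bm{\theta}$ bound (the paper's Lemma \ref{lemma_similar}) to turn an $\ell_\infty$-ball around $\bm{\theta}^*$ into sup-norm closeness of logits, use Assumption \ref{assumption_theta_prior} for the prior lower bound, and convert sup-norm closeness of bounded logits into KL/variance bounds for the Bernoulli model (the paper cites Lemma 3.2 of van der Vaart and van Zanten for exactly this). That part is sound.

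The genuine gap is in your conditions (ii)--(iii). You take the sieve $\mathcal{F}_n = \mathcal{F}^{\operatorname{DNN}}_{\bm{\rho}_\nu}(L_n, r_n, B_n)$ with polynomially growing $B_n$ and then need $\Pi_{\bm{\theta}}(|\bm{\theta}|_\infty > B_n) \le e^{-(c+4)n\varepsilon_n^2}$. Assumption \ref{assumption_theta_prior} gives only a \emph{lower} bound on the density over compact boxes and imposes no tail decay whatsoever, so this bound is simply not available; your own hedge (``under a mild implicit tail condition'') amounts to adding a hypothesis the theorem does not make, and would weaken the result precisely where its novelty lies (general priors characterized only by the density lower bound). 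Your guess that the regression proof ``tailors the sieve to the specific priors'' is also off: in the paper's Theorem \ref{thm_regression} the sieve restricts only $\sigma^2$, never $\bm{\theta}$. The paper's way around this is to take the sieve to be the \emph{entire} model class $T_F \circ \mathcal{F}^{\operatorname{DNN}}_{\bm{\rho}_\nu}(L_n, r_n)$ with unbounded parameters, so that $\Pi(\mathcal{F}_n^c) = 0$ trivially, and to control the metric entropy not by a parameter-space covering (which needs $B_n < \infty$) but by VC/pseudo-dimension bounds for ReLU-type networks (Theorem 7 of Bartlett et al.\ 2019 together with Theorem 9.4 of Gy\"orfi et al.), which bound the covering number of the truncated function class by roughly $L_n^2 r_n^2 \log(L_n r_n^2)\log n \lesssim n\varepsilon_n^2$ irrespective of the size of the weights; the truncation $T_F$ supplies the needed range bound. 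Without replacing your parameter-space covering by such a function-class (VC) entropy bound, your argument cannot close under the stated assumptions. A secondary, smaller point: the paper's use of Theorem 4 of \citet{ghosal2007convergence} conditions on the design and yields concentration in the empirical Hellinger metric $h_n$, so an extra empirical-process step (Lemma \ref{gyorfi2002distribution}, again via the same VC bounds) is required to pass from $\|\cdot\|_{2,n}$ to $\|\cdot\|_{2,\mathrm{P}_X}$; your proposal does not address this transfer explicitly.
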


\subsection{Avoiding the curse of dimensionality by assuming
hierarchical composition structure 
%Faster concentration on hierarchical composition structure functions
} \label{sec_hi}

Due to the inherent hierarchical structure of DNNs, they hold particular advantages when modeling data that also exhibits a hierarchical structure.
For example, image data encompasses multiple levels of abstraction, ranging from low-level features such as edges and textures to high-level concepts like objects and scenes, and is hence considered to exhibit a hierarchical structure.
Building upon this intuition, \citet{schmidt2020nonparametric} and \citet{kohler2021rate} prove that by assuming a hierarchical structure for the true function, DNN models can avoid the curse of dimensionality and achieve faster convergence rates.
However, faster concentration rate of hierarchical composition structure functions in BNNs has not yet been investigated.
In this subsection, we demonstrate that faster BNN concentration results can be achieved by assuming a similar hierarchical structure.

For a minimum smoothness $\beta_{min}>0$, a maximum smoothness $\beta_{max} > 0$ and a maximum dimension $d_{max} \in \mathbb{N}$, 
let 
$$\mathcal{P} \subset [\beta_{min}, \beta_{max}] \times \{ 1, \dots, d_{max} \}$$
be a constraint set consisting of pairs of smoothness and dimension.
We assume that the true function $f_0$ follows a hierarchical composition structure with $\bm{N} \in \mathbb{N}^q$
for some $q \in \mathbb{N}$ and the constraint set $\mathcal{P}$, 
which is defined as follows.

\begin{definition}[hierarchical composition structure] \label{assum_composite}
%The true function $f_0$ is in the set $\mathcal{H}(q,\mathcal{P})$ if 
We say that a function $f$ follows the hierarchical composition structure $\mathcal{H}(\bm{N},\mathcal{P})$ 
if for $(N_0, \dots, N_{q-1})^{\top} := \bm{N}$ and $N_q := 1$,
\begin{itemize}

    \item[a)] For $i \in [N_{0}]$, there exists $d' \in [d]$ such that
    $$f_{0,i}(\bm{x})=x^{(d')} \quad \text { for all } \bm{x} \in \mathbb{R}^d.$$

    \item[b)] For $l \in [q]$ and $i \in [N_{l}]$, there exists $(\beta_{l,i}, d_{l,i}) \in \mathcal{P}$ such that $\sum_{i=1}^{N_{l}} d_{l,i} = N_{l-1}$ holds. 
    Also, there exists a $C_{Lip}$-Lipschitz function $g_{l,i} : \mathbb{R}^{d_{l,i}} \to \mathbb{R}$ with $C_{Lip} \geq 1$ such that $g_{l,i} \in \mathcal{H}_{d_{l,i}}^{\beta_{l,i}} (K)$, $||g_{l,i}||_{\infty} \leq F$ and
    $$f_{l,i}(\bm{x})=g_{l,i}\left(f_{l-1, \sum_{i'=1}^{i-1} d_{l,i'} + 1}(\bm{x}), \ldots, f_{l-1, \sum_{i'=1}^{i-1} d_{l,i'} + d_{l,i}}(\bm{x})\right) \quad \text { for all } \bm{x} \in \mathbb{R}^d.$$

    \item[c)] The function $f$ satisfies 
    $$f(\bm{x})= f_{q,1}(\bm{x}) \quad \text { for all } \bm{x} \in \mathbb{R}^d.$$
\end{itemize}
\end{definition}

\begin{figure}[t]
\centering
\includegraphics[width=.8\linewidth]{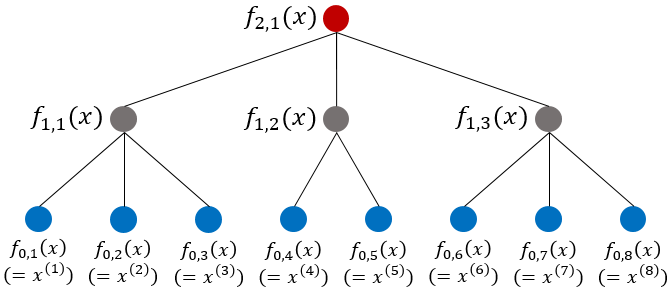}
\caption{\textbf{Example of hierarchical composition structure.} 
} \label{fig_comp_example}
\end{figure}

Figure \ref{fig_comp_example} illustrates a simple example of a hierarchical composition structure $\mathcal{H}(\bm{N}, \mathcal{P})$, where $\bm{N}=(8,3)^{\top}$ and 
$\mathcal{P} = \{(2,4), (3,4), (3,5) \}$.
In this case, we have $d_{1,1}=3, d_{1,2}=2, d_{1,3}=3$, $d_{2,1}=3$, $f_{0,i} = x^{(i)}$ for $i \in [8]$ and
\begin{align*}
    f_{1,1}(\bm{x}) &= g_{1,1} (f_{0,1}(\bm{x}), f_{0,2}(\bm{x}),f_{0,3}(\bm{x}) ), \\
    f_{1,2}(\bm{x}) &= g_{1,2} (f_{0,4}(\bm{x}), f_{0,5}(\bm{x})), \\
    f_{1,3}(\bm{x}) &= g_{1,3} (f_{0,6}(\bm{x}), f_{0,7}(\bm{x}),f_{0,8}(\bm{x}) ), \\
    f_{2,1}(\bm{x}) &= g_{2,1} (f_{1,1}(\bm{x}), f_{1,2}(\bm{x}),f_{1,3}(\bm{x}) ),
\end{align*}
where
$g_{1,1} \in \mathcal{H}_{3}^{4} (K)$, 
$g_{1,2} \in \mathcal{H}_{2}^{4} (K)$,
$g_{1,3} \in \mathcal{H}_{3}^{5} (K)$
and $g_{2,1} \in \mathcal{H}_{3}^{5} (K)$.
In this example, the actual input dimension of $f_{2,1}$ is $8$, but the maximum input dimension of $g_{l,i}$ is $3$.
The primary advantage of assuming such a hierarchical composition structure is that the dimensions of each $g$ are significantly smaller compared to the overall dimensions,
which allows the function to be approximated with a smaller DNN model.

We assume that $\mathcal{D}^{(n)} := \{ (\boldsymbol{X}_i , Y_i) \}_{i \in [n]}$ are independent copies generated from (\ref{reg}) for the nonparametric Gaussian regression problem and from (\ref{cla}) for the nonparametric logistic regression problem,
where the true function $f_0$ follows the hierarchical composition structure $\mathcal{H}(\bm{N},\mathcal{P})$.
For Bayesian inference, we consider the probabilistic model
\begin{equation*}
Y_i \stackrel{ind.}\sim N\left( T_F \circ f_{\bm{\theta}, \bm{\rho}_{\nu}}^{\operatorname{DNN}}(\bm{X}_i), \sigma^2 \right) 
\end{equation*}
for nonparametric Gaussian regression problem and
\begin{align*}
Y_i \stackrel{ind.}\sim \operatorname{Bernoulli}\left(\phi \circ T_F \circ f_{\bm{\theta}, \bm{\rho}_{\nu}}^{\operatorname{DNN}}(\bm{X}_i)\right) 
\end{align*} 
for nonparametric logistic regression problem,
where $\nu \in [0,1)$ is a pre-specified value and $f_{\bm{\theta}, \bm{\rho}_{\nu}}^{\operatorname{DNN}}$ is the $(L_n, \bm{r}_n)$ architecture DNN, where $L_n$ and $\bm{r}_n$ are given by 
\begin{align}
    \begin{split} \label{network_size_comp}
    L_n :=& \lceil \tilde{C}_L \log_2 n \rceil, \\
    r_n :=& \lceil \tilde{C}_r \max_{(\beta', d') \in \mathcal{P}} n^{\frac{d'}{2(2\beta' + d')}} \rceil, \\
	\bm{r}_n :=& (d, r_n , \dots, r_n, 1)^{\top} \in \mathbb{N}^{L_n + 2}, 
    \end{split}
\end{align}
where $\tilde{C}_L$ and $\tilde{C}_r$ are constants (depending on $\beta_{min}$, $\beta_{max}$ and $d_{max}$) defined in Lemma \ref{thm_approx_comp} in Appendix \ref{sec_proof_con_comp}.

% Using Lemma \ref{thm_approx_comp}, we can achieve a faster concentration rate for BNNs.
The following theorem shows that by assuming hierarchical compositional structure, BNNs with general priors can avoid the curse of dimensionality.
\begin{theorem} \label{thm_con_comp}
    Assume that data are generated from (\ref{reg}) for the nonparametric Gaussian regression problem and from (\ref{cla}) for the nonparametric logistic regression problem. 
	Assume that $f_0$ follows the hierarchical composition structure $\mathcal{H}(\bm{N},\mathcal{P})$, defined in Definition \ref{assum_composite}.
	For $\nu \in [0,1)$, consider the DNN model $f_{\bm{\theta}, \bm{\rho}_{\nu}}^{\operatorname{DNN}}$ with the ($L_n, \bm{r}_n$) architecture, where $L_n$ and $\bm{r}_n$ are given in (\ref{network_size_comp}).
    For any priors $\Pi_{\bm{\theta}}$ (and $\Pi_{\sigma^2}$) which satisfy Assumption \ref{assumption_theta_prior} (and Assumption \ref{assumption_sigma_prior}, respectively),
	the posterior distribution concentrates to the true function at the rate $\varepsilon_{n}=\max_{(\beta',d') \in \mathcal{P}} 
    n^{-\frac{\beta'}{(2\beta'+d')}} \log ^{\gamma}(n)$ for $\gamma>2$, in the sense that for any $M_n \to \infty$,
	\begin{align*}
	\Pi_n \Big( \left(\bm{\theta}, \sigma^2 \right) : \  
	|| T_F \circ f_{\bm{\theta}, \bm{\rho}_{\nu}}^{\operatorname{DNN}} - f_0 ||_{2, \mathrm{P}_{X}} + |\sigma^2 - \sigma_0^2 | > M_n \varepsilon_{n}  \Bigm\vert \mathcal{D}^{(n)}\Big) \overset{\mathbb{P}_{0}^{n}}
 {\to} 0
	\end{align*}
    holds for the nonparametric Gaussian regression problem and
    \begin{align*}
	\Pi_n \Big( \bm{\theta} : \  
	|| \phi \circ T_F \circ f_{\bm{\theta}, \bm{\rho}_{\nu}}^{\operatorname{DNN}} - \phi \circ f_0 ||_{2, \mathrm{P}_{X}} > M_n \varepsilon_{n}  \Bigm\vert \mathcal{D}^{(n)}\Big) \overset{\mathbb{P}_{0}^{n}}
 {\to} 0
	\end{align*}
	holds for the nonparametric logistic regression problem, where $\mathbb{P}_{0}^{n}$ is the probability measure of the training data $\mathcal{D}^{(n)}$. 
\end{theorem}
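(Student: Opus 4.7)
The plan is to verify the general posterior concentration conditions of \citet{ghosal2007convergence}, following the same template as in the proofs of Theorems \ref{thm_regression} and \ref{thm_classification}, with two key modifications: (i) the approximation step is replaced by a hierarchical composition version of Theorem \ref{thm_approx}, namely Lemma \ref{thm_approx_comp}, and (ii) the sieve sizes and rate in (\ref{network_size_comp}) balance the error contributions across all pairs $(\beta', d') \in \mathcal{P}$ so that the resulting rate is $\varepsilon_n = \max_{(\beta',d') \in \mathcal{P}} n^{-\beta'/(2\beta'+d')} \log^\gamma(n)$, depending only on the intrinsic dimensions in $\mathcal{P}$ rather than on $d$.

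First I would prove Lemma \ref{thm_approx_comp}: for every $f_0$ following $\mathcal{H}(\bm{N}, \mathcal{P})$ and every sufficiently large $M$, there exists a fully-connected DNN in $\mathcal{F}_{\bm{\rho}_\nu}^{\operatorname{DNN}}(\tilde{C}_L \log_2 M, \tilde{C}_r \max_{(\beta',d')\in\mathcal{P}} M^{d'}, \tilde{C}_B)$ approximating $f_0$ to accuracy on the order of $\max_{(\beta',d')\in\mathcal{P}} M^{-2\beta'}$. The construction applies Theorem \ref{thm_approx} to each atomic $g_{l,i} \in \mathcal{H}_{d_{l,i}}^{\beta_{l,i}}(K)$ with a scale $M_{l,i}$ chosen so that errors are comparable across components, assembles the $N_l$ sub-networks at level $l$ in parallel using Leaky-ReLU identity channels to forward the coordinates demanded by later levels, and stacks levels sequentially. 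Because each $g_{l,i}$ is $C_{Lip}$-Lipschitz and $q$ is fixed, errors propagate through composition with only a constant multiplicative factor. The crucial property inherited from Theorem \ref{thm_approx} is that parameter magnitudes in each sub-network are bounded by a constant independent of $M$, which carries over to a uniform bound $\tilde{C}_B$ for the composed network.

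Second, I would verify the three sufficient conditions of \citet{ghosal2007convergence}. The prior concentration condition $\Pi(B_n^*((f_0, \sigma_0^2), \varepsilon_n)) \gtrsim e^{-n \varepsilon_n^2}$ follows by taking the approximating DNN $\widehat{f}_0$ from Lemma \ref{thm_approx_comp} at scale $M \asymp \max_{(\beta',d') \in \mathcal{P}} n^{1/(2(2\beta'+d'))}$ and invoking Assumption \ref{assumption_theta_prior} on a small $\ell_\infty$-ball in parameter space around the weights of $\widehat{f}_0$; this produces prior mass at least $\delta_{\tilde{C}_B + 1}^{T}$ times the volume of the ball, and the total parameter count $T = O(L_n r_n^2) = O(r_n^2 \log n)$ matches $n \varepsilon_n^2$ up to log factors. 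The entropy condition is handled on sieves $\mathcal{F}_n = \mathcal{F}_{\bm{\rho}_\nu}^{\operatorname{DNN}}(L_n, r_n, B_n)$ with $B_n$ growing polynomially in $n$, using standard DNN covering bounds of order $T \log(B_n/\varepsilon)$; the prior tail condition $\Pi(\mathcal{F}_n^c) \lesssim e^{-c n \varepsilon_n^2}$ follows from the tails of the concrete priors under consideration. The logistic case is reduced to $L^2(\mathrm{P}_{\bm{X}})$ via Lipschitzness of the sigmoid, and the Gaussian regression case via the standard KL/$L^2$ equivalence for truncated mean functions with bounded noise variance; both parts are handled simultaneously by the same approximation input.

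The main obstacle is the composition construction in Lemma \ref{thm_approx_comp}: one must glue bounded-parameter sub-networks together without inflating weights, which naive concatenation fails to guarantee. Leaky-ReLU with $\nu \in (0,1)$ is essential here, since it allows exact identity channels to be implemented with coefficients depending only on $\nu$ (not on $n$ or $M$), so intermediate features flow through without weight blow-up. A secondary delicate point is the prior tail bound, which Assumption \ref{assumption_theta_prior} does not directly encode; this is established separately for each concrete prior of interest (immediate for Gaussian and Laplace via standard concentration, and for the hierarchical and multivariate Gaussian examples via conditioning on the auxiliary parameter lying in a compact region).
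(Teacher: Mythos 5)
Your overall skeleton is the same as the paper's: prove a hierarchical-composition analogue of Theorem \ref{thm_approx} (the paper's Lemma \ref{thm_approx_comp}, built by applying Theorem \ref{thm_approx} to each $g_{l,i}$ at scale $M=n^{1/(2(2\beta_{l,i}+d_{l,i}))}$, stacking levels, and propagating errors through the $C_{Lip}$-Lipschitz components), then verify the prior-mass, entropy, and sieve-complement conditions of \citet{ghosal2007convergence}, and finally pass from the empirical norm to $\|\cdot\|_{2,\mathrm{P}_X}$. Your prior-concentration step (small $\ell_\infty$-ball around the bounded-parameter approximant, Assumption \ref{assumption_theta_prior}, $T_n\asymp L_n r_n^2$ matching $n\varepsilon_n^2$ up to logs) matches the paper's Lemma \ref{reg_cond2} adapted via Lemma \ref{thm_approx_comp}.

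However, there is a genuine gap in your treatment of the sieve, the entropy bound, and the prior tail. You take $\mathcal{F}_n=\mathcal{F}^{\operatorname{DNN}}_{\bm{\rho}_\nu}(L_n,r_n,B_n)$ with $B_n$ polynomial in $n$, use a covering bound of order $T\log(B_n/\varepsilon)$, and then require $\Pi_{\bm{\theta}}(\mathcal{F}_n^c)\lesssim e^{-cn\varepsilon_n^2}$, which you admit must be ``established separately for each concrete prior of interest.'' This does not prove the theorem as stated: Assumption \ref{assumption_theta_prior} imposes \emph{no} tail condition whatsoever, and the claim is for any prior satisfying it. For instance, a product of Cauchy-type densities satisfies Assumption \ref{assumption_theta_prior} but gives $\Pi_{\bm{\theta}}(|\bm{\theta}|_\infty>B_n)\asymp T_n/B_n$, which is only polynomially small for polynomial $B_n$; the required ratio $\Pi(\mathcal{F}_n^c)/\Pi(B_n^*)=o(e^{-2n\varepsilon_n^2})$ then fails, and taking $B_n$ exponentially large destroys the entropy bound ($T\log B_n\gg n\varepsilon_n^2$). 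The paper avoids this entirely: its sieve places \emph{no} bound on the weights (only the output truncation $T_F$ and, in regression, $\sigma^2\le e^{4n\varepsilon_n^2}$), and the metric entropy of $T_F\circ\mathcal{F}^{\operatorname{DNN}}_{\bm{\rho}_\nu}(L_n,r_n)$ is controlled by VC/pseudo-dimension bounds (Theorem 9.4 of \citet{gyorfi2002distribution} together with Theorem 7 of \citet{bartlett2019nearly}), which are of order $L_n^2 r_n^2\log(L_n r_n^2)$ independently of weight magnitudes. Consequently $\Pi_{\bm{\theta},\sigma^2}(\mathcal{F}_n^c)$ involves only the $\sigma^2$-tail (Assumption \ref{assumption_sigma_prior}), and is exactly zero in the logistic case. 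This weight-free entropy control is the device that makes the result hold for general priors; your route silently strengthens the hypotheses. Relatedly, the step from $\|\cdot\|_{2,n}$ to $\|\cdot\|_{2,\mathrm{P}_X}$ is not a ``standard KL/$L^2$ equivalence'' but the ratio-type uniform law (Theorem 19.3 of \citet{gyorfi2002distribution}, Lemma \ref{gyorfi2002distribution}), which again needs covering numbers for the unbounded-weight class. Finally, your remark that $\nu\in(0,1)$ is essential for identity channels is incorrect: the identity is implementable with ReLU as well ($x=\bm{\rho}_0(x)-\bm{\rho}_0(-x)$), and the theorem covers $\nu=0$.
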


The proof of Theorem \ref{thm_con_comp} is provided in Appendix \ref{sec_proof_con_comp}.
Note that the concentration rate $\max_{(\beta',d') \in \mathcal{P}} 
n^{-\frac{\beta'}{(2\beta'+d')}}$ is much faster than $n^{-\frac{\beta}{(2\beta+d)}}$ in the case $d_{max} << d$ and hence avoids the curse of dimensionality.
This rate is known to be the minimax lower bound when estimating hierarchical composition structure functions \citep{schmidt2020nonparametric}. 
Concentration rate in Theorem \ref{thm_con_comp} is near-optimal up to a logarithmic factor.

\section{Bayesian Neural Networks adaptive to Smoothness}
\label{sec_adapt}
Similar to minimax optimal results of least square estimators with DNNs \citep{schmidt2020nonparametric, kohler2021rate}, 
theories in the previous section also face the limitation of requiring knowledge of the smoothness of the true function (or constraint set $\mathcal{P}$ for hierarchical composition structure in Section \ref{sec4}) to choose a network of appropriate size.
Since the true smoothness is rarely known,
the optimal width is usually determined through the use of a validation dataset in practice.

In Bayesian analysis, this issue is often addressed by assigning a prior to the parameter
related to smoothness or model complexity.
Instead of choosing the width $r$ depending on $\beta$ as well as $n$ in our previous results in Section \ref{sec4},
we can assign a prior to $r.$ 
Specifically, we give prior 
\begin{align}
    \Pi_r (r) &\propto e^{- (\log n)^5 r^2}, \label{width_prior}
\end{align}
which is similar to the prior considered in \citet{pmlr-v202-kong23e} and \citet{ohn2024adaptive}.
For given $r \in \mathbb{N}$, $L_n$ and $\bm{r}_n$ are given by 
\begin{align}
    \begin{split} \label{network_size_adapt}
    L_n &:= \left\lceil \tilde{C}_L \log n \right\rceil,  \\
	\bm{r}_n &:= (d, r , \dots, r, 1)^{\top} \in \mathbb{N}^{L_n + 2},
    \end{split}
\end{align}	
for the constant $\tilde{C}_L$ used in (\ref{network_size_comp}).
For a given DNN structure, we assign a prior $\Pi_{\bm{\theta}}$ over $\bm{\theta} \in \mathbb{R}^{T_n}$ which satisfies Assumption \ref{assumption_theta_prior}, where $T_n$ is defined by
$$T_n := (d+1)r + (L_n -1)(r+1)r + (r+1).$$
In addition, for nonparametric Gaussian regression, we assign a prior $\Pi_{\sigma^2}$ over $\sigma^2$ which satisfies Assumption \ref{assumption_sigma_prior}.

For Bayesian inference, we consider the probabilistic model
\begin{equation*}
Y_i \stackrel{ind.}\sim N\left( T_F \circ f_{\bm{\theta}, \bm{\rho}_{\nu}}^{\operatorname{DNN}}(\bm{X}_i), \sigma^2 \right) 
\end{equation*}
for nonparametric Gaussian regression problem and
\begin{align*}
Y_i \stackrel{ind.}\sim \operatorname{Bernoulli}\left(\phi \circ T_F \circ f_{\bm{\theta}, \bm{\rho}_{\nu}}^{\operatorname{DNN}}(\bm{X}_i)\right) 
\end{align*} 
for nonparametric logistic regression problem,
where $\nu \in [0,1)$ is a pre-specified value and $f_{\bm{\theta}, \bm{\rho}_{\nu}}^{\operatorname{DNN}}$ is the $(L_n, \bm{r}_n)$ architecture DNN.

The following theorem shows that by giving suitable prior on the width, 
BNNs with general priors achieve optimal (up to a logarithmic factor) posterior concentration rates to the true function, adaptively to the true smoothness.
\begin{theorem} \label{thm_adaptive}
    Assume that data are generated from (\ref{reg}) for the nonparametric Gaussian regression problem and from (\ref{cla}) for the nonparametric logistic regression problem.
	% Assume $f_0 \in \mathcal{H}_d^\beta (K)$ for some $K \geq 1$ and $\beta>0$, and there exist $F \geq 1$ such that $\|f_0\|_{\infty} \leq F$.
    Assume that $f_0$ follows the hierarchical composition structure $\mathcal{H}(\bm{N},\mathcal{P})$, defined in Definition \ref{assum_composite}.
    For $\nu \in [0,1)$,  consider the prior (\ref{width_prior}) over the width $r$, and consider the DNN model $f_{\bm{\theta}, \bm{\rho}_{\nu}}^{\operatorname{DNN}}$ with the ($L_n, \bm{r}_n$) architecture, where $L_n$ and $\bm{r}_n$ are given in (\ref{network_size_adapt}).
    For any priors $\Pi_{\bm{\theta}}$ (and $\Pi_{\sigma^2}$) which satisfy Assumption \ref{assumption_theta_prior} (and Assumption \ref{assumption_sigma_prior}, respectively),
	the posterior distribution concentrates to the true function at the rate $\varepsilon_{n}=\max_{(\beta',d') \in \mathcal{P}} 
    n^{-\frac{\beta'}{(2\beta'+d')}} \log ^{\gamma}(n)$ for $\gamma>\frac{5}{2}$, in the sense that for any $M_n \to \infty$,
	\begin{align*}
	\Pi_n \Big( \left(\bm{\theta}, \sigma^2 \right) : \  
	|| T_F \circ f_{\bm{\theta}, \bm{\rho}_{\nu}}^{\operatorname{DNN}} - f_0 ||_{2, \mathrm{P}_{X}} + |\sigma^2 - \sigma_0^2 | > M_n \varepsilon_{n}  \Bigm\vert \mathcal{D}^{(n)}\Big) \overset{\mathbb{P}_{0}^{n}}
 {\to} 0
	\end{align*}
    holds for the nonparametric Gaussian regression problem and
    \begin{align*}
	\Pi_n \Big( \bm{\theta} : \  
	|| \phi \circ T_F \circ f_{\bm{\theta}, \bm{\rho}_{\nu}}^{\operatorname{DNN}} - \phi \circ f_0 ||_{2, \mathrm{P}_{X}} > M_n \varepsilon_{n}  \Bigm\vert \mathcal{D}^{(n)}\Big) \overset{\mathbb{P}_{0}^{n}}
 {\to} 0
	\end{align*}
	holds for the nonparametric logistic regression problem, where $\mathbb{P}_{0}^{n}$ is the probability measure of the training data $\mathcal{D}^{(n)}$. 
\end{theorem}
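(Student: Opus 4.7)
The plan is to reduce Theorem \ref{thm_adaptive} to the sieve-based argument already used for Theorem \ref{thm_con_comp}, now averaging over the random width $r$. I will verify the three standard conditions of \citet{ghosal2007convergence}: a prior concentration bound on a Kullback--Leibler neighbourhood of the truth, a sieve entropy bound, and an exponential tail bound on the prior mass outside the sieve.

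For the prior concentration step, let $r_n^{*} := \lceil \tilde{C}_r \max_{(\beta',d') \in \mathcal{P}} n^{d'/(2(2\beta' + d'))} \rceil$ be the oracle width used in Theorem \ref{thm_con_comp}. Conditional on $r = r_n^{*}$, the existing proof supplies a DNN that approximates $f_0$ at rate $\max_{(\beta',d')} n^{-\beta'/(2\beta'+d')}$ (up to logarithms) and shows, via Assumption \ref{assumption_theta_prior}, that the conditional prior on $\bm{\theta}$ (together with the $\sigma^2$ prior, in the regression case) places mass at least $\exp(-c_1 n \varepsilon_n^2)$ on a KL neighborhood of the true model. Since the normaliser of the width prior (\ref{width_prior}) is bounded by an absolute constant, $\Pi_r(r_n^{*}) \gtrsim \exp(-(\log n)^5 (r_n^{*})^2)$. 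Multiplying yields the required joint prior mass as soon as $(\log n)^5 (r_n^{*})^2 \lesssim n \varepsilon_n^2$. Using $(r_n^{*})^2 \asymp \max_{(\beta',d')} n^{d'/(2\beta'+d')}$ and $n \varepsilon_n^2 \asymp (\log n)^{2\gamma} \max_{(\beta',d')} n^{d'/(2\beta'+d')}$, this last inequality reduces to $(\log n)^5 \lesssim (\log n)^{2\gamma}$, which is precisely $\gamma \geq 5/2$.

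For the sieve, I would set $\mathcal{F}_n := \{ f_{\bm{\theta}, \bm{\rho}_\nu}^{\operatorname{DNN}} : r \leq r_n^{\dagger}, |\bm{\theta}|_\infty \leq B_n \}$ with $r_n^{\dagger} := \lceil C^{\dagger} \max_{(\beta',d') \in \mathcal{P}} n^{d'/(2(2\beta' + d'))} \rceil$ for a sufficiently large constant $C^{\dagger}$, and $B_n$ growing only polynomially in $n$. The sup-norm covering-number estimate used for Theorem \ref{thm_con_comp} applies uniformly across the admissible widths and gives $\log N(\varepsilon_n, \mathcal{F}_n, \|\cdot\|_\infty) \lesssim n \varepsilon_n^2$. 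The width-prior tail satisfies $\Pi_r(r > r_n^{\dagger}) \lesssim \exp(-(\log n)^5 (r_n^{\dagger})^2) = o(\exp(-(2+c) n \varepsilon_n^2))$ once $C^{\dagger}$ is large enough, while the parameter-norm piece $|\bm{\theta}|_\infty > B_n$ is handled exactly as in Theorem \ref{thm_con_comp} via Assumption \ref{assumption_theta_prior} (and Assumption \ref{assumption_sigma_prior} in the regression case).

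The main obstacle, and the only genuinely new ingredient compared to Theorem \ref{thm_con_comp}, is the simultaneous calibration of the width prior: the exponent $(\log n)^5$ must be heavy enough to neutralise the vastly larger complexity of arbitrarily wide networks through the remaining-mass condition, yet light enough that at the oracle width $r_n^{*}$ the conditional KL mass survives the extra factor $\exp(-(\log n)^5 (r_n^{*})^2)$ without violating prior concentration. Balancing these two forces is what dictates both the factor $(\log n)^5$ in the prior and the sharpened logarithmic exponent $\gamma > 5/2$ appearing in the rate.
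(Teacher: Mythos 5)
Your overall strategy is the same as the paper's: keep the depth $L_n$ fixed, condition on the oracle width $r_n^{*}\asymp\max_{(\beta',d')}n^{d'/(2(2\beta'+d'))}$ to get prior concentration (approximation lemma for the hierarchical composition class plus Assumption \ref{assumption_theta_prior}, times the width-prior mass $\Pi_r(r_n^{*})\gtrsim e^{-(\log n)^5 (r_n^{*})^2}$, which is exactly where the constraint $2\gamma\geq 5$ enters), take a sieve of networks with width bounded by a multiple of $r_n^{*}$, and use the width-prior tail for the remaining-mass condition before invoking \citet{ghosal2007convergence}. Up to that point your calibration discussion is correct and matches Lemmas \ref{reg_cond1_adapt}--\ref{reg_cond3_adapt}.

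The genuine gap is in your sieve. You define $\mathcal{F}_n:=\{f^{\operatorname{DNN}}_{\bm{\theta},\bm{\rho}_\nu}: r\le r_n^{\dagger},\ |\bm{\theta}|_\infty\le B_n\}$ and assert that the piece $\Pi_{\bm{\theta}}(|\bm{\theta}|_\infty>B_n)$ is "handled exactly as in Theorem \ref{thm_con_comp} via Assumption \ref{assumption_theta_prior}." It is not: the sieve in Theorem \ref{thm_con_comp} (and in the paper's proof of Theorem \ref{thm_adaptive}) contains \emph{no} bound on $|\bm{\theta}|_\infty$ at all, so there is no such step to copy, and Assumption \ref{assumption_theta_prior} only lower-bounds the prior density on compact sets --- it gives no upper bound on tail mass. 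A prior satisfying Assumption \ref{assumption_theta_prior} may place a fixed constant of mass outside $[-B_n,B_n]^{T_n}$ for every polynomially growing $B_n$ (heavy-tailed or far-out mixture components are allowed), so the remaining-mass requirement $\Pi_{\bm{\theta}}(|\bm{\theta}|_\infty>B_n)=o(e^{-(2+c)n\varepsilon_n^2})$ cannot be verified under the stated assumptions, and your argument breaks at exactly this point. The fix, which is what the paper does, is to drop the parameter bound from the sieve entirely: work with the truncated class $T_F\circ\bigcup_{r\le\xi_n}\mathcal{F}^{\operatorname{DNN}}_{\bm{\rho}_\nu}(L_n,r)$ and bound its Hellinger/empirical-$L_1$ metric entropy through the VC-dimension result of \citet{bartlett2019nearly} (as in Lemma \ref{reg_cond1} and its adaptive version), which requires no control on parameter magnitudes; then the remaining-mass condition only involves the width tail $\Pi_r(r>\xi_n)$ and, in the regression case, the $\sigma^2$ tail from Assumption \ref{assumption_sigma_prior}. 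Relatedly, your sup-norm covering claim would need a parameter bound (via Lemma \ref{lemma_similar}) to even make sense, so it inherits the same problem; the VC-based empirical covering bound avoids it. The remaining steps you omit (Hellinger to $\|\cdot\|_{2,n}$, the empirical-to-population comparison via Theorem 19.3 of \citet{gyorfi2002distribution} applied to the union over widths, and discarding the posterior mass outside the sieve via Lemma 1 of \citet{ghosal2007convergence}) carry over from Theorem \ref{thm_con_comp} as you suggest.
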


The proof of Theorem \ref{thm_adaptive} is provided in Appendix \ref{sec_proof_adapt}.
Theorem \ref{thm_adaptive} implies that BNNs with a random width achieve near-optimal concentration rates up to a logarithmic factor as long as the prior of the random width is carefully selected.
Note that the proposed prior for the width
does not utilize $\mathcal{P}$, which means that
the BNNs achieve the optimal posterior concentration rates adaptively to the smoothness of the true model.

Posterior inference in such models poses significant challenges, since the dimension of parameters changes as the network structure changes.
In such cases, a commonly used method is reversible jump MCMC \citep{green1995reversible}, which employs dimension-matching techniques and proposes changes of the dimensionality along with appropriate adjustments in the parameters.
Alternatively, one could consider applying masking variables to a sufficiently large DNN, and by using a well-designed proposal distribution for the Metropolis-Hastings algorithm to update the masking variables, faster posterior mixing can be induced \citep{pmlr-v202-kong23e}.

\section{Discussions}

The main contributions of this paper are (1) to provide the new 
approximation result of DNNs in Theorem 1 and (2) to derive
the posterior concentration rates of BNNs with general priors based on Theorem 1.
We believe that there are other problems where the new approximation result of DNNs 
plays a crucial role. A possible example would be asymptotic properties of a certain penalized
least square (or maximum likelihood) estimator of DNN. There are some studies of sparse penalties with DNNs \citep{ohn2022nonconvex}
but no results are available for non-sparse penalties. 

We have only considered the posterior concentration rates of BNNs. A more interesting property would be uncertainty quantification. For Bayesian nonparametric regression, \citet{szabo2015frequentist} and \citet{rousseau2020asymptotic} have studied asymptotic
properties in view of uncertainty quantification. It would be expected that similar results hold
for BNNs but not yet proved. Our new approximation result could be a good starting point.

% \acks{\textbf{This work was partly supported by Institute of Information \& communications Technology Planning \& Evaluation (IITP) grant funded by the Korea government (MSIT) 
% [NO.2022-0-00184, Development and Study of AI Technologies to Inexpensively Conform to Evolving Policy on Ethics] and 
% National Research Foundation of Korea (NRF) grant funded by the Korea government (MSIT) (No. 2020R1A2C3A01003550).}}

% Manual newpage inserted to improve layout of sample file - not
% needed in general before appendices/bibliography.

\newpage

\appendix
\numberwithin{equation}{section}
\section{Proofs for Section \ref{sec3}}
\label{app_sec_1}
\renewcommand{\theequation}{A.\arabic{equation}}

In this section, we prove Theorem \ref{thm_approx}.
In Section \ref{app_A_not}, we describe additional notations for the proofs.
In Section \ref{app_A_lemma}, we state and prove a re-scaling lemma for Leaky-ReLU DNNs.
In Section \ref{app_A_aux}, we construct auxiliary networks with Leaky-ReLU activation function.
Based on these results, we prove Theorem \ref{thm_approx} in Section \ref{app_A_approx}.

Our proof for Theorem \ref{thm_approx} closely follows the proof of \citet{kohler2021rate} but
we make the following four modifications: (1) alteration of the activation function, (2) adjustments to the network size accordingly, (3) imposition of the upper bound of the absolut values of parameters 
%{\bf constraints on the maximum parameters ?} 
in each layer and (4) alteration of the upper bounds 
to make them similar.
% {\bf constraints ?} using Lemma \ref{lemma_equal_network}.
For simplicity, we refer to the results in the proof of \citet{kohler2021rate}
unless there is any confusion, and  focus on the four modifications.

\subsection{Additional notations}
\label{app_A_not}

For a DNN model $f^{\operatorname{DNN}}$, $L(f^{\operatorname{DNN}})$ denotes the number of hidden layers in $f^{\operatorname{DNN}}$.
For $l \in [L(f^{\operatorname{DNN}})+1]$, $W_l(f^{\operatorname{DNN}})$ and $\bm{b}_l(f^{\operatorname{DNN}})$ denote the weight matrix and bias vector of the $l$-th layer of $f^{\operatorname{DNN}}$, respectively. 
We futher denote
\begin{align*}
\bm{\theta}_w (f^{\operatorname{DNN}}) :=& (\operatorname{vec}(W_1(f^{\operatorname{DNN}}))^{\top}, \dots, \operatorname{vec}(W_{L+1} (f^{\operatorname{DNN}}))^{\top})^{\top}, \\
\bm{\theta}_b (f^{\operatorname{DNN}}) :=& (\bm{b}_1 (f^{\operatorname{DNN}})^{\top}, \dots, \bm{b}_{L+1} (f^{\operatorname{DNN}})^{\top})^{\top}.
\end{align*}
We define the set of Leaky-ReLU DNN functions where the absolute values of the weights and biases are bounded by $B_w$ and $B_b$, respectively, by $\tilde{\mathcal{F}}_{\bm{\rho}_{\nu}}^{\operatorname{DNN}}(L,r,B_w, B_b)$ as follows.

\begin{definition}
    For $L \in \mathbb{N}$, $r \in \mathbb{N}$, $B_w \geq 1$ and $B_b \geq 1$, we define $\tilde{\mathcal{F}}^{\operatorname{DNN}}_{\bm{\rho}}(L,r,B_w, B_b)$ as the function class of DNNs with the $(L, (d,r,\dots,r,1)^{\top})$ architecture and the activation function $\bm{\rho}$
    such that the absolute values of the weights are bounded by $B_w$ and the absolute values of the biases are bounded by $B_b$. That is,
    \begin{align*}
    \tilde{\mathcal{F}}_{\bm{\rho}_{\nu}}^{\operatorname{DNN}}(L,r,B_w, B_b) := \Bigg\{ f : f = f_{\bm{\theta}, \bm{\rho}}^{\operatorname{DNN}} \text{ is a DNN with the }\left(L, (d,r,\dots,r,1)^{\top}\right) \text{architecture},& \\
     |\bm{\theta}_w(f)|_{\infty} \leq B_w, |\bm{\theta}_b(f)|_{\infty} \leq B_b & \Bigg\}.
    \end{align*}
\end{definition}
We denote $\mathbb{N}_0 := \{0\} \cup \mathbb{N}$ and $\mathbb{R}^{+} := \{ x \in \mathbb{R} : x>0 \}$.
A vector function is denoted by a bold letter. e.g. $\bm{f}(\cdot) := (f^{(1)}(\cdot),\dots,f^{(k)}(\cdot))^{\top}$.

%%%%%%%%%%%%%%%%%

\subsection{Re-scaling Lemma for Leaky-ReLU DNNs}
\label{app_A_lemma}

\begin{lemma} \label{lemma_equal_network}
For $\nu \in [0,1)$, $L \in \mathbb{N}$ and $\bm{r} \in \mathbb{N}^{L+2}$,
consider a DNN with the $(L, \bm{r})$ architecture $f_{\bm{\theta}, \bm{\rho}_{\nu}}^{\operatorname{DNN}} : \mathbb{R}^d \to \mathbb{R}$
which is parameterized by $\bm{\theta} = (\bm{\theta}_w^{\top} , \bm{\theta}_b^{\top})^{\top}$, where 
$\bm{\theta}_w = (\operatorname{vec}(W_1)^{\top}, \dots, \operatorname{vec}(W_{L+1})^{\top})^{\top}$ and 
$\bm{\theta}_b = (\bm{b}_1^{\top}, \dots, \bm{b}_{L+1}^{\top})^{\top}$. 
For positive constants $\zeta_1,\dots, \zeta_{L+1}$, we define
\begin{align*}
    \tilde{W}_l :=& \zeta_l \cdot W_l, \\
    \tilde{\bm{b}}_l :=& \left(\prod_{l'=1}^{l} \zeta_{l'} \right) \cdot \bm{b}_l
\end{align*}
for $l \in [L+1]$ and define
\begin{align*}
    \tilde{\bm{\theta}}_w :=& (\operatorname{vec}(\tilde{W}_1)^{\top}, \dots, \operatorname{vec}(\tilde{W}_{L+1})^{\top})^{\top}, \\
    \tilde{\bm{\theta}}_b :=& (\tilde{\bm{b}}_1^{\top}, \dots, \tilde{\bm{b}}_{L+1}^{\top})^{\top},\\
    \tilde{\bm{\theta}} :=& (\tilde{\bm{\theta}}_w^{\top} , \tilde{\bm{\theta}}_b^{\top})^{\top}.
\end{align*}
If $\prod_{l=1}^{L+1} \zeta_l = 1$,
$$f_{\bm{\theta}, \bm{\rho}_{\nu}}^{\operatorname{DNN}}(\bm{x}) = f_{\tilde{\bm{\theta}}, \bm{\rho}_{\nu}}^{\operatorname{DNN}} (\bm{x})$$
holds for every $\bm{x} \in \mathbb{R}^d$
\end{lemma}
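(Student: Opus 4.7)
The plan is a straightforward induction on the layer index, exploiting the positive homogeneity of the Leaky-ReLU activation, namely $\bm{\rho}_\nu(c \bm{x}) = c \bm{\rho}_\nu(\bm{x})$ for any scalar $c > 0$ and any vector $\bm{x}$. This follows because, componentwise, $\max\{c x, \nu c x\} = c \max\{x, \nu x\}$ when $c > 0$.

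Let $c_l := \prod_{l'=1}^{l} \zeta_{l'}$, and for an input $\bm{x} \in \mathbb{R}^d$ let $\bm{z}_l(\bm{x})$ and $\tilde{\bm{z}}_l(\bm{x})$ denote the pre-activations at the $l$-th layer produced by the original parameters $\bm{\theta}$ and the rescaled parameters $\tilde{\bm{\theta}}$, respectively. The main claim I would prove by induction on $l \in [L+1]$ is
\begin{equation*}
\tilde{\bm{z}}_l(\bm{x}) \;=\; c_l \, \bm{z}_l(\bm{x}).
\end{equation*}
The base case $l=1$ is immediate from the definitions: $\tilde{\bm{z}}_1 = \tilde{W}_1 \bm{x} + \tilde{\bm{b}}_1 = \zeta_1 W_1 \bm{x} + \zeta_1 \bm{b}_1 = c_1 \bm{z}_1$. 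For the inductive step, assuming $\tilde{\bm{z}}_l = c_l \bm{z}_l$, I apply positive homogeneity (using $c_l > 0$) to get $\bm{\rho}_\nu(\tilde{\bm{z}}_l) = c_l \, \bm{\rho}_\nu(\bm{z}_l)$, and then
\begin{equation*}
\tilde{\bm{z}}_{l+1} \;=\; \tilde{W}_{l+1} \bm{\rho}_\nu(\tilde{\bm{z}}_l) + \tilde{\bm{b}}_{l+1} \;=\; \zeta_{l+1} c_l \, W_{l+1} \bm{\rho}_\nu(\bm{z}_l) + c_{l+1} \bm{b}_{l+1} \;=\; c_{l+1}\, \bm{z}_{l+1},
\end{equation*}
where I used $\zeta_{l+1} c_l = c_{l+1}$ and the definition $\tilde{\bm{b}}_{l+1} = c_{l+1} \bm{b}_{l+1}$.

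To conclude, recall that the output of the network is obtained by applying the final affine map $A_{L+1}$ without a subsequent activation, so $f_{\bm{\theta},\bm{\rho}_\nu}^{\operatorname{DNN}}(\bm{x}) = \bm{z}_{L+1}(\bm{x})$ and similarly for the tilded version. Since the hypothesis $\prod_{l=1}^{L+1} \zeta_l = 1$ gives $c_{L+1} = 1$, I obtain $f_{\tilde{\bm{\theta}},\bm{\rho}_\nu}^{\operatorname{DNN}}(\bm{x}) = \tilde{\bm{z}}_{L+1}(\bm{x}) = c_{L+1} \bm{z}_{L+1}(\bm{x}) = f_{\bm{\theta},\bm{\rho}_\nu}^{\operatorname{DNN}}(\bm{x})$, as required.

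There is no real obstacle; the only point to mention carefully is that positive homogeneity requires $c_l > 0$, which is guaranteed because each $\zeta_{l'}$ is assumed positive, and that the last layer carries no activation so the accumulated scalar $c_{L+1}$ is not absorbed into a nonlinearity but is simply forced to equal $1$ by hypothesis. The argument would fail for a nonhomogeneous activation (e.g. sigmoid) or if any $\zeta_l$ were allowed to be negative, which is why the lemma is stated for Leaky-ReLU with positive scaling factors.
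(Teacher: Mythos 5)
Your proof is correct and follows essentially the same route as the paper: an induction on the layer index showing that the rescaled pre-activations equal $\bigl(\prod_{l'=1}^{l}\zeta_{l'}\bigr)$ times the original ones, using the positive homogeneity $\bm{\rho}_{\nu}(c\bm{x}) = c\,\bm{\rho}_{\nu}(\bm{x})$ for $c>0$, and concluding from $\prod_{l=1}^{L+1}\zeta_l = 1$ at the activation-free output layer. No gaps.
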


\begin{proof}
    Note that the Leaky-ReLU activation function has the property
    $$\bm{\rho}_{\nu}(\zeta \bm{x}) = \max\{\zeta \bm{x},\zeta \nu \bm{x}\} =
    \zeta \max\{\bm{x},\nu \bm{x}\} = \zeta \bm{\rho}_{\nu}(\bm{x})$$
    for every $\zeta>0$ and vector $\bm{x}$.
    For $l \in [L+1]$, we denote $\tilde{A}_l (\bm{x}) := \tilde{W}_l \bm{x} + \tilde{\bm{b}}_l$ as the affine map defined by $\tilde{W}_l$ and $\tilde{\bm{b}}_l$.   
    
    First, we have 
    $\tilde{A}_1 (\bm{x}) = \tilde{W}_1 \bm{x} + \tilde{\bm{b}}_1 = \zeta_1 A_1(\bm{x}).$
    Assume that for some $l \in [L]$, 
    $$\tilde{A}_{l} \circ \bm{\rho}_{\nu} \dots \circ \bm{\rho}_{\nu} \circ \tilde{A}_{1} (\bm{x}) =  \left(\prod_{l'=1}^{l} \zeta_{l'} \right) \cdot \left(A_{l} \circ \bm{\rho}_{\nu} \dots \circ \bm{\rho}_{\nu} \circ A_{1} (\bm{x})\right)$$
    holds.
    Then, we have
    \begin{align*}
        \tilde{A}_{l+1} \circ \bm{\rho}_{\nu} \circ \tilde{A}_{l} \circ \bm{\rho}_{\nu} \dots \circ \bm{\rho}_{\nu} \circ \tilde{A}_{1} (\bm{x}) 
        = & \tilde{A}_{l+1} \circ \bm{\rho}_{\nu} \circ \left( \left(\prod_{l'=1}^{l} \zeta_{l'} \right) \cdot \left(A_{l} \circ \bm{\rho}_{\nu} \dots \circ \bm{\rho}_{\nu} \circ A_{1} (\bm{x})\right) \right) \\
        = & \tilde{A}_{l+1} \circ \left( \left(\prod_{l'=1}^{l} \zeta_{l'} \right) \cdot \left(\bm{\rho}_{\nu} \circ A_{l} \circ \bm{\rho}_{\nu} \dots \circ \bm{\rho}_{\nu} \circ A_{1} (\bm{x})\right) \right) \\
        = & \tilde{W}_{l+1} \left( \left(\prod_{l'=1}^{l} \zeta_{l'} \right)  \cdot \left(\bm{\rho}_{\nu} \circ A_{l} \circ \bm{\rho}_{\nu} \dots \circ \bm{\rho}_{\nu} \circ A_{1} (\bm{x})\right) \right) + \tilde{\bm{b}}_{l+1} \\
        = & \left(\prod_{l'=1}^{l+1} \zeta_{l'} \right) \cdot \left(W_{l+1} \left( \bm{\rho}_{\nu} \circ A_{l} \circ \bm{\rho}_{\nu} \dots \circ \bm{\rho}_{\nu} \circ A_{1} (\bm{x}) \right) + \bm{b}_{l+1}\right) \\
        = & \left(\prod_{l'=1}^{l+1} \zeta_{l'} \right) \cdot \left(  A_{l+1} \circ \bm{\rho}_{\nu} \circ A_{l} \circ \bm{\rho}_{\nu} \dots \circ \bm{\rho}_{\nu} \circ A_{1} (\bm{x}) \right).
    \end{align*}
    Hence, by mathematical induction, we have the assertion.
\end{proof}

Lemma \ref{lemma_equal_network} implies that scale of some layers can be transferred to other layers in Leaky-ReLU DNNs.
This is due to the fact that the Leaky-ReLU activation function satisfies the property $\bm{\rho}_{\nu}(c\bm{x}) = c \bm{\rho}_{\nu}(\bm{x})$ for any $\bm{x}$ and $c>0$.
This lemma is particularly useful when the absolute values of some parameters in the lower layers are large, while those in other layers are not. 
Figure \ref{fig_equal_example} provides a simple illustration of Lemma \ref{lemma_equal_network} with $\zeta_1 = 2^{-L}$ and $\zeta_2 = \dots = \zeta_{L+1} = 2$.
%Although this example demonstrates a network without a bias term for simplicity, the lemma also holds in the presence of a bias.

\begin{figure}[t]
\centering
\includegraphics[width=0.8\linewidth]{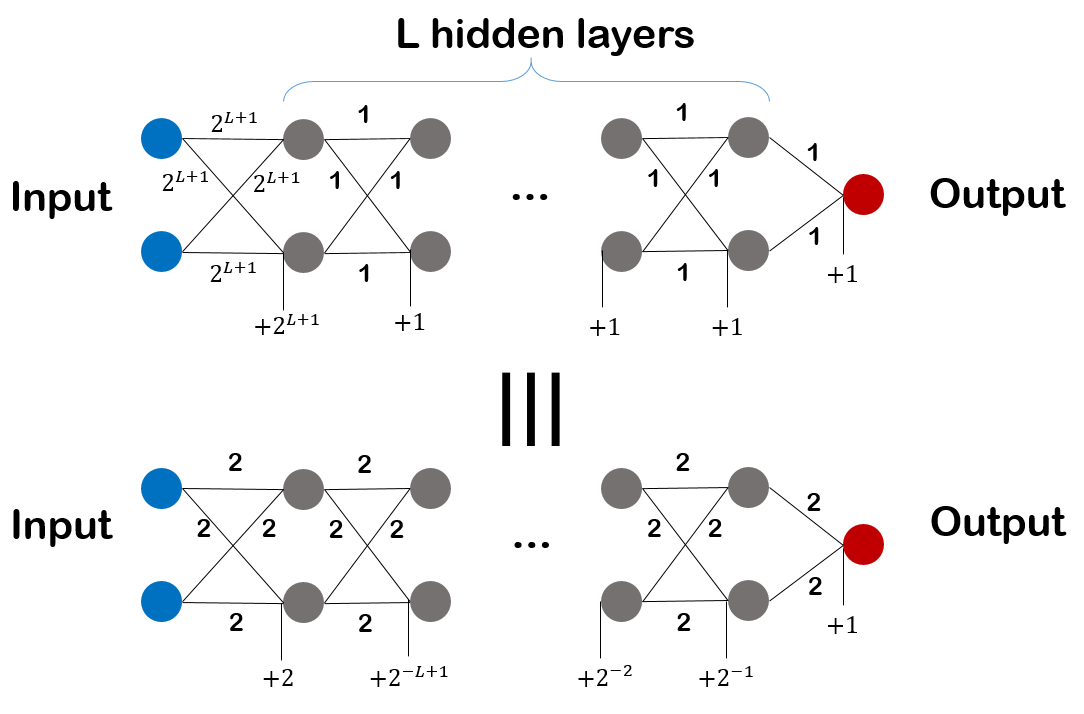}
\caption{\textbf{Example of Lemma \ref{lemma_equal_network}} \label{fig_equal_example}
A DNN with depth $L$ and width $\bm{r} = (2,\dots,2,1)^{\top}$ (above) and its re-scaled DNN (below) using Lemma \ref{lemma_equal_network}. 
We use $\zeta_1 = 2^{-L}$ and $\zeta_2 = \dots = \zeta_{L+1} = 2$ for re-scaling.
The two networks produce a same output for a same input.
} \label{proposal_compare2}
\end{figure}

%%%%%%%%%%%%%%%%
\subsection{Auxiliary networks with the Leaky-ReLU activation function}
\label{app_A_aux}

\begin{lemma} \label{aux_network_basic}
For $\nu \in [0,1)$ and $k \in \mathbb{N}$,
\begin{enumerate}
    \item[a)] There exists a neural network $\bm{f}_{id}(\cdot) : \mathbb{R}^k \to \mathbb{R}^k$ such that for every $\bm{x} \in \mathbb{R}^k$
$$\bm{f}_{id}(\bm{x}) = \bm{x},$$
and for every neural network $\bm{g}(\cdot) : \mathbb{R}^d \to \mathbb{R}^k$ in $\tilde{\mathcal{F}}_{\bm{\rho}_{\nu}}^{\operatorname{DNN}}(L, r, B_w, B_b)$,
$$\bm{f}_{id}(\bm{g}(\cdot)) \in \tilde{\mathcal{F}}_{\bm{\rho}_{\nu}}^{\operatorname{DNN}}\left(L+1, \max(r, 2k), B_w, B_b\right).$$
    \item[b)] There exists a neural network $\bm{f}_{\rho_0}(\cdot) : \mathbb{R}^k \to \mathbb{R}^k$ 
such that for every $\bm{x} \in \mathbb{R}^k$
$$\bm{f}_{\rho_0}({\bm{x}}) = \bm{\rho}_0 (\bm{x}),$$
and for every neural network $\bm{g}(\cdot) : \mathbb{R}^d \to \mathbb{R}^k$ in $\tilde{\mathcal{F}}_{\bm{\rho}_{\nu}}^{\operatorname{DNN}}(L, r, B_w, B_b)$,
$$\bm{f}_{\rho_0}(\bm{g}(\cdot)) \in \tilde{\mathcal{F}}_{\bm{\rho}_{\nu}}^{\operatorname{DNN}}\left(L+1, \max(r, 2k), \max\left(B_w, \frac{1}{1-\nu^2}\right), B_b\right).$$
\end{enumerate}
\end{lemma}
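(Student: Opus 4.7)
The approach is to exploit the positive-homogeneity and piecewise-linear structure of the Leaky-ReLU to give explicit one-hidden-layer realizations of the identity and of the ordinary ReLU, and then fuse those with the final affine map of $\bm{g}$ so that the depth grows only by $1$.

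First, I would establish two pointwise identities by a direct case analysis on the sign of $x \in \mathbb{R}$:
$$x \; = \; \frac{1}{1+\nu}\bigl[\bm{\rho}_{\nu}(x) - \bm{\rho}_{\nu}(-x)\bigr], \qquad \bm{\rho}_{0}(x) \; = \; \frac{1}{1-\nu^{2}}\bm{\rho}_{\nu}(x) + \frac{\nu}{1-\nu^{2}}\bm{\rho}_{\nu}(-x).$$
The first follows because $\bm{\rho}_\nu(x) - \bm{\rho}_\nu(-x) = (1+\nu)x$ on both half-lines; the second is obtained by solving the $2\times 2$ linear system arising from matching $\bm{\rho}_0$ on $x \geq 0$ and on $x<0$. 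Applied componentwise to $\bm{x} \in \mathbb{R}^k$, this yields explicit one-hidden-layer Leaky-ReLU networks of width $2k$: writing $U \in \mathbb{R}^{2k\times k}$ for the matrix that stacks $I_k$ on top of $-I_k$, I would set
$$\bm{f}_{id}(\bm{x}) \; := \; V^{id}\,\bm{\rho}_{\nu}(U\bm{x}), \qquad \bm{f}_{\rho_0}(\bm{x}) \; := \; V^{\rho_0}\,\bm{\rho}_{\nu}(U\bm{x}),$$
where $V^{id}\in\mathbb{R}^{k\times 2k}$ has entries in $\{\pm\frac{1}{1+\nu}\}\cup\{0\}$ and $V^{\rho_0}\in\mathbb{R}^{k\times 2k}$ has entries in $\{\frac{1}{1-\nu^2},\frac{\nu}{1-\nu^2}\}\cup\{0\}$. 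All first-layer weights are $\pm 1$, all biases are $0$, and the read-out weight bounds are $\tfrac{1}{1+\nu}\leq 1$ and $\tfrac{1}{1-\nu^{2}}$, respectively.

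Next comes the composition argument. Writing the last affine map of $\bm{g}$ as $\bm{g}(\cdot) = W^g_{L+1}\bm{h}(\cdot) + \bm{b}^g_{L+1}$ with $\bm{h}$ denoting the pre-output vector, I obtain
$$\bm{f}_{id}(\bm{g}(\bm{x})) \; = \; V^{id}\,\bm{\rho}_{\nu}\!\bigl((U W^g_{L+1})\,\bm{h}(\bm{x}) + U\bm{b}^g_{L+1}\bigr),$$
and analogously for $\bm{f}_{\rho_0}$. Because $U$ has exactly one nonzero entry of magnitude $1$ per row, each entry of $UW^g_{L+1}$ (resp.\ $U\bm{b}^g_{L+1}$) equals $\pm$ an entry of $W^g_{L+1}$ (resp.\ $\bm{b}^g_{L+1}$), so the fused hidden layer still obeys the weight bound $B_w$ and the bias bound $B_b$. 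The fused network then has depth $L+1$, width $\max(r,2k)$, weight bound $B_w$ (part~a) or $\max(B_w, \frac{1}{1-\nu^2})$ (part~b), and bias bound $B_b$, as claimed.

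The main subtlety, though mild, is in the fusion step: naively stacking $\bm{f}_{id}$ or $\bm{f}_{\rho_0}$ after $\bm{g}$ would create depth $L+2$, so one must absorb the first affine map of the auxiliary network into $A^g_{L+1}$. The crucial structural fact that makes this absorption cost-free for the parameter bounds is that $U$ is a $\pm 1$-entry selection matrix; if the first layer of the auxiliary network had general weights, the fused weights would scale by the product and the bounds $B_w,B_b$ would not be preserved. This is where the explicit, minimal choice of $U$ pays off.
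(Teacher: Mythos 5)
Your proposal is correct and follows essentially the same route as the paper: the same componentwise identities $x=\frac{1}{1+\nu}[\bm{\rho}_\nu(x)-\bm{\rho}_\nu(-x)]$ and $\bm{\rho}_0(x)=\frac{1}{1-\nu^2}\bm{\rho}_\nu(x)+\frac{\nu}{1-\nu^2}\bm{\rho}_\nu(-x)$, realized as one-hidden-layer networks of width $2k$ whose first affine map (the $\pm 1$ stacking matrix) is absorbed into the last affine map of $\bm{g}$, preserving the bounds $B_w$, $B_b$ and adding only one hidden layer. Your explicit fusion step merely spells out what the paper leaves implicit in its neuron-counting argument, so there is nothing to add.
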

\begin{proof}
\begin{enumerate}
    \item[a)] For $\bm{x} = (x^{(1)}, \dots, x^{(k)})^{\top}$, 
    $$\bm{f}_{id}(\bm{x}) := \frac{1}{1+\nu} \bm{\rho}_{\nu}(\bm{x}) - \frac{1}{1+\nu} \bm{\rho}_{\nu}(-\bm{x})$$
     satisfies $f_{id}(\bm{x})^{(i)} = \frac{x^{(i)}}{1+\nu} + \frac{\nu x^{(i)}}{1+\nu} = x^{(i)}$ for $x^{(i)} \geq 0$ and $f_{id}(\bm{x})^{(i)} = \frac{\nu x^{(i)}}{1+\nu} + \frac{x^{(i)}}{1+\nu} = x^{(i)}$ for $x^{(i)} < 0$.         
    Also, $\bm{f}_{id}(\bm{g}(\cdot))$ requires $r$ neurons in each of the $1$st through $L$th hidden layers and $2k$ neurons in the $(L+1)$th hidden layer. 
    Hence, we get the assertion.

    \item[b)] For $\bm{x} = (x^{(1)}, \dots, x^{(k)})^{\top}$, 
    $$\bm{f}_{\rho_0}(\bm{x}) := \frac{1}{1-\nu^2} \bm{\rho}_{\nu}(\bm{x}) + \frac{\nu}{1-\nu^2} \bm{\rho}_{\nu}(-\bm{x})$$
    satisfies $f_{\rho_0}(\bm{x})^{(i)} = \frac{x^{(i)}}{1-\nu^2} - \frac{\nu^2 x^{(i)}}{1-\nu^2} = x^{(i)}$ for $x^{(i)} \geq 0$ and $f_{\rho_0}(\bm{x})^{(i)} = \frac{\nu x^{(i)}}{1-\nu^2} - \frac{\nu x^{(i)}}{1-\nu^2} = 0$ for $x^{(i)} < 0$.
    Also, $\bm{f}_{\rho_0}(\bm{g}(\cdot))$ requires $r$ neurons in each of the $1$st through $L$th hidden layers and $2k$ neurons in the $(L+1)$th hidden layer. 
    Hence, we get the assertion.
\end{enumerate}
\end{proof}
For $L \in \N$, we denote $\bm{f}_{id}^{L}(\cdot) := \bm{f}_{id} \circ \bm{f}_{id} \circ \cdots \circ \bm{f}_{id}(\cdot) \in \tilde{\mathcal{F}}_{\bm{\rho}_{\nu}}^{\operatorname{DNN}}(L, 2k, 1, 1)$, where $k$ is the input dimension of $\bm{f}_{id}^{L}$.

\begin{lemma} \label{aux_network_indtest}
Let $\nu \in [0,1)$, $R \in \N$, $\bm{b}_1, \bm{b}_2 \in [-a, a]^d$ with
$b_2^{(i)} - b_1^{(i)} \geq \frac{2}{R}$ for all $i \in [d]$
and
\begin{align*}
K_{1/R} = \left\{\bm{x} \in \mathbb{R}^d : \forall i \in [d],  x^{(i)} \notin [b_1^{(i)}, b_1^{(i)}+1/R) \cup (b_2^{(i)} - 1/R, b_2^{(i)}) \right\}.
\end{align*}
\begin{enumerate}
    \item[a)] There exists a neural network 
$f_{ind, [\bm{b}_1, \bm{b}_2)}(\cdot) : \mathbb{R}^d \to \mathbb{R}$ such that
for $\bm{x} \in K_{1/R}$
\begin{align} 
f_{ind, [\bm{b}_1, \bm{b}_2)}(\bm{x}) = \mathbb{I}_{ [\bm{b}_1, \bm{b}_2)}(\bm{x}) ,\label{ind_property_1}
\end{align}
and for $\bm{x} \in \mathbb{R}^d$
\begin{align}
\left|f_{ind, [\bm{b}_1, \bm{b}_2)}(\bm{x}) - \mathbb{I}_{[\bm{b}_1, \bm{b}_2)}(\bm{x})\right| \leq 1, \label{ind_property_2}
\end{align}
and for every neural network $\bm{g}(\cdot) : \mathbb{R}^d \to \mathbb{R}^d$ in $\tilde{\mathcal{F}}_{\bm{\rho}_{\nu}}^{\operatorname{DNN}}(L, r, B_w, B_b)$
$$f_{ind, [\bm{b}_1, \bm{b}_2)}(\bm{g}(\cdot)) \in \tilde{\mathcal{F}}_{\bm{\rho}_{\nu}}^{\operatorname{DNN}} \left(L+2, \max(r, 4d), \max\left(B_w, \frac{R}{1-\nu^2}\right) , \max(B_b, 1+a) \right).$$ 
    \item[b)] Let $|s| \leq R$. Then there exists the network $f_{test}(\cdot) : \mathbb{R}^{3d+1} \to \mathbb{R}$ such that for $\bm{x} \in K_{1/R}$
\begin{align}
  f_{test}(\bm{x}, \bm{b}_1, \bm{b}_2, s)
  =
  s \cdot \mathbb{I}_{[\bm{b}_1, \bm{b}_2)}(\bm{x}), \label{test_property_1}
\end{align}
and for $\bm{x} \in \mathbb{R}^d$
\begin{align}
  \left|f_{test}(\bm{x}, \bm{b}_1, \bm{b}_2, s) -
  s \cdot \mathbb{I}_{[\bm{b}_1, \bm{b}_2)}(\bm{x})\right| \leq |s|, \label{test_property_2}
\end{align}
and for $\bm{g}_1(\cdot) : \mathbb{R}^d \to \mathbb{R}^d$ in $\tilde{\mathcal{F}}_{\bm{\rho}_{\nu}}^{\operatorname{DNN}}(L, r_1, B_w, B_b)$, 
$\bm{g}_2(\cdot) : \mathbb{R}^d \to \mathbb{R}^d$ in $\tilde{\mathcal{F}}_{\bm{\rho}_{\nu}}^{\operatorname{DNN}}(L, r_2, B_w, B_b)$ and $\bm{g}_3(\cdot) : \mathbb{R}^d \to \mathbb{R}^d$ in $\tilde{\mathcal{F}}_{\bm{\rho}_{\nu}}^{\operatorname{DNN}}(L, r_3, B_w, B_b)$,
we have
\begin{align}
    \begin{split} \label{test_property_3}
    f_{test}(\bm{g}_1(\cdot), \bm{g}_2(\cdot), \bm{g}_3(\cdot),s)
    \in \tilde{\mathcal{F}}_{\bm{\rho}_{\nu}}^{\operatorname{DNN}}
    \Bigg(L+2, \max(r_1 + r_2 + r_3 ,8d + 4),& \\ 
    \max\left(B_w, \frac{R^2}{1-\nu^2}\right), B_b & \Bigg).
    \end{split}
\end{align}
\end{enumerate}
\end{lemma}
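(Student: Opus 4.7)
The plan is to build the box-indicator from one-dimensional trapezoidal pulses and then combine them via a ReLU-based AND gate, using Lemma~\ref{aux_network_basic}(b) to emulate a ReLU out of the Leaky-ReLU at the cost of one extra hidden layer and a factor $1/(1-\nu^2)$ in the weight bound. All properties of the form (\ref{ind_property_1})--(\ref{ind_property_2}) and (\ref{test_property_1})--(\ref{test_property_2}) will then follow from a case analysis on whether $\bm{x}\in K_{1/R}$ or not.

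For part a), the scheme is as follows. For each coordinate $i\in[d]$ I would introduce the one-dimensional ramp
\[
\sigma_b(x) \;=\; \tfrac{1}{1-\nu}\!\left[\bm{\rho}_\nu(R(x-b))-\bm{\rho}_\nu(R(x-b-1/R))\right]-\tfrac{\nu}{1-\nu},
\]
which, by a direct piecewise computation, equals $0$ for $x\le b$, equals $1$ for $x\ge b+1/R$, and is linear in between; two Leaky-ReLU units suffice. Setting $h_i(x):=\sigma_{b_1^{(i)}}(x)-\sigma_{b_2^{(i)}-1/R}(x)$ yields a trapezoidal pulse that equals $1$ on $[b_1^{(i)}+1/R,\,b_2^{(i)}-1/R]$, equals $0$ outside $[b_1^{(i)},b_2^{(i)})$, and lies in $[0,1]$ everywhere; four Leaky-ReLU units per coordinate, hence $4d$ units in the first new hidden layer, are required. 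To respect the bias bound $1+a$ while letting only weights carry the scale $R$, I would compute the four affine arguments $x^{(i)}-c$ (with $|c|\le a+1/R\le a+1$) using unit weights in that first new affine map and defer the multiplicative factor $R$ to the outgoing weights. The indicator is then approximated by the AND-gate $\bm{\rho}_0\!\left(\sum_{i=1}^d h_i(x^{(i)})-(d-1)\right)$, realised in a second extra hidden layer through the $\bm{f}_{\rho_0}$ construction of Lemma~\ref{aux_network_basic}(b). Whenever $\bm{x}\in K_{1/R}$ each $h_i\in\{0,1\}$, so the gate returns exactly $\mathbb{I}_{[\bm{b}_1,\bm{b}_2)}(\bm{x})$, giving (\ref{ind_property_1}); elsewhere each $h_i\in[0,1]$ and the AND-output lies in $[0,1]$, giving (\ref{ind_property_2}). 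Propagating the weight, bias, width and depth budgets along this pipeline produces the stated bounds.

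For part b), since $\bm{x},\bm{b}_1,\bm{b}_2$ are now outputs of $\bm{g}_1,\bm{g}_2,\bm{g}_3$ run in parallel (total width $r_1+r_2+r_3$ across the first $L$ hidden layers), the four shifted differences entering each ramp are formed directly as linear combinations of the three sub-networks' outputs with weights $\pm 1$ and no new biases, which is exactly why the bias bound stays at $B_b$ rather than growing to $\max(B_b,1+a)$. Because $\bm{b}_1$ and $\bm{b}_2$ are no longer known constants, both $\bm{\rho}_\nu(\cdot)$ and $\bm{\rho}_\nu(-\cdot)$ of each trapezoid argument are computed in the first new hidden layer so that the $\bm{f}_{\rho_0}$ ReLU emulation is available; this doubles the trapezoid-argument width to $8d$, with $4$ further units reserved for the doubled AND-gate, matching $8d+4$. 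The scalar multiplier $s$ with $|s|\le R$ is absorbed into the outgoing weights of the AND-gate layer, adding one extra factor $R$ on top of the $R/(1-\nu^2)$ already carried from part a) and yielding the claimed bound $R^2/(1-\nu^2)$; properties (\ref{test_property_1})--(\ref{test_property_2}) and membership (\ref{test_property_3}) then follow from the same case analysis as in part a) multiplied by $s$.

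The hard part will be the simultaneous bookkeeping: realising a ReLU-like thresholding out of the Leaky-ReLU within only two additional hidden layers, while routing the scale $R$ exclusively through weights so that biases remain bounded by $1+a$ (respectively $B_b$), and while keeping widths within the stated limits despite the $\bm{f}_{\rho_0}$ doubling. Care is also required in part b) because $\bm{b}_1$ and $\bm{b}_2$ are signed network outputs rather than constants available to biases, so the boundary differences and the multiplication by $s$ must all be assembled at the level of affine maps between parallel sub-networks; Lemma~\ref{lemma_equal_network} is useful here to freely re-distribute any excess scale across layers without altering the function.
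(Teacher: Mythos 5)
Your part a) is essentially sound but follows a different gate than the paper's: you build per-coordinate trapezoids $h_i\in[0,1]$ and threshold them with $\rho_0\!\left(\sum_i h_i-(d-1)\right)$, whereas the paper (following Kohler--Langer) uses the penalty gate $\rho_0\!\left(1-R\sum_i\left[\rho_0(b_1^{(i)}+1/R-x^{(i)})+\rho_0(x^{(i)}-b_2^{(i)}+1/R)\right]\right)$; both take two extra hidden layers of widths $4d$ and $O(1)$, and your leaky-difference ramp avoiding the $\bm{f}_{\rho_0}$ emulation in the first new layer is a nice touch. However, your bookkeeping does not reproduce the stated constants: the AND-gate requires a bias of magnitude $d-1$, which exceeds $\max(B_b,1+a)$ whenever $d>a+2$ (the paper's gate only ever needs the offset $1$), and your ramps feed weights of size $R/(1-\nu)$ rather than $R/(1-\nu^2)$ into the second new layer. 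Both points are fixable (e.g.\ by switching to the ``one minus sum of penalties'' gate), but as written part a) does not give the bounds claimed in the lemma.

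The genuine gap is in part b): $s$ is an \emph{input} of $f_{test}:\mathbb{R}^{3d+1}\to\mathbb{R}$, and in every downstream use (e.g.\ $\hat{\phi}_{2,2}^{(i)}$ and $\hat{\phi}_{3,2}^{(\bm{l})}$ in the proof of Lemma \ref{lemma_approx_inner}) the $s$-slot is fed a network output such as $\hat{\phi}_{2,1}^{(i)}+v_j^{(i)}$, so the weights of $f_{test}$ cannot depend on $s$. Your mechanism --- ``absorb the scalar multiplier $s$ into the outgoing weights of the AND-gate layer'' --- therefore does not yield a single network with properties (\ref{test_property_1})--(\ref{test_property_2}); it would only realize multiplication by a fixed constant, not the product of a variable input $s$ with the indicator. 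This is precisely where the paper needs the two-branch clipping construction $\rho_0\!\left(f_{id}(s)-R^2\sum_i(\cdot)\right)-\rho_0\!\left(-f_{id}(s)-R^2\sum_i(\cdot)\right)$: outside the box (within $K_{1/R}$) the penalty is at least $1/R$, so the factor $R^2$ dominates $|s|\le R$ and kills both branches, while inside the box both penalties vanish and the difference reduces to $\rho_0(s)-\rho_0(-s)=s$. The width $8d+4$ and the weight bound $R^2/(1-\nu^2)$ in (\ref{test_property_3}) come from this gadget, not from scaling outgoing weights by $|s|\le R$; part b) of your proposal needs to be rebuilt along these lines (or with some other genuine product construction) before it goes through.
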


\begin{proof}
For the network $f_{\rho_0}$ defined on Lemma \ref{aux_network_basic} with $k=1$, we define
\begin{align*}
f_{ind, [\bm{b}_1, \bm{b}_2)}(\bm{x}) &:= f_{\rho_0} \bigg(1-R \cdot \sum_{i=1}^d \left(f_{\rho_0}\left(b_1^{(i)} + 1/R - x^{(i)}\right)
 + f_{\rho_0}\left(x^{(i)} - b_2^{(i)} + 1/R \right) \right)\bigg) \\
 &= \rho_0 \bigg(1-R \cdot \sum_{i=1}^d \left(\rho_0 \left(b_1^{(i)} + 1/R - x^{(i)}\right)
 + \rho_0 \left(x^{(i)} - b_2^{(i)} + 1/R \right) \right)\bigg),
\end{align*}
and
\begin{align*}
f_{test}(\bm{x}, \bm{b}_1, \bm{b}_2, s) :=  f_{\rho_0} \bigg(f_{id}(s)-R^2 \cdot \sum_{i=1}^d \left(f_{\rho_0} \left(b_1^{(i)} + 1/R - x^{(i)}\right) + f_{\rho_0} \left(x^{(i)} - b_2^{(i)} + 1/R \right)\right)\bigg)\\
 - f_{\rho_0} \bigg(-f_{id}(s)-R^2 \cdot \sum_{i=1}^d \left(f_{\rho_0}\left(b_1^{(i)} + 1/R - x^{(i)}\right) + f_{\rho_0} \left(x^{(i)} - b_2^{(i)} + 1/R\right)\right)\bigg)\\
 =  \rho_0 \bigg(f_{id}(s)-R^2 \cdot \sum_{i=1}^d \left(\rho_0 \left(b_1^{(i)} + 1/R - x^{(i)}\right)\right.
\left. + \rho_0 \left(x^{(i)} - b_2^{(i)} + 1/R \right)\right)\bigg)\\
 - \rho_0 \bigg(-f_{id}(s)-R^2 \cdot \sum_{i=1}^d \left(\rho_0 \left(b_1^{(i)} + 1/R - x^{(i)}\right)\right.
\left.  + \rho_0 \left(x^{(i)} - b_2^{(i)} + 1/R \right)\right)\bigg).
\end{align*}
Then,
(\ref{ind_property_1}), (\ref{ind_property_2}), (\ref{test_property_1}) and (\ref{test_property_2}) hold by Lemma 6 of \citet{kohler2021supplementA}, and
(\ref{test_property_3}) holds by Lemma \ref{aux_network_basic}.
\end{proof}

\begin{lemma} \label{aux_network_prod}
Let $\nu \in [0,1)$ and sufficiently large $R \in \mathbb{N}$ be given.
\begin{enumerate}
    \item[a)] There exists a neural network 
$f_{mult}(\cdot, \cdot) : \mathbb{R}^2 \to \mathbb{R}$ 
such that 
$$|f_{mult}(\bm{x}) - x^{(1)} x^{(2)}| \leq c_3 \cdot 4^{-R}$$
for $\bm{x} \in [-a,a]^2$, and for every neural network $\bm{g}(\cdot) : \mathbb{R}^d \to [-a,a]^2$ in the class $\tilde{\mathcal{F}}_{\bm{\rho}_{\nu}}^{\operatorname{DNN}}(L, r, B_w, B_b)$,
$$f_{mult}(\bm{g}(\cdot)) \in \tilde{\mathcal{F}}_{\bm{\rho}_{\nu}}^{\operatorname{DNN}}(L+R, \max(r, 24), \max(B_w, c_4), \max(B_b, c_5)),$$
where $c_3$, $c_4$ and $c_5$ are constants not depending on $R$. 
    \item[b)] There exists a neural network $f_{mult,d}(\cdot) :\mathbb{R}^d \to \mathbb{R}$ such that 
$$\left|f_{mult, d}(\bm{x}) - \prod_{i=1}^d x^{(i)} \right| \leq c_6 \cdot 4^{-R}$$
for $\bm{x} \in [-a,a]^d$, and for every neural network $\bm{g}(\cdot) : \mathbb{R}^d \to [-a,a]^d$ in the class $\tilde{\mathcal{F}}_{\bm{\rho}_{\nu}}^{\operatorname{DNN}}(L, r, B_w, B_b)$,
$$f_{mult,d}(\bm{g}(\cdot)) \in \tilde{\mathcal{F}}_{\bm{\rho}_{\nu}}^{\operatorname{DNN}}(L+R \lceil \log_2 (d) \rceil, \max(r, 24d), \max(B_w, c_4), \max(B_b, c_5)),$$
where $c_6$ is a constant not depending on $R$.
    \item[c)] \label{testtest} For $N \in \N$, let $m_1, \dots, m_{\binom{d+N}{d}}$ denote all monomials of the form 
\begin{align*}
\prod_{k=1}^d \left(x^{(k)}\right)^{\alpha_k}
\end{align*}
for some $\alpha_1, \dots, \alpha_d \in \N_0$ such that $\alpha_1+\dots+\alpha_d \leq N$.
For $u_1, \dots, u_{\binom{d+N}{d}} \in [-1,1]$, define 
\begin{align*}
p\left(\bm{x}, y_1, \dots, y_{\binom{d+N}{d}}\right) := \sum_{i=1}^{\binom{d+N}{d}} u_i \cdot y_i \cdot m_i(\bm{x}).
%, \quad \bm{x} \in [-a,a]^d, y_i \in [-a,a].
\end{align*}
Then, there exists a neural network 
$f_{p}\left(\cdot, \dots, \cdot \right) : \mathbb{R}^{\left[d+\binom{d+N}{d}\right]} \to \mathbb{R}$
such that 
\begin{align*}
\left|f_{p}\left(\bm{x}, y_{1}, \dots, y_{\binom{d+N}{d}}\right) - p\left(\bm{x}, y_{1}, \dots, y_{\binom{d+N}{d}}\right) \right| \leq c_{7} \cdot 4^{-R}
\end{align*}
for $\bm{x} \in [-a,a]^d$ and $y_i \in [-a,a]$, and for every neural network $\bm{g}_0(\cdot) : \mathbb{R}^d \to [-a,a]^d$ in the class $\tilde{\mathcal{F}}_{\bm{\rho}_{\nu}}^{\operatorname{DNN}}(L, r_0, B_w, B_b)$ and $g_i(\cdot) : \mathbb{R}^d \to [-a,a]$ in the class $\tilde{\mathcal{F}}_{\bm{\rho}_{\nu}}^{\operatorname{DNN}}(L, r_i, B_w, B_b)$ for $i \in [\binom{d+N}{d}]$, we have
\begin{align*}
    & f_{p}\left(\bm{g}_0(\cdot), g_{1}(\cdot), \dots, g_{\binom{d+N}{d}}(\cdot)\right) 
    \\
    & \in  \ \tilde{\mathcal{F}}_{\bm{\rho}_{\nu}}^{\operatorname{DNN}} \Bigg(L+ R \lceil \log_2 (N+1) \rceil, \max \Big(\sum_{i=0}^{\binom{d+N}{d}} r_i ,\
    24 (N+1) \binom{d+N}{d}   \Big),\\
    & \qquad \qquad \qquad \qquad \qquad \qquad \qquad \qquad \qquad \qquad \max(B_w, c_{4}), \max(B_b, c_{5}) \Bigg),
\end{align*}
where $c_7$ is a constant not depending on $R$.
\end{enumerate}
\end{lemma}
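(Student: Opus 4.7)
The plan is to follow the template of the analogous ReLU-based multiplication/polynomial construction (as in \citet{kohler2021rate}) while making two modifications dictated by the goals of this paper: (i) replace each internal ReLU activation by a call to the subnetwork $f_{\rho_0}$ constructed in Lemma \ref{aux_network_basic}, so that everything runs on the Leaky-ReLU $\bm{\rho}_\nu$; and (ii) keep the absolute values of weights and biases uniformly bounded in $R$ by invoking the re-scaling Lemma \ref{lemma_equal_network}.

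For part (a), I would start from Yarotsky's classical approximation of squaring: iterating a sawtooth built from $\rho_0$ yields a piecewise-linear $\tilde{s}_R$ with $|\tilde{s}_R(x)-x^2|\lesssim 4^{-R}$ on a bounded interval, and then $xy=\tfrac{1}{4}((x+y)^2-(x-y)^2)$ gives the multiplier. To realize this with $\bm{\rho}_\nu$, every $\rho_0$ is replaced by $f_{\rho_0}$, which by Lemma \ref{aux_network_basic} adds one layer and preserves the weight bound up to the factor $1/(1-\nu^2)$. The width $24$ accommodates the parallel sawtooths for the polarization identity, plus identity channels for the running arguments and outputs. The weights in the straightforward sawtooth iteration accrete factors that could depend on $R$ (from rescaling the input from $[-a,a]$ and combining $R$ iterations); I would absorb these by picking $\zeta_1,\ldots,\zeta_{L+R+1}$ with $\prod_l\zeta_l=1$ in Lemma \ref{lemma_equal_network} so that any magnification is spread evenly across the $R$ new layers, leaving each layer's parameters bounded by constants $c_4,c_5$ independent of $R$.

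For part (b), I would compose $f_{mult}$ in a balanced binary tree: pair up the $d$ inputs and apply $f_{mult}$ in parallel at depth $R$, pad intermediate outputs with $\bm{f}_{id}$ as needed to align depths, and recurse through $\lceil\log_2 d\rceil$ further stages, giving total depth $R\lceil\log_2 d\rceil$ and first-level width $24d$. Error control is by induction on the tree level, using Lipschitz continuity of $x\mapsto xc$ on $[-a^d,a^d]$ to propagate the $4^{-R}$ leaf error into the $c_6\cdot 4^{-R}$ bound at the root, after a benign rescaling to keep all intermediate values inside $[-a,a]$. For part (c), I would compute each monomial $m_i(\bm{x})$ in parallel using the tree from (b) with $N+1$ factors (padding with $1$'s and with the extra factor $y_i$), so depth is $R\lceil\log_2(N+1)\rceil$ and width per monomial is $24(N+1)$, totaling $24(N+1)\binom{d+N}{d}$; the final weighted sum $\sum_i u_i\cdot(y_i m_i(\bm{x}))$ is collapsed into the output affine map (using $|u_i|\le 1$ so no extra weight blowup), and the $\bm{g}_0,g_i$ inputs are fed through identity subnetworks to synchronize with the multiplication depth.

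The principal obstacle is condition (ii), the uniform weight bound in $R$, which does not arise in the corresponding ReLU result of \citet{kohler2021rate}. In the naive construction, either the sawtooth iteration or the final $\tfrac{1}{4}$ polarization yields coefficients that scale with $R$, and the per-monomial tree in (c) compounds this further. Lemma \ref{lemma_equal_network} is essential here: because the Leaky-ReLU is positively homogeneous, any such multiplicative factor can be split into $R$ equal pieces and absorbed layer by layer without altering the realized function. The approximation errors $c_3,c_6,c_7$ then follow by the triangle inequality from the base $4^{-R}$ squaring error, propagated through the binary tree by Lipschitz continuity, and the affine output layer contributes no additional order of error since $|u_i|\le 1$ and $\binom{d+N}{d}$ is a constant.
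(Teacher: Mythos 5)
Your proposal is correct and follows essentially the same route as the paper: Yarotsky-style squaring via the tent map realized with $f_{\rho_0}$ from Lemma \ref{aux_network_basic}, the polarization identity $xy=\tfrac14\bigl((x+y)^2-(x-y)^2\bigr)$ after an affine rescaling to $[0,1]$, a balanced binary tree of $f_{mult}$ calls for part (b), and parallel monomial networks combined by an output affine map with $|u_i|\le 1$ for part (c). The only divergence is your appeal to Lemma \ref{lemma_equal_network} inside this lemma: in the paper's construction the multiplication network already has $R$-independent parameters (the iteration coefficients $2^{-2l}$ shrink and the tent-map and rescaling weights are fixed constants such as $16a^2$), so no re-scaling is needed here --- that device is reserved for the later lemmas where the indicator/test subnetworks introduce weights of order $M^2$; your extra step is harmless but unnecessary.
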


\begin{proof}
\begin{enumerate}
    \item[a)] We define $f_{ \land}(\cdot) : \mathbb{R} \to \mathbb{R}$ as
\begin{align*}
f_{\land}(x) :=  2 f_{\rho_0}(x) - 4 f_{\rho_0}(x - 0.5) + 2 f_{\rho_0}(x -1),
\end{align*}
where $f_{\rho_0}$ is defined on Lemma \ref{aux_network_basic}.
Then, we can check that $f_{\land}(\cdot)$ satisfies $f_{\land}(x) = 2x \cdot \mathbb{I}(0\leq x <0.5) + 2(1-x) \cdot \mathbb{I}(0.5 \leq x <1)$ for every $x$.
Also, for any $g(\cdot) : \mathbb{R} \to \mathbb{R}$ in $\tilde{\mathcal{F}}_{\bm{\rho}_{\nu}}^{\operatorname{DNN}}(L, r, B_w, B_b)$,
we have
$f_{\land}(g(\cdot)) \in \tilde{\mathcal{F}}_{\bm{\rho}_{\nu}}^{\operatorname{DNN}}(L+1, \max(r, 6), \max(B_w, \frac{4}{1-\nu^2}), B_b)$.

Now, for $\hat{f}_{1, 0}(x) = \hat{f}_{2, 0}(x) = x$ and $\hat{f}_{3, 0}(x) = 0$, we define
\begin{align*}
    \hat{f}_{1, l}(x) &= f_{id}(\hat{f}_{1,l-1}(x)) \\
    \hat{f}_{2, l}(x) &= f_{\land}(\hat{f}_{2, l-1}(x)) \\
    \hat{f}_{3,l}(x) &= f_{id}(\hat{f}_{3,l-1}(x)) - f_{\land}(\hat{f}_{2,l-1}(x))/2^{2l}
\end{align*}
for $l \in [R-1]$ and
$$f_{sq_{[0,1]}}(x)
=
f_{id}(\hat{f}_{1, R-1}(x)) -  f_{\land}(\hat{f}_{2, R-1}(x))/2^{2R} + f_{id}(\hat{f}_{3, R-1}(x)) .$$

By the proof of Lemma 20 of \citet{kohler2021supplementB}, we obtain
$$|f_{sq_{[0,1]}}(x) - x^2| \leq  2^{-2R-2}$$
for $x \in [0,1]$.
Note that $f_{sq_{[0,1]}}(g(\cdot))$ requires $r$ neurons in each of the $1$st through $L$th hidden layers and $2+6+2$ neurons in each of the $(L+1)$th through $(L+R)$th hidden layers. 
Hence, we have $f_{sq_{[0,1]}}(g(\cdot)) \in \tilde{\mathcal{F}}_{\bm{\rho}_{\nu}}^{\operatorname{DNN}}(L+R, \max(r, 10), \max(B_w, \frac{4}{1-\nu^2}), B_b)$.

Now, with the additional network $f_{tran}: [-2a, 2a] \to [0,1]$ such that
$$
f_{tran}(x) := \frac{x}{4 a}+\frac{1}{2},
$$
we define
\begin{align*}
    f_{sq}(x)
:=& 16 a^2 f_{sq_{[0,1]}} (f_{tran}(x)) - 4 a f_{id}^R (x)- 4a^2.
\end{align*}
Then, for $x \in [-2a,2a]$, we have
\begin{align*}
|f_{sq}(x) - x^2|
= & |(16 a^2 f_{sq_{[0,1]}} (f_{tran}(x)) - 4 a x- 4a^2) - (4 a f_{tran}(x) - 2a)^2|\\
= & |(16 a^2 f_{sq_{[0,1]}} (f_{tran}(x)) - 16 a^2 f_{tran}(x) + 4a^2 ) - (4 a f_{tran}(x) - 2a)^2|\\
\leq & 16 a^2 \cdot |f_{sq_{[0,1]}}(f_{tran}(x)) - (f_{tran}(x))^2| \\
\leq & 16 a^2 \cdot 4^{-R-1} \\
= & 4 a^2 \cdot 4^{-R}.
\end{align*} 
and
$$f_{sq}(g(\cdot)) \in \tilde{\mathcal{F}}_{\bm{\rho}_{\nu}}^{\operatorname{DNN}}(L+R, \max(r, 12), \max(B_w, c_4), \max(B_b, c_5)),$$
where $c_4 = \max(16a^2, \frac{4}{1-\nu^2})$ and $c_5 = 4a^2$. 

Now, we define $f_{mult}(\cdot,\cdot) : [-a,a]^2 \to \mathbb{R}$ as
$$f_{mult}(x,y) := \frac{1}{4} f_{sq}(x+y) - \frac{1}{4} f_{sq}(x-y).$$
Since $|x+y| \leq 2a$ and $|x-y| \leq 2a$, we have
\begin{align*}
    |f_{mult}(x,y) - xy| 
    = & |f_{mult}(x,y) - \frac{1}{4}((x+y)^2 - (x-y)^2)|\\
    \leq & \frac{1}{4} | f_{sq}(x+y) - (x+y)^2| + \frac{1}{4} | f_{sq}(x-y) - (x-y)^2|\\
    \leq & 2a^2 \cdot 4^{-R}
\end{align*}
for any $x,y \in [-a,a]$.
Also, we have
$$f_{mult}(g(\cdot)^{(1)}, g(\cdot)^{(2)}) \in \tilde{\mathcal{F}}_{\bm{\rho}_{\nu}}^{\operatorname{DNN}}(L+R, \max(r, 24), \max(B_w, c_4), \max(B_b, c_5)).$$
\item[b)] For $q=\lceil \log_2(d)\rceil$, we consider $\tilde{\bm{x}} := (x^{(1)}, \dots, x^{(d)}, 1, \dots, 1)^{\top} \in [-a,a]^{2^q}$.  
In the first $R$ layers, we compute
\[
f_{mult}(\tilde{x}^{(1)},\tilde{x}^{(2)}), 
\dots,
f_{mult}(\tilde{x}^{(2^q - 1)},\tilde{x}^{(2^q)}), 
\]
which can be formulated by $R$ hidden layers and $24 \cdot 2^{q-1} \leq 24 d$ neurons in each of the hidden layers.
We define $f_{mult,d} : \mathbb{R}^d \to \mathbb{R}$ as the deep neural network which iteratively pairing those outputs and applying $f_{mult}$.
Then, we have $$f_{mult,d}(\bm{g}(\cdot)) \in \tilde{\mathcal{F}}_{\bm{\rho}_{\nu}}^{\operatorname{DNN}}(L+R \lceil \log_2 (d) \rceil, \max(r, 24d), \max(B_w, c_4), \max(B_b, c_5)).$$
By Lemma 8 of \citet{kohler2021supplementA}, we obtain the other assertion.

\item[c)] Using $f_{mult, d}$, we can construct a neural network $f_{m_i }\left(\cdot, \dots, \cdot \right) : [-a,a]^{\left[d+\binom{d+N}{d}\right]} \to \mathbb{R}$ for $i \in [\binom{d+N}{d}]$  
such that 
\begin{align*}
\left|f_{m_i}\left(\bm{x}, y_{1}, \dots, y_{\binom{d+N}{d}}\right) - y_i \cdot m_i (\bm{x}) \right| \leq c_{8} 4^{-R}
\end{align*}
and
\begin{align}
\begin{split} \label{aux_mi_1}
    &f_{m_i}\left(\bm{g}_0(\cdot), g_{1}(\cdot), \dots, g_{\binom{d+N}{d}}(\cdot)\right) \\
    & \in  \tilde{\mathcal{F}}_{\bm{\rho}_{\nu}}^{\operatorname{DNN}}\Bigg( L+R  \lceil \log_2 (N+1) \rceil, \max\Big(\sum_{i=0}^{\binom{d+N}{d}} r_i, 24(N+1)\Big),
    \max(B_w, c_{4}), \max(B_b, c_{5})\Bigg),
\end{split}
\end{align}
where $c_{8}$ is a constant not depending on $R$.

Now, we define a network $f_{p}\left(\cdot, \dots, \cdot \right) $ as
$$f_{p}\left(\cdot, \dots, \cdot \right) := \sum_{i=1}^{\binom{d+N}{d}} u_i \cdot f_{m_i}\left(\cdot, \dots, \cdot \right),$$
which satisfies
\begin{align*}
& \left|f_{p}\left(\bm{x}, y_{1}, \dots, y_{\binom{d+N}{d}}\right) - p\left(\bm{x}, y_{1}, \dots, y_{\binom{d+N}{d}}\right) \right| \\
\leq & \sum_{i=1}^{\binom{d+N}{d}} |u_i| \cdot \left|  f_{m_i}( \bm{x}, y_{1}, \dots, y_{\binom{d+N}{d}} )
- y_i \cdot m_i (\bm{x}) \right|  \\
\leq & c_{8} \cdot \binom{d+N}{d}  \cdot \max_i |u_i| \cdot 4^{-R}.
\end{align*}
Also, we have the other assertion by (\ref{aux_mi_1}).
\end{enumerate}
\end{proof}

%%%%%%%%%%%%%%%%%%%%%%%%%%%%%%%%%%%%%%%%%%%%%%%%%%%%%%%%%%%%%%%%
\subsection{Proof for Theorem \ref{thm_approx}}
\label{app_A_approx}

We follow the notations and partitions of \citet{kohler2021supplementA}. 
For a half-open cube $C \subset [-a,a]^d$ with length $s>0$, which is defined by 
\begin{align*}
C = \Big\{ \bm{x} : C_{left}^{(j)} \leq x^{(j)}  < C_{left}^{(j)} + s , \ j \in [d] \Big\},
\end{align*}
we denote the ``bottom left" corner of $C$
by $\bm{C}_{left}$. 
Also, for a half-open cube $C \subset [-a,a]^d$ with length $s$ and $0<\delta<\frac{s}{2}$, we denote $C_{\delta}^0 \subset C$ as the half-open cube which contains all $\bm{x} \in C$ that lie with a distance of at least $\delta$ to the boundaries of $C$. i.e.,
\begin{align*}
C_{\delta}^0 = \Big\{ \bm{x} : C_{left}^{(j)} + \delta \leq x^{(j)} <  C_{left}^{(j)} + s -\delta, \ j \in [d] \Big\}.
\end{align*}

We partition $[-a,a)^d$ into $M^d$ and $M^{2d}$ equal-sizes half-open cubes (i.e., length of $2a/M$ and $2a/ M^2 $). We denote these partitions as 
$$
\mathcal{P}_1:=\{C_{i,1}\}_{i \in [M^d]}
$$
and
$$\mathcal{P}_2:=\{C_{j,2}\}_{j \in [M^{2d}]}.$$
Furthermore, let 
$$\bm{v}_1 , \dots, \bm{v}_{M^d} \in \left\{0, \frac{2a}{M^2}, \dots,  \frac{2a(M-1)}{M^2}\right\}^d$$
be the $d$-dimensional $M^d$ different vectors.
We denote the half-open cubes 
$\tilde{C}_{1, i}, \dots, \tilde{C}_{M^d, i}$ as the half-open cubes of $\mathcal{P}_2$ that are contained in $C_{i,1}$
and ordered in such a way that
$$
(\bm{\tilde{C}}_{k, i})_{left} = (\bm{C}_{i,1})_{left} + \bm{v}_k
$$
holds for all $k \in [M^d]$ and $i \in [M^d]$. 
Then, we have
\begin{align*}
\mathcal{P}_2=\{C_{j,2}\}_{j \in [M^{2d}]} = \{\tilde{C}_{k,i}\}_{k \in [M^d]\}, i \in [M^d]}.
\end{align*}
For a partition $\mathcal{P}$ (=$\mathcal{P}_1$ or $\mathcal{P}_2$) and $\bm{x} \in [-a,a)^d$, 
we denote $C_\mathcal{P} (\bm{x})$ as the half-open cube of $\mathcal{P}$
which includes $\bm{x}$.

\begin{lemma} \label{lemma_approx_inner}    
For $\beta>0$ and $K \geq 1$, let $f \in \mathcal{H}_d^\beta (K)$ be the $\beta$-Hölder class function defined on $[-a,a]^d$. 
For any $\nu \in [0,1)$ and sufficiently large $M \in \N$, there exists a neural network $f_{net, \mathcal{P}_2}(\cdot) : [-a, a]^d \to \mathbb{R}$ such that
$$f_{net, \mathcal{P}_2}(\cdot) \in \mathcal{F}_{\bm{\rho}_{\nu}}^{\operatorname{DNN}} \left(\lceil c_9 \log_2 M \rceil , c_{10} M^d, c_{11} \right)$$
and
$$|f_{net, P_2}(\bm{x}) - f(\bm{x})| \leq c_{12} \frac{1}{M^{2\beta}}$$
holds for all $\bm{x} \in \bigcup_{j \in [M^{2d}]} \left(C_{j,2}\right)_{1/M^{2\beta+2}}^0$
and
$$|f_{net, P_2}(\bm{x})| \leq c_{13}$$
holds for all $\bm{x} \in [-a,a]^d$,
where $c_9, c_{10}, c_{11}, c_{12}$ and $c_{13}$ are constants not depending on $M$.
\end{lemma}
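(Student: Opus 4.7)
The plan is to realize $f$ piecewise by its Taylor polynomials on the fine partition $\mathcal{P}_2$, assemble the pieces using the indicator and polynomial gadgets from Section~\ref{app_A_aux}, and finally invoke the re-scaling Lemma~\ref{lemma_equal_network} to enforce the parameter bound. On each cube $C \in \mathcal{P}_2$ of side $2a/M^2$, let $p_C$ be the Taylor polynomial of $f$ of degree $N := \lfloor \beta \rfloor$ expanded around $\bm{C}_{\mathrm{left}}$. Since $f \in \mathcal{H}_d^\beta(K)$, the Taylor remainder is bounded by $cK(2a/M^2)^\beta \lesssim M^{-2\beta}$ uniformly on $C$, and the monomial coefficients of $p_C$ are bounded by a constant depending only on $\beta$, $K$ and $a$. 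The target piecewise polynomial is
\begin{equation*}
\tilde{f}(\bm{x}) := \sum_{j \in [M^{2d}]} p_{C_{j,2}}(\bm{x}) \cdot \mathbb{I}_{C_{j,2}}(\bm{x}).
\end{equation*}

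To realize $\tilde f$ by a network of width $O(M^d)$ rather than $O(M^{2d})$, I would exploit the re-indexing $\mathcal{P}_2 = \{\tilde C_{k,i}\}_{k,i \in [M^d]}$ and split the sum into $M^d$ parallel ``tracks'' indexed by $k$. In track $k$, the $f_{\mathrm{test}}$ gadget of Lemma~\ref{aux_network_indtest}(b) is used, one copy per monomial $m_\alpha$ and per cube index $i$, to produce $c_{k,i,\alpha}\,\mathbb{I}_{\tilde C_{k,i}}(\bm x)$, where $c_{k,i,\alpha}$ is the $\alpha$-coefficient of $p_{\tilde C_{k,i}}$; summing over $i$ yields the ``active'' Taylor coefficients on track $k$. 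These are then fed into the polynomial-evaluation gadget $f_p$ of Lemma~\ref{aux_network_prod}(c), with the sharpness parameter chosen as $R \asymp \log M$ so that the evaluation error $4^{-R}$ is $O(M^{-2\beta})$. Linearly combining the outputs of all $M^d$ tracks gives the candidate $f_{net,\mathcal{P}_2}$. Counting the parallel monomials ($O(N^d) = O(1)$) and cubes ($M^d$), the width is $O(M^d)$ and the depth is $O(\log M)$, matching the target architecture.

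The principal obstacle is the parameter bound $c_{11}$. The indicator-sharpness parameter in $f_{\mathrm{ind}}$ and $f_{\mathrm{test}}$ must be $R = \Theta(M^{2\beta+2})$ for the pointwise agreement on the interior region $(C_{j,2})^0_{1/M^{2\beta+2}}$, which naively creates weights of magnitude $R$ and $R^2$, both polynomial in $M$. Here I would invoke Lemma~\ref{lemma_equal_network}: choosing re-scaling factors $\zeta_1,\dots,\zeta_{L+1}>0$ with $\prod_l \zeta_l = 1$ that spread any factor $M^c$ uniformly across the $L = \Theta(\log_2 M)$ layers produces a per-layer multiplier $M^{c/L} = 2^{O(1)}$, a universal constant independent of $M$. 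Since the biases transform by $\tilde{\bm{b}}_l = (\prod_{l' \leq l} \zeta_{l'})\bm{b}_l$ and the original biases are $O(a)$, they also stay bounded by a constant after rescaling. This yields a single uniform bound $c_{11}$ on every parameter.

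Finally, for the uniform bound $|f_{net,\mathcal{P}_2}(\bm{x})| \leq c_{13}$ on all of $[-a,a]^d$, I would use that by (\ref{test_property_2}) each $f_{\mathrm{test}}$ output is bounded by its coefficient in absolute value, which is $O(1)$, and that a given $\bm x$ can contribute nontrivially to at most $2^d$ cubes simultaneously, so even off the interior the sum is bounded by a constant. If necessary, an explicit Leaky-ReLU clipping composition can be appended using $f_{\rho_0}$-based thresholds, adding only $O(1)$ depth and width and preserving the parameter bound via one more application of the re-scaling lemma. On the interior $\bigcup_j (C_{j,2})^0_{1/M^{2\beta+2}}$, the test gadgets fire exactly, so the error decomposes as Taylor remainder $O(M^{-2\beta})$ plus polynomial-evaluation error $O(4^{-R}) = O(M^{-2\beta})$, giving the claimed bound.
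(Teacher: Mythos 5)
Your overall strategy (piecewise Taylor polynomials on $\mathcal{P}_2$, assembled with the indicator/test/polynomial gadgets, with the parameter bound enforced at the end by the re-scaling Lemma~\ref{lemma_equal_network}) is the right one, and the re-scaling step and the uniform-boundedness argument are handled essentially as in the paper. However, there is a genuine gap in the width accounting, which is precisely the point of the lemma. As you describe it, track $k$ contains one $f_{test}$ gadget per monomial \emph{and per coarse index} $i\in[M^d]$, each with the hard-coded boundaries of the fine cube $\tilde C_{k,i}$; since $k$ also ranges over $[M^d]$, this is $\Theta(M^{2d})$ gadgets running in parallel, i.e.\ width $\Theta(M^{2d})$, not $O(M^d)$. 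Your count ``monomials $O(1)$ times cubes $M^d$'' silently drops one of the two factors of $M^d$: with hard-wired cube boundaries, every one of the $M^{2d}$ fine cubes needs its own (approximate) indicator, and merely re-indexing $\mathcal{P}_2$ as $\{\tilde C_{k,i}\}_{k,i}$ does not reduce that count.

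The missing idea is the two-scale sharing trick used in the paper (following Kohler--Langer). Stage one builds only the $M^d$ indicators of the \emph{coarse} cubes $C_{i,1}$ (width $O(M^d)$) and takes cheap linear read-outs of them: $\bm{\hat\phi}_{2,1}$, the bottom-left corner of the coarse cube containing $\bm{x}$, and, for each multi-index $\bm{l}$ and each within-coarse-cube offset $j\in[M^d]$, the scalar $\hat\phi_{3,1}^{(\bm{l},j)}=\sum_i(\partial^{\bm l}f)((\bm{\tilde C}_{j,i})_{left})f_{ind,C_{i,1}}(\bm{x})$, which already encodes the Taylor coefficient of whichever fine cube sits at offset $\bm{v}_j$ inside the (data-dependent) coarse cube. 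Stage two then uses only $M^d$ copies of $f_{test}$, one per offset $j$, whose interval endpoints are \emph{computed by the network} as $\bm{\hat\phi}_{2,1}+\bm{v}_j$ and $\bm{\hat\phi}_{2,1}+\bm{v}_j+\frac{2a}{M^2}\mathbf{1}$ rather than hard-coded, to select the correct offset and output the fine-cube corner $\bm{\hat\phi}_{2,2}$ and coefficients $\hat\phi_{3,2}^{(\bm l)}$; only then is $f_p$ applied to $\bm{\hat\phi}_{1,2}-\bm{\hat\phi}_{2,2}$ (note the Taylor polynomial must be evaluated in local coordinates, so the corner must actually be produced by the network, which your coefficient-only tracks do not do). This is what brings the width down to $O(M^d)$ while effectively addressing all $M^{2d}$ fine cubes; without it, your construction does not fit the architecture $\mathcal{F}_{\bm{\rho}_{\nu}}^{\operatorname{DNN}}(\lceil c_9\log_2 M\rceil, c_{10}M^d, c_{11})$ claimed in the lemma.
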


\begin{proof}
We construct the networks
\begin{align}
\begin{split} \label{phi1_phi2}
    \bm{\hat{\phi}}_{1,1} =& \left(\hat{\phi}_{1,1}^{(1)}, \dots, \hat{\phi}_{1,1}^{(d)}\right) = f_{id}^2 (\bm{x}),\\
    \bm{\hat{\phi}}_{2,1} =& \left(\hat{\phi}_{2,1}^{(1)}, \dots, \hat{\phi}_{2,1}^{(d)}\right)=  \sum_{i \in [M^{d}]} (\bm{C}_{i,1})_{left} \cdot f_{ind, {C_{i,1}}}(\bm{x})
\end{split}
\end{align}
using Lemma \ref{aux_network_basic} and Lemma \ref{aux_network_indtest},
where $R$ in Lemma \ref{aux_network_indtest} is chosen as $M$.
Also, for $j \in [M^d]$ and $\bm{l} \in \{\bm{l}_1 , \dots, \bm{l}_{\binom{d+\lfloor \beta \rfloor}{d}} \} := \{ \bm{l} \in \N_0^d$,  $\|\bm{l}\|_1 \leq \lfloor \beta \rfloor \}$, we construct the network
$$\hat{\phi}_{3,1}^{(\bm{l},j)} = \sum_{i \in [M^d]} (\partial^{\bm{l}} f)\left((\bm{\tilde{C}}_{j,i})_{left}\right) \cdot f_{ind,{C_{i,1}}}(\bm{x}).$$
using Lemma \ref{aux_network_indtest}, where $R$ in Lemma \ref{aux_network_indtest} is chosen as $R=M$. Then, 
\begin{align}
\left(\bm{\hat{\phi}}_{1,1}, \bm{\hat{\phi}}_{2,1}, \hat{\phi}_{3, 1}^{(\bm{l}_1,1)}, \dots, \hat{\phi}_{3, 1}^{\left(\bm{l}_{\binom{d+ \lfloor \beta \rfloor }{d}}, M^d \right)}\right) \label{first_layer}
\end{align}
requires $2$ hidden layers, $2d+ M^d \cdot 4d $
neurons in each of the hidden layers, the absolute values of weights are bounded by $\frac{M}{1-\nu^2}$ and the absolute values of biases are bounded by $a+1$. 
In other words, 
$$(\ref{first_layer}) \in \tilde{\mathcal{F}}_{\bm{\rho}_{\nu}}^{\operatorname{DNN}} \left(2 , 2d+ 4dM^d , \frac{M}{1-\nu^2}, a+1\right).$$

Next, on the top of the (\ref{first_layer}), we construct the networks
\begin{align}
\begin{split} \label{second_phi1_phi2}    
    \bm{\hat{\phi}}_{1,2}=& \left(\hat{\phi}_{1,2}^{(1)}, \dots, \hat{\phi}_{1,2}^{(d)}\right) = f_{id}^{2} (\bm{\hat{\phi}}_{1,1}),\\
    \bm{\hat{\phi}}_{2,2}=&\left(\hat{\phi}_{2,2}^{(1)}, \dots, \hat{\phi}_{2,2}^{(d)}\right),
\end{split}
\end{align}
where 
$$\hat{\phi}_{2,2}^{(i)} = \sum_{j=1}^{M^d} f_{test}\left(\bm{\hat{\phi}}_{1,1}, \bm{\hat{\phi}}_{2,1} + \bm{v}_j, \bm{\hat{\phi}}_{2,1} + \bm{v}_j+\frac{2a}{M^2} \cdot \mathbf{1}, \hat{\phi}_{2,1}^{(i)} + v_j^{(i)}\right)$$
is constructed using Lemma \ref{aux_network_indtest}, where $R$ in Lemma \ref{aux_network_indtest} is chosen as $M$.
Also, for $\bm{l} \in \{\bm{l}_1 , \dots, \bm{l}_{\binom{d+\lfloor \beta \rfloor}{d}} \}$, we construct the networks
$$\hat{\phi}_{3, 2}^{(\bm{l})} = \sum_{j=1}^{M^d} f_{test}\left(\bm{\hat{\phi}}_{1,1}, \bm{\hat{\phi}}_{2,1} + \bm{v}_j, \bm{\hat{\phi}}_{2,1} + \bm{v}_j+\frac{2a}{M^2} \cdot \mathbf{1}, \hat{\phi}_{3, 1}^{(\bm{l}, j)}\right)$$
using Lemma \ref{aux_network_indtest}, where $R$ in Lemma \ref{aux_network_indtest} is chosen as $R=M$.
Then, 
\begin{align}
\left(\bm{\hat{\phi}}_{1,2}, \bm{\hat{\phi}}_{2,2}, \hat{\phi}_{3, 2}^{(\bm{l}_1)}, \dots, \hat{\phi}_{3, 2}^{\left(\bm{l}_{\binom{d+\lfloor \beta \rfloor}{d}} \right)}\right) \label{second_layer}
\end{align}
requires $2+2$ hidden layers, $\max\left(2d+ 4dM^d, 2d+ d \cdot M^d \cdot  (8d+4)+ \binom{d+\lfloor \beta \rfloor}{d} \cdot M^{d} \cdot (8d+4)\right)$ neurons in each of the hidden layers, the absolute values of weights are bounded by $\frac{M^2}{1-\nu^2}$ and the absolute values of biases are bounded by $a+1$. 
In other words, 
$$(\ref{second_layer}) \in \tilde{\mathcal{F}}_{\bm{\rho}_{\nu}}^{\operatorname{DNN}} \left(4 ,  2d + (8d+4)M^{d} \cdot \left(\binom{d+\lfloor \beta \rfloor}{d} + d\right)  , \frac{M^2}{1-\nu^2}, a+1 \right).$$ 
Note that by Lemma \ref{aux_network_basic} and Lemma \ref{aux_network_indtest}, we have $\bm{\hat{\phi}}_{1,2}(\bm{x}) = \bm{x}$ for $\bm{x} \in [-a,a]^d$ and $\bm{\hat{\phi}}_{2,2}(\bm{x}) = (C_{\mathcal{P}_2} (\bm{x}))_{left}$ for $\bm{x} \in \bigcup_{i \in [M^{2d}]} (C_{i,2})_{1/M^{2p+2}}^0$.

Then, on the top of the (\ref{second_layer}), we construct the network
\begin{align*}
\tilde{f}_{net, \mathcal{P}_2}(x) = f_p\left(\bm{\hat{\phi}}_{1,2} - \bm{\hat{\phi}}_{2,2}, \hat{\phi}_{3, 2}^{(\bm{l}_1)}, \dots, \hat{\phi}_{3, 2}^{\left(\bm{l}_{\binom{d+\lfloor \beta \rfloor}{d}} \right)} \right)
\end{align*}
using Lemma \ref{aux_network_prod}, 
where the coefficients $u_1, \dots, u_{\binom{d+N}{d}}$ in Lemma \ref{aux_network_prod} are chosen as $u_i = \frac{1}{\bm{l}_i!}$ and $N$ and $R$ in Lemma \ref{aux_network_prod} are chosen as $N=\max(1,\lfloor \beta \rfloor)$ and $R = \lceil \log_2 \left(M^{\beta}\right)\rceil$, respectively.
By Lemma 3 of \citet{kohler2021supplementA}, for all $\bm{x} \in \bigcup_{j \in [M^{2d}]} \left(C_{j,2}\right)_{1/M^{2\beta+2}}^0$
\begin{align}
    |\tilde{f}_{net, P_2}(\bm{x}) - f(\bm{x})| \leq c_{14} \cdot (\max(2a,K))^{4(\lfloor\beta\rfloor+1)} \cdot \frac{1}{M^{2 \beta}}, \label{tilde_net_p2_1}
\end{align}
where $c_{14}$ is a constant not depending on $M$ and for all $\bm{x} \in [-a,a]^d$
\begin{align}
    |\tilde{f}_{net, P_2}(\bm{x})| \leq 1 + e^{2ad} K. \label{tilde_net_p2_2}
\end{align}

Note that $\tilde{f}_{net, \mathcal{P}_2}(\cdot)$ requires $4+ \max(1, \lceil \log_2\left( \lfloor \beta \rfloor +1 \right)\rceil ) \cdot  \lceil \log_2 \left(M^{\beta}\right)\rceil $ hidden layers and $\max\left(2d + (8d+4)M^{d} \cdot \left(\binom{d+\lfloor \beta \rfloor}{d} + d\right),  24(\lfloor \beta \rfloor+1 ) \cdot \binom{d+\lfloor \beta \rfloor}{d} \right)$ neurons in each of the hidden layers.

Also, we define $U_l(\tilde{f}_{net, \mathcal{P}_2}) \in \mathbb{R}^{+}$ for $l \in [L(\tilde{f}_{net, \mathcal{P}_2})+1]$ as
\begin{align*}
    U_l(\tilde{f}_{net, \mathcal{P}_2}) &:= \frac{M^2}{1-\nu^2} && l \in \{ 1,2,3,4,5\}, \\
    U_l(\tilde{f}_{net, \mathcal{P}_2}) &:= c_{4} && l \in \{ 6,\dots,L(\tilde{f}_{net, \mathcal{P}_2}) + 1 \},    
\end{align*}
where $c_4$ is a constant defined on Lemma \ref{aux_network_prod}.
Then, $U_l(\tilde{f}_{net, \mathcal{P}_2})$ satisfies 
$$\max(1, |\operatorname{vec}(W_l(\tilde{f}_{net, \mathcal{P}_2}))|_{\infty})  \leq U_l(\tilde{f}_{net, \mathcal{P}_2}).$$
Now we choose $\zeta_l$ as
$$ \zeta_l :=
\frac{1}{U_l(\tilde{f}_{net, \mathcal{P}_2})} \left( c_{4}^{L(\tilde{f}_{net, \mathcal{P}_2}) +1 -5} \left(\frac{M^2}{1-\nu^2}\right)^5 \right)^{\frac{1}{L(\tilde{f}_{net, \mathcal{P}_2}) + 1}}$$  
and re-scale the parameters of $\tilde{f}_{net, \mathcal{P}_2}$ using Lemma \ref{lemma_equal_network}.
We denote this DNN model as $f_{net, \mathcal{P}_2}$.
Since $\prod_{l=1}^{L(\tilde{f}_{net, \mathcal{P}_2}) + 1} \zeta_l = 1 $, (\ref{tilde_net_p2_1}) and (\ref{tilde_net_p2_2}) also hold for $f_{net, \mathcal{P}_2}$ by Lemma \ref{lemma_equal_network}.   
Also note that
\begin{align*}
    \left( c_{4}^{L(\tilde{f}_{net, \mathcal{P}_2}) -4} \left(\frac{M^2}{1-\nu^2}\right)^5 \right)^{\frac{1}{L(\tilde{f}_{net, \mathcal{P}_2}) + 1}}
    \leq& \frac{c_{4}}{1-\nu^2} (M^{10})^{\frac{1}{\beta \log_{2}(M)}}\\
    =& \frac{c_{4}}{1-\nu^2} 2^{10/\beta} =: c_{15},
\end{align*}
which means $|\bm{\theta}_w (f_{net, \mathcal{P}_2})|_{\infty} \leq c_{15}$ holds.
Finally, since $\prod_{l'=1}^{l} \zeta_{l'} \leq 1$ for every $l \in [L(f_{net, \mathcal{P}_2}) + 1]$, we have $|\bm{\theta}_b (f_{net, \mathcal{P}_2})|_{\infty} \leq a+1$.
By choosing $c_{11} = \max(c_{15}, a+1)$, we get the last assertion.
\end{proof}

However, Lemma \ref{lemma_approx_inner} only holds for 
$\bm{x} \in \bigcup_{j \in [M^{2d}]} \left(C_{j,2}\right)_{1/M^{2\beta+2}}^0$. 
To approximate $f(\bm{x})$ on every $\bm{x} \in [-a,a)^d$, we need additional networks.
The strategy is to approximate $w_{\mathcal{P}_2}(\bm{x}) f(\bm{x})$ rather than $f(\bm{x})$, where $w_{\mathcal{P}_2}(\bm{x})$ is defined by 
\begin{align*}
    w_{\mathcal{P}_2}(\bm{x}) = \prod_{j=1}^d \max\left(0, 1- \frac{M^2}{a} \cdot \left|(C_{\mathcal{P}_{2}}(\bm{x}))_{left}^{(j)} + \frac{a}{M^2} - x^{(j)}\right|\right). 
\end{align*}
Note that $w_{\mathcal{P}_2}(\bm{x})$ takes maximum value 1 at the center of $C_{\mathcal{P}_{2}}(\bm{x})$ and gradually decreases linearly to 0 towards the edge of $C_{\mathcal{P}_{2}}(\bm{x})$.
Also, 
\begin{align}
    w_{\mathcal{P}_2}(\bm{x}) \leq \frac{2}{M^\beta}. \label{wp2_decrease}
\end{align}
holds for $\bm{x} \in \bigcup_{i \in [M^{2d}]} C_{i,2} \setminus (C_{i,2})_{2/M^{2\beta+2}}^0$.

We aim to approximate $w_{\mathcal{P}_2}(\bm{x}) f(\bm{x})$ for every $\bm{x} \in [-a,a)^d$ by following three steps.  
The first step is to construct a network $f_{check, \mathcal{P}_2}$, which ascertains whether $\bm{x}$ falls within the boundaries of $\mathcal{P}_2$ or not.
\begin{lemma} \label{lemma_check}
For any $\nu \in [0,1)$ and sufficiently large $M \in \N$, 
there exists a neural network $f_{check, \mathcal{P}_2}(\cdot) : \mathbb{R}^d \to \mathbb{R}$ such that
$$f_{check, \mathcal{P}_2}(\cdot) \in \mathcal{F}_{\bm{\rho}_{\nu}}^{\operatorname{DNN}} \left(L(f_{net, \mathcal{P}_2}) , c_{16} M^d, c_{17} \right)$$
and
\begin{align} \label{net_check_1}
    f_{check, \mathcal{P}_2}(\bm{x}) = 1
\end{align}
for $\bm{x} \in \bigcup_{i \in [M^{2d}]} C_{i,2} \setminus (C_{i,2})_{1/M^{2\beta+2}}^0$,
\begin{align} \label{net_check_2}
    f_{check, \mathcal{P}_2}(\bm{x}) = 0
\end{align}
for $\bm{x} \in \bigcup_{i \in [M^{2d}]} (C_{i,2})_{2/M^{2\beta+2}}^0$ and
\begin{align} \label{net_check_3}
    |f_{check, \mathcal{P}_2}(\bm{x})| \leq 1
\end{align}
for $\bm{x} \in [-a,a)^d$, where $c_{16}$ and $c_{17}$ are constants not depending on $M$.
\end{lemma}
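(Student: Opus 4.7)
The plan is to build $f_{check, \mathcal{P}_2}$ as a clipped sum of coordinate-wise ``grid-boundary bumps,'' where each bump is a thresholded sawtooth that measures the distance from $x^{(j)}$ to the nearest $\mathcal{P}_2$ grid hyperplane perpendicular to axis $j$. The key observation is that $\bigcup_i \bigl(C_{i,2} \setminus (C_{i,2})_{1/M^{2\beta+2}}^0\bigr)$ coincides with the set of $\bm{x} \in [-a,a)^d$ for which at least one coordinate $x^{(j)}$ lies within $1/M^{2\beta+2}$ of some grid point $-a + 2ak/M^2$, $k \in \{0,\dots,M^2\}$, while $\bigcup_i (C_{i,2})_{2/M^{2\beta+2}}^0$ is exactly the set where every coordinate is farther than $2/M^{2\beta+2}$ from every grid point. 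So it suffices to detect closeness to the grid coordinate by coordinate.

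For each $j \in [d]$, I first build a Leaky-ReLU network $s_j(\bm{x})$ that outputs the distance from $x^{(j)}$ to the nearest grid point, i.e., a triangle wave of period $2a/M^2$ and amplitude $a/M^2$. This is the classical Yarotsky sawtooth, realized by $\lceil 2\log_2 M \rceil$ self-compositions of a fixed tent $g(t) = 2\rho_0(t) - 4\rho_0(t-1/2) + 2\rho_0(t-1)$ with appropriate affine pre- and post-rescaling, in constant width per coordinate, and each $\rho_0$ is emulated by Leaky-ReLU through the $f_{\rho_0}$ block of Lemma \ref{aux_network_basic}(b). Next, I sharpen each $s_j$ into a piecewise-linear boundary indicator
\begin{equation*}
b_j(\bm{x}) := f_{\rho_0}\bigl(2 - M^{2\beta+2}\, s_j(\bm{x})\bigr) - f_{\rho_0}\bigl(1 - M^{2\beta+2}\, s_j(\bm{x})\bigr),
\end{equation*}
which satisfies $b_j=1$ on $\{s_j\leq 1/M^{2\beta+2}\}$, $b_j=0$ on $\{s_j\geq 2/M^{2\beta+2}\}$, and $b_j\in[0,1]$ in between. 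Finally, I combine across coordinates and clip to $[0,1]$ via
\begin{equation*}
f_{check,\mathcal{P}_2}(\bm{x}) := f_{\rho_0}\Bigl(\sum_{j=1}^d b_j(\bm{x})\Bigr) - f_{\rho_0}\Bigl(\sum_{j=1}^d b_j(\bm{x}) - 1\Bigr).
\end{equation*}

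Properties (\ref{net_check_1})--(\ref{net_check_3}) follow by direct case analysis: on the boundary strip at least one $b_j$ equals $1$, so $f_{check,\mathcal{P}_2}=1$; on the deep interior all $b_j$ vanish, so $f_{check,\mathcal{P}_2}=0$; and the clipping forces $|f_{check,\mathcal{P}_2}|\leq 1$ everywhere. For the architecture, the total depth is $O(\log M)$, dominated by the sawtooth, which I pad with $\bm{f}_{id}$-blocks from Lemma \ref{aux_network_basic}(a) to match $L(f_{net,\mathcal{P}_2})$, and the total width is $O(d) \leq c_{16} M^d$.

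The main obstacle is the parameter bound $c_{17}$: the raw construction carries a weight of size $M^{2\beta+2}$ in the sharpening step, together with polynomial-in-$M$ factors in the sawtooth rescaling. To obtain a bound independent of $M$, I invoke the re-scaling Lemma \ref{lemma_equal_network} to spread these polynomial factors evenly across the $\Theta(\log M)$ hidden layers; each layer then carries only a factor of the form $(M^{O(1)})^{1/\log_2 M} = 2^{O(1)}$, a constant depending solely on $\beta$ and $\nu$. Choosing the $\zeta_l$ so that the partial products $\prod_{l'=1}^l \zeta_{l'}$ stay below $1$ keeps the biases bounded as well, mirroring the final re-scaling step in the proof of Lemma \ref{lemma_approx_inner}.
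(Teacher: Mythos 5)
Your reduction of the boundary strip to a coordinate-wise distance-to-grid test is sound, and the clipping argument for (\ref{net_check_1})--(\ref{net_check_3}) works (modulo the routine point that a fixed tent map only produces dyadic teeth, so the affine pre-scaling must send $[-a,a]$ into $[0,\,M^2/2^{\lceil \log_2 M^2\rceil}]\subseteq[0,1]$ so that the $\mathcal{P}_2$ grid becomes dyadic). The genuine gap is in the parameter-bounding step. Your construction places its only $M$-dependent factor --- the sharpening weight of order $M^{2\beta+2}$, equivalently a factor $aM^{2\beta}$ acting on the normalized sawtooth output --- \emph{after} the $\Theta(\log M)$ sawtooth layers, and you propose to tame it with Lemma \ref{lemma_equal_network} while ``choosing the $\zeta_l$ so that the partial products stay below $1$.'' These two requirements are incompatible: writing $P_l=\prod_{l'\le l}\zeta_{l'}$ with $P_{L+1}=1$, the last layer's weights get multiplied by $1/P_L$, so if all partial products stay $\le 1$ a large late-layer weight can only increase. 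Conversely, shrinking that weight to a constant forces $P_l\gtrsim M^{2\beta}$ in the layers just before it, and since the tent layers carry nonzero constant biases (the breakpoints $1/2$ and $1$), those biases are multiplied by $P_l$ and blow up polynomially in $M$, violating the bias bound. This is precisely why the paper's constructions (Lemmas \ref{lemma_approx_inner}--\ref{lemma_wp2}) put all $M$-dependent weights in the first handful of layers --- using width $\Theta(M^d)$ and the $f_{ind}$/$f_{test}$ blocks keyed on $\bm{\hat{\phi}}_{2,1}$ --- so that the chosen $\zeta_l$ satisfy $\prod_{l'\le l}\zeta_{l'}\le 1$ throughout and no bias ever grows.

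The gap is repairable, but not by Lemma \ref{lemma_equal_network} as you invoke it: since only parameters (not activations) are bounded, you could instead unroll the multiplication by $aM^{2\beta}$ into roughly $2\beta\log_2 M$ doubling layers (weight $2$, bias $0$, plus one layer with weight in $(1,2]$), after which the final clipping uses only constant weights and biases; the tent layers and the affine pre-scaling already have constant parameters. Note also a mismatch with the statement itself: the lemma demands depth exactly $L(f_{net,\mathcal{P}_2})$, which is about $\max(1,\lceil\log_2(\lfloor\beta\rfloor+1)\rceil)\lceil\beta\log_2 M\rceil+4$, whereas your sawtooth needs about $2\log_2 M$ layers to resolve the $M^2$ teeth per axis (plus any unrolled layers); for $\beta<2$, and in particular for $d=1$ where you cannot trade depth for width, this exceeds the allotted depth, so the construction would not fit the stated class without also modifying the surrounding lemmas.
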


\begin{proof}
We construct the network
\begin{align*}
\bar{f}_{check, \mathcal{P}_2}(\bm{x})& = 1-f_{\rho_0} \Bigg( - f_{id}^2 \Big(1-\sum_{k=1}^{M^d} f_{ind, (C_{k,1})_{1/M^{2\beta+2}}^0}(\bm{x}) \Big)  \\
& + \sum_{k=1}^{M^d}f_{test}\Big(f_{id}^2(\bm{x}), \bm{\hat{\phi}}_{2,1} +\bm{v}_k +\frac{1}{M^{2 \beta +2}}\cdot \mathbf{1},
 \bm{\hat{\phi}}_{2,1} + \bm{v}_k+\frac{2a}{M^2} \cdot \mathbf{1}-\frac{1}{M^{2 \beta+2}}\cdot \mathbf{1}, 1\Big) \Bigg)
\end{align*}
using Lemma \ref{aux_network_basic}, Lemma \ref{aux_network_indtest} and (\ref{phi1_phi2}), where $R$ in Lemma \ref{aux_network_indtest} is chosen as $M$.
Further, we construct the network
$$\tilde{f}_{check, \mathcal{P}_2}(\bm{x}) = f_{id}^{L(f_{net, \mathcal{P}_2})-5}\left(\bar{f}_{check, \mathcal{P}_2}(\bm{x})\right).$$
Then, by Lemma 10 of \citet{kohler2021supplementA}, (\ref{net_check_1}), (\ref{net_check_2}) and (\ref{net_check_3}) hold for $\tilde{f}_{check, \mathcal{P}_2}$.
Note that $\tilde{f}_{check, \mathcal{P}_2}(\cdot)$ requires $L(f_{net, \mathcal{P}_2})$ hidden layers and
$\max(4dM^d + 2d+4dM^d, 2 + (8d+4)M^d )$ neurons in each of the hidden layers.

Also, we define $U_l(\tilde{f}_{check, \mathcal{P}_2}) \in \mathbb{R}^{+}$ for $l \in [L(\tilde{f}_{check, \mathcal{P}_2})+1]$ as
\begin{align*}
    U_l(\tilde{f}_{check, \mathcal{P}_2}) &:= \frac{M^2}{1-\nu^2} 
    && l \in \{ 1, \dots, 6 \},\\
    U_l(\tilde{f}_{check, \mathcal{P}_2}) &:= 1 
    && l \in \{ 7, \dots, L(f_{net, \mathcal{P}_2})+1  \},   
\end{align*}
which satisfies $\max(1, |\operatorname{vec}(W_l(\tilde{f}_{check, \mathcal{P}_2}))|_{\infty})  \leq U_l(\tilde{f}_{check, \mathcal{P}_2}).$
Now we choose $\zeta_l$ as
$$ \zeta_l :=
\frac{1}{U_l(\tilde{f}_{check, \mathcal{P}_2})} \left( \left(\frac{M^2}{1-\nu^2}\right)^6 \right)^{\frac{1}{L(\tilde{f}_{check, \mathcal{P}_2}) + 1}}$$  
and re-scale the parameters of $\tilde{f}_{check, \mathcal{P}_2}$ using Lemma \ref{lemma_equal_network}.
We denote this DNN model as $f_{check, \mathcal{P}_2}$.
Since $\prod_{l=1}^{L(\tilde{f}_{check, \mathcal{P}_2}) + 1} \zeta_l = 1 $, (\ref{net_check_1}), (\ref{net_check_2}) and (\ref{net_check_3}) also holds for $f_{check, \mathcal{P}_2}$ by Lemma \ref{lemma_equal_network}.   
Also note that
\begin{align*}
     \left( \left(\frac{M^2}{1-\nu^2}\right)^6 \right)^{\frac{1}{L(\tilde{f}_{check, \mathcal{P}_2}) + 1}}
    \leq  &  \frac{1}{1-\nu^2} (M^{12})^{\frac{1}{\beta \log_{2}(M)}} \\
    \leq& \frac{1}{1-\nu^2} 2^{12/\beta} =: c_{18},\\
\end{align*}
which means $|\bm{\theta}_w (f_{check, \mathcal{P}_2})|_{\infty} \leq c_{18}$ holds.
Finally, since $\prod_{l'=1}^{l} \zeta_{l'} \leq 1$ for every $l \in [L(f_{check, \mathcal{P}_2}) + 1]$, we have $|\bm{\theta}_b (f_{check, \mathcal{P}_2})|_{\infty} \leq a+1$.
By choosing $c_{17} = \max(c_{18}, a+1)$, we get the last assertion.
\end{proof}

The second step is to construct a network $f_{w_{\mathcal{P}_2}}(\cdot)$, which approximates $w_{\mathcal{P}_2}(\cdot)$ on $\bm{x} \in \bigcup_{j \in [M^{2d}]} \left(C_{j,2}\right)_{1/M^{2\beta+2}}^0$.
\begin{lemma} \label{lemma_wp2}
For any $\nu \in [0,1)$ and sufficiently large $M \in \N$, there exists a neural network $f_{w_{\mathcal{P}_2}}(\cdot) : \mathbb{R}^d \to \mathbb{R}$ 
such that 
$$f_{w_{\mathcal{P}_2}}(\cdot) \in \mathcal{F}_{\rho_{\nu}}^{\operatorname{DNN}} \left(\lceil c_{19} \log_2 M \rceil , c_{20} M^d , c_{21} \right)$$
and
\begin{align*}
    |f_{w_{\mathcal{P}_2}}(\bm{x}) - w_{\mathcal{P}_2}(\bm{x})| \leq c_{22} \frac{1}{M^{2\beta}} 
\end{align*}
for $\bm{x} \in \bigcup_{i \in [M^{2d}]} (C_{i,2})_{1/M^{2p+2}}^0$ 
and 
$$
|f_{w_{\mathcal{P}_2}}(\bm{x})| \leq 2
$$
for $\bm{x} \in [-a,a)^d$, where $c_{19}, c_{20}, c_{21}$ and $c_{22}$ are constants not depending on $M$.
\end{lemma}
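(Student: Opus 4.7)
The plan is to build $f_{w_{\mathcal{P}_2}}$ by re-using the auxiliary networks $\bm{\hat\phi}_{1,2}$ and $\bm{\hat\phi}_{2,2}$ from the proof of Lemma~\ref{lemma_approx_inner}, forming each one-dimensional hat factor explicitly with Leaky-ReLU units, taking the $d$-fold product via $f_{\text{mult},d}$ from Lemma~\ref{aux_network_prod}(b), and then re-scaling with Lemma~\ref{lemma_equal_network} to force all parameters inside a constant box. Concretely, I first carry along the block in (\ref{second_phi1_phi2}); on $\bigcup_i (C_{i,2})_{1/M^{2\beta+2}}^0$ this block already satisfies $\bm{\hat\phi}_{1,2}(\bm{x})=\bm{x}$ and $\bm{\hat\phi}_{2,2}(\bm{x})=(C_{\mathcal{P}_2}(\bm{x}))_{\text{left}}$.

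Next, for each coordinate $j\in[d]$, I construct the hat
\[
h^{(j)}(\bm{x}) := f_{\rho_0}\!\left(1 - \tfrac{M^2}{a}\bigl(f_{\rho_0}(u^{(j)}(\bm{x})) + f_{\rho_0}(-u^{(j)}(\bm{x}))\bigr)\right),\qquad u^{(j)}(\bm{x}) := \hat\phi_{2,2}^{(j)}(\bm{x}) + \tfrac{a}{M^2} - \hat\phi_{1,2}^{(j)}(\bm{x}),
\]
using two applications of $\bm{f}_{\rho_0}$ from Lemma~\ref{aux_network_basic}(b). Since $\bm{f}_{\rho_0}=\bm{\rho}_0$ exactly, each $h^{(j)}$ takes values in $[0,1]$ on all of $[-a,a)^d$, and on the inner region it agrees exactly with the $j$-th factor of $w_{\mathcal{P}_2}$, so that $\prod_{j=1}^d h^{(j)}(\bm{x}) = w_{\mathcal{P}_2}(\bm{x})$ there. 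I then apply $f_{\text{mult},d}$ of Lemma~\ref{aux_network_prod}(b) with $R := \lceil 2\beta\log_2 M\rceil$ to $(h^{(1)},\dots,h^{(d)})\in [-a,a]^d$ to obtain $f_{w_{\mathcal{P}_2}}$. This yields the pointwise bound
\[
\bigl|f_{w_{\mathcal{P}_2}}(\bm{x}) - w_{\mathcal{P}_2}(\bm{x})\bigr|\leq c_6\cdot 4^{-R}\leq c_{22}\cdot M^{-2\beta}
\]
on the inner region, while on all of $[-a,a)^d$ the same error bound against $\prod_j h^{(j)}\in[0,1]$ gives $|f_{w_{\mathcal{P}_2}}|\leq 1+c_6\cdot 4^{-R}\leq 2$ for sufficiently large $M$.

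For the architectural accounting, the $\bm{\hat\phi}_{\cdot,2}$ block contributes $4$ hidden layers of width $O(M^d)$, the two-layer hat construction adds $2$ layers of constant width per coordinate (so $O(d)$ extra neurons in parallel), and $f_{\text{mult},d}$ contributes $R\lceil\log_2 d\rceil=O(\beta\log_2 M)$ layers of width $24d$. Hence $L(f_{w_{\mathcal{P}_2}}) \leq \lceil c_{19}\log_2 M\rceil$ and the width is bounded by $c_{20}M^d$. The main obstacle is the parameter-magnitude bookkeeping: the hat layers introduce weights of order $M^2/(1-\nu^2)$, and $\bm{\hat\phi}_{2,2}$ already contains weights of that size, while the deep multiplication block has only constant-size weights. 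This is exactly the situation Lemma~\ref{lemma_equal_network} is designed to remedy: set $U_l$ to the per-layer bound produced by the above construction (of order $M^2/(1-\nu^2)$ in the $O(1)$ hat/indicator layers and $c_4$ afterwards), then choose $\zeta_l = U_l^{-1}\bigl(\prod_{l'} U_{l'}\bigr)^{1/(L+1)}$ so that $\prod_l\zeta_l = 1$. Because $\prod_{l'}U_{l'}\leq M^{C}$ for some absolute $C$ and $L\asymp\beta\log_2 M$, the equalised per-layer weight bound is $(M^C)^{1/(L+1)}\leq 2^{C/\beta}$, a constant $c_{21}$ independent of $M$. Since the re-scaling multiplies each bias $\bm{b}_l$ by $\prod_{l'\leq l}\zeta_{l'}\leq 1$ whenever the original biases are already $O(1)$, the bias bound $a+1$ is preserved. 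Applying Lemma~\ref{lemma_equal_network} leaves $f_{w_{\mathcal{P}_2}}$ (as a function) unchanged, so both approximation bounds above persist, and we conclude $f_{w_{\mathcal{P}_2}}\in\mathcal{F}^{\operatorname{DNN}}_{\bm{\rho}_\nu}(\lceil c_{19}\log_2 M\rceil, c_{20}M^d, c_{21})$ with the desired approximation error.
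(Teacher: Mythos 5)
Your proposal is correct and follows essentially the same route as the paper's proof: reuse $\bm{\hat{\phi}}_{1,2},\bm{\hat{\phi}}_{2,2}$, realize each one-dimensional hat factor exactly with ReLU-emulating units, take the $d$-fold product via $f_{mult,d}$ with $R\asymp\beta\log_2 M$ so the error is $\lesssim M^{-2\beta}$ and the output is bounded by $2$, and then equalize the per-layer weight magnitudes with Lemma~\ref{lemma_equal_network} so that the weights become a constant and the biases stay below $a+1$. The only differences are cosmetic: you build the hat via $\rho_0\!\left(1-\tfrac{M^2}{a}\bigl(\rho_0(u)+\rho_0(-u)\bigr)\right)$ in two layers where the paper uses the one-layer identity $\rho_0(x)-2\rho_0(x-1)+\rho_0(x-2)=\max(0,1-|1-x|)$, and you take $R=\lceil 2\beta\log_2 M\rceil$ instead of $\lceil\beta\log_2 M\rceil$.
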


\begin{proof}
For $\bm{\hat{\phi}}_{1,2} = \left(\hat{\phi}_{1,2}^{(1)}, \dots, \hat{\phi}_{1,2}^{(d)}\right)$ and $\bm{\hat{\phi}}_{2,2} = \left(\hat{\phi}_{2,2}^{(1)}, \dots, \hat{\phi}_{2,2}^{(d)}\right)$ considered on (\ref{second_phi1_phi2}), 
we define 
\begin{align*}
f_{w_{{\mathcal{P}_2},j}}(\bm{x}) := f_{\rho_0} \left(\frac{M^2}{a} \cdot \left(
\hat{\phi}_{1,2}^{(j)} - \hat{\phi}_{2,2}^{(j)} \right) \right)
& -2 f_{\rho_0} \left( \frac{M^2}{a} \cdot \left( \hat{\phi}_{1,2}^{(j)} - \hat{\phi}_{2,2}^{(j)} - \frac{a}{M^2} \right) \right) \\
&+ f_{\rho_0} \left( \frac{M^2}{a} \cdot \left( \hat{\phi}_{1,2}^{(j)} - \hat{\phi}_{2,2}^{(j)} - \frac{2a}{M^2} \right) \right)
\end{align*}
for $j \in [d]$, where $f_{\rho_0}(\cdot)$ is defined on Lemma \ref{aux_network_basic}.
Since
$$\max(0,x)-2\max(0,x-1)+\max(0,x-2) = \max(0,1-|1-x|)$$ 
holds for every $x \in \mathbb{R}$, we have  
$$f_{w_{{\mathcal{P}_2},j}}(\bm{x}) = 
\max \left(0, 1- \frac{M^2}{a} \cdot \left|\hat{\phi}_{2,2}^{(j)} + \frac{a}{M^2} - \hat{\phi}_{1,2}^{(j)}\right| \right).$$

Then, we define the network
$$\tilde{f}_{w_{\mathcal{P}_2}}(\bm{x}) := f_{mult,d} \left(f_{w_{{\mathcal{P}_2},1}}(\bm{x}), \dots, f_{w_{{\mathcal{P}_2},d}}(\bm{x})\right)$$ using Lemma \ref{aux_network_prod}, where R in Lemma \ref{aux_network_prod} is chosen as $R= \lceil \log_2(M^{\beta})\rceil$.
By Lemma \ref{aux_network_prod}, we obtain 
$$\left|\tilde{f}_{w_{\mathcal{P}_2}}(\bm{x}) -\prod_{j=1}^d \max \left(0, 1- \frac{M^2}{a} \cdot \left| \hat{\phi}_{2,2}^{(j)} + \frac{a}{M^2} - \hat{\phi}_{1,2}^{(j)} \right| \right) \right| \leq c_{22} \frac{1}{M^{2\beta}},$$
where $c_{22}$ is a constant not depending on $M$.
Using this fact, we have
\begin{align}
    |\tilde{f}_{w_{\mathcal{P}_2}}(\bm{x})| \leq & \left|\tilde{f}_{w_{\mathcal{P}_2}}(\bm{x}) - 
\prod_{j=1}^d f_{w_{{\mathcal{P}_2},j}}(\bm{x})\right| + \left|\prod_{j=1}^d f_{w_{{\mathcal{P}_2},j}}(\bm{x})\right| \nonumber \\
\leq & 1 + \prod_{j=1}^d \max \left(0, 1- \frac{M^2}{a} \cdot \left|\hat{\phi}_{2,2}^{(j)} + \frac{a}{M^2} - \hat{\phi}_{1,2}^{(j)}\right| \right) \nonumber \\
\leq & 2 \label{fwp2_all}
\end{align}
for $\bm{x} \in [-a,a)^d$.
Also, for
$\bm{x} \in \bigcup_{i \in [M^{2d}]} (C_{i,2})_{1/M^{2p+2}}^0$, since
$\hat{\phi}_{1,2}^{(j)} = x^{(j)}$ and $\hat{\phi}_{2,2}^{(j)} = (C_{\mathcal{P}_2} (\bm{x}))^{(j)}_{left}$ holds, we have
\begin{align}
    |\tilde{f}_{w_{\mathcal{P}_2}}(\bm{x}) - w_{\mathcal{P}_2}(\bm{x})| \leq c_{22} \frac{1}{M^{2\beta}} \label{fwp2_inner}
\end{align}
for
$\bm{x} \in \bigcup_{i \in [M^{2d}]} (C_{i,2})_{1/M^{2p+2}}^0$.
Note that $\tilde{f}_{w_{\mathcal{P}_2}}(\cdot)$ requires 
$4+1+ \lceil \log_2 (d) \rceil \lceil \log_2(M^{\beta})\rceil$ hidden layers and $\max(2d+ M^d \cdot d \cdot (8d+4), 6d, 24d)$ neurons in each of the hidden layers.

Also, we define $U_l(\tilde{f}_{w_{\mathcal{P}_2}}) \in \mathbb{R}^{+}$ for  $l \in [L(\tilde{f}_{w_{\mathcal{P}_2}})+1]$ as
\begin{align*}
    U_l(\tilde{f}_{w_{\mathcal{P}_2}}) &:= \frac{M^2}{1-\nu^2} && l \in \{ 1, \dots, 6\}, \\
    U_l(\tilde{f}_{w_{\mathcal{P}_2}}) &:= c_{4} && l \in \{ 7,\dots,L(\tilde{f}_{w_{\mathcal{P}_2}}) + 1 \},    
\end{align*}
where $c_4$ is a constant defined on Lemma  \ref{aux_network_prod}.
Then, $U_l(\tilde{f}_{w_{\mathcal{P}_2}})$ satisfies
$$\max(1, |\operatorname{vec}(W_l(\tilde{f}_{w_{\mathcal{P}_2}}))|_{\infty}) \leq U_l(\tilde{f}_{w_{\mathcal{P}_2}}).$$  
Now we choose $\zeta_l$ as
$$ \zeta_l :=
\frac{1}{U_l(\tilde{f}_{w_{\mathcal{P}_2}})} \left( c_{4}^{L(\tilde{f}_{w_{\mathcal{P}_2}}) + 1 - 6} \left(\frac{M^2}{1-\nu^2}\right)^6 \right)^{\frac{1}{L(\tilde{f}_{w_{\mathcal{P}_2}}) + 1}}$$  
and re-scale the parameters of $\tilde{f}_{w_{\mathcal{P}_2}}$ using Lemma \ref{lemma_equal_network}.
We denote this DNN model as $f_{w_{\mathcal{P}_2}}$.
Since $\prod_{l=1}^{L(\tilde{f}_{w_{\mathcal{P}_2}}) + 1} \zeta_l = 1 $, (\ref{fwp2_all}) and (\ref{fwp2_inner}) also hold for $f_{w_{\mathcal{P}_2}}$ by Lemma \ref{lemma_equal_network}.   
Also note that
\begin{align*}
    \left( c_{4}^{L(\tilde{f}_{w_{\mathcal{P}_2}})-5} \left(\frac{M^2}{1-\nu^2}\right)^6 \right)^{\frac{1}{L(\tilde{f}_{w_{\mathcal{P}_2}}) + 1}}
    \leq&  \frac{c_{4}}{1-\nu^2} (M^{12})^{\frac{1}{\beta \log_{2}(M)}}\\
    =& \frac{c_{4}}{1-\nu^2} 2^{12/\beta} =: c_{23},
\end{align*}
which means $|\bm{\theta}_w (f_{w_{\mathcal{P}_2}}|_{\infty} \leq c_{23}$ holds.
Finally, since $\prod_{l'=1}^{l} \zeta_{l'} \leq 1$ for every $l \in [L(f_{w_{\mathcal{P}_2}}) + 1]$, we have $|\bm{\theta}_b (f_{w_{\mathcal{P}_2}})|_{\infty} \leq a+1$.
By choosing $c_{21} = \max(c_{23}, a+1)$, we get the last assertion.
\end{proof}

The last step is to construct a network $f_{net}$ which approximates $w_{\mathcal{P}_2} \cdot f$ on $\bm{x} \in [-a,a)^d$.
In this step, we use the networks $f_{net, \mathcal{P}_2}$, $f_{check, \mathcal{P}_2}$ and $f_{w_{\mathcal{P}_2}}$, which are defined on Lemma \ref{lemma_approx_inner}, Lemma \ref{lemma_check} and Lemma \ref{lemma_wp2}, respectively.   

\begin{lemma} \label{lemma_f_net}
For $\beta>0$ and $K \geq 1$, let $f \in \mathcal{H}_d^\beta (K)$ be the $\beta$-Hölder class function defined on $[-a,a]^d$. 
For any $\nu \in [0,1)$ and sufficiently large $M \in \N$, there exists a neural network $f_{net}(\cdot) : \mathbb{R}^d \to \mathbb{R}$ such that
$$f_{net}(\cdot) \in \mathcal{F}_{\rho_{\nu}}^{\operatorname{DNN}} \left(\lceil c_{24} \log_2 M \rceil, c_{25} M^d, c_{26} \right)$$
and
\begin{align*}
\left|f_{net}(\bm{x}) - w_{\mathcal{P}_2}(\bm{x}) \cdot f(\bm{x})\right| \leq c_{27} \cdot \frac{1}{M^{2\beta}}  
\end{align*}
hold for $\bm{x} \in [-a,a)^d$, where $c_{24}, c_{25}, c_{26}$ and $c_{27}$ are constants not depending on $M$.
\end{lemma}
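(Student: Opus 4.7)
The plan is to define
\[
f_{net}(\bm{x}) \;=\; f_{mult,3}\bigl(\,1 - f_{check,\mathcal{P}_2}(\bm{x}),\; f_{w_{\mathcal{P}_2}}(\bm{x}),\; f_{net,\mathcal{P}_2}(\bm{x})\,\bigr),
\]
using $f_{mult,d}$ from Lemma \ref{aux_network_prod}(b) with $d=3$ and $R=\lceil\log_2 M^{2\beta}\rceil$, where $1-f_{check,\mathcal{P}_2}$ is realised by negating the output weights of $f_{check,\mathcal{P}_2}$ and shifting the final bias. First, I would run the three sub-networks in parallel (their parameters and outputs concatenated), padding the shallower ones with copies of $f_{id}$ so that all three have a common depth $L_0=O(\log_2 M)$; this keeps the width $O(M^d)$ and, by Lemma \ref{aux_network_basic}, preserves the bounded-parameter property since $f_{id}$ uses weights bounded by $1/(1+\nu)\le 1$ and biases $0$. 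Then I feed the resulting three-dimensional output into $f_{mult,3}$, which adds $O(\log_2 M)$ layers and constant width, and whose weight/bias bounds are $\max(c_{11},c_{17},c_{21},c_4,c_5)$, a constant. After composition the total depth is $\lceil c_{24}\log_2 M\rceil$, the total width is $c_{25}M^d$, and all parameters are bounded by a constant $c_{26}$, as required.

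For the error bound, writing $\Phi(\bm{x}) := (1-f_{check,\mathcal{P}_2}(\bm{x}))\, f_{w_{\mathcal{P}_2}}(\bm{x})\, f_{net,\mathcal{P}_2}(\bm{x})$, Lemma \ref{aux_network_prod}(b) gives $|f_{net}(\bm{x})-\Phi(\bm{x})|\le c_6\cdot 4^{-R}=O(M^{-2\beta})$ uniformly on $[-a,a)^d$, so it remains to show $|\Phi(\bm{x})-w_{\mathcal{P}_2}(\bm{x})f(\bm{x})|=O(M^{-2\beta})$. I would split $[-a,a)^d$ into three regions and expand the difference by inserting and subtracting intermediate products:
\begin{align*}
\Phi - w_{\mathcal{P}_2}f
&= (1-f_{check,\mathcal{P}_2})(f_{w_{\mathcal{P}_2}}-w_{\mathcal{P}_2})\,f_{net,\mathcal{P}_2} \\
&\quad + (1-f_{check,\mathcal{P}_2})\,w_{\mathcal{P}_2}\,(f_{net,\mathcal{P}_2}-f) - f_{check,\mathcal{P}_2}\,w_{\mathcal{P}_2}\,f.
\end{align*}
In the deep interior $\bigcup_j (C_{j,2})^0_{2/M^{2\beta+2}}$ we have $f_{check,\mathcal{P}_2}=0$, so the third term vanishes and the first two are bounded using the $O(M^{-2\beta})$ approximation errors from Lemmas \ref{lemma_approx_inner} and \ref{lemma_wp2} together with $\|f_{net,\mathcal{P}_2}\|_\infty\le c_{13}$ and $|w_{\mathcal{P}_2}|\le 1$. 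In the outer strip $\bigcup_j C_{j,2}\setminus (C_{j,2})^0_{1/M^{2\beta+2}}$ we have $f_{check,\mathcal{P}_2}=1$, so $\Phi\equiv 0$, and a direct calculation on $w_{\mathcal{P}_2}$ shows one factor is at most $1/(aM^{2\beta})$, making $|w_{\mathcal{P}_2}f|=O(M^{-2\beta})$.

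The main obstacle, and the only delicate step, is the transition band $(C_{j,2})^0_{1/M^{2\beta+2}}\setminus (C_{j,2})^0_{2/M^{2\beta+2}}$, where $f_{check,\mathcal{P}_2}$ can take any value in $[0,1]$ and so neither $1-f_{check,\mathcal{P}_2}$ nor $f_{check,\mathcal{P}_2}$ is small. Here I would use a tighter version of (\ref{wp2_decrease}): since $\bm{x}$ lies within $2/M^{2\beta+2}$ of a coordinate boundary of its cube, the corresponding factor in $w_{\mathcal{P}_2}$ is at most $2/(aM^{2\beta})$, hence $|w_{\mathcal{P}_2}(\bm{x})|=O(M^{-2\beta})$ throughout this band; and because $\bm{x}$ still lies in $(C_{j,2})^0_{1/M^{2\beta+2}}$, Lemma \ref{lemma_wp2} gives $|f_{w_{\mathcal{P}_2}}-w_{\mathcal{P}_2}|\le c_{22}/M^{2\beta}$, so $|f_{w_{\mathcal{P}_2}}|=O(M^{-2\beta})$ as well. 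Substituting these bounds into the three terms of the decomposition, together with $|f_{net,\mathcal{P}_2}|\le c_{13}$ and $|f|\le K$, yields the uniform $O(M^{-2\beta})$ control, and combining with the multiplication error completes the proof with an appropriate constant $c_{27}$.
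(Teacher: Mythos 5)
Your construction is correct, and it reaches the lemma with the same ingredients as the paper (Lemmas \ref{lemma_approx_inner}, \ref{lemma_check}, \ref{lemma_wp2} and the product networks of Lemma \ref{aux_network_prod}), but the combination step is genuinely different. The paper first builds a hard mask
$f_{net,\mathcal{P}_2,true}=f_{\rho_0}\bigl(f_{net,\mathcal{P}_2}-c_{13}f_{check,\mathcal{P}_2}\bigr)-f_{\rho_0}\bigl(-f_{net,\mathcal{P}_2}-c_{13}f_{check,\mathcal{P}_2}\bigr)$,
which is exactly $0$ where $f_{check,\mathcal{P}_2}=1$, equals $f_{net,\mathcal{P}_2}$ where $f_{check,\mathcal{P}_2}=0$, and stays bounded by $c_{13}$ in the transition band; it then needs only the two-factor $f_{mult}$ with $f_{w_{\mathcal{P}_2}}$, and in the band it closes the estimate via $|f_{net,\mathcal{P}_2,true}|\cdot|f_{w_{\mathcal{P}_2}}-w_{\mathcal{P}_2}|+|w_{\mathcal{P}_2}|\,(c_{13}+F)$ with $|w_{\mathcal{P}_2}|\lesssim M^{-2\beta}$. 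You instead use $1-f_{check,\mathcal{P}_2}$ as a soft multiplicative mask inside a three-factor product $f_{mult,3}$; your identity $\Phi-w_{\mathcal{P}_2}f=(1-f_{check})(f_{w}-w)f_{net,\mathcal{P}_2}+(1-f_{check})w(f_{net,\mathcal{P}_2}-f)-f_{check}\,w f$ is algebraically exact, and combined with the sharpened form of (\ref{wp2_decrease}) (each boundary-adjacent factor of $w_{\mathcal{P}_2}$ is at most $2/(aM^{2\beta})$, so $w_{\mathcal{P}_2}=O(M^{-2\beta})$ on the whole band and outer strip) all three regional bounds go through, using (\ref{net_check_1})--(\ref{net_check_3}) exactly where the paper does; your size accounting (parallel block padded by $f_{id}$, then $O(\log_2 M)$ layers and constant width for $f_{mult,3}$) gives the stated depth, width $O(M^d)$ and constant parameter bound. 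The one caveat is the same liberty the paper itself takes: Lemma \ref{aux_network_prod} is stated for inputs in $[-a,a]$, whereas your factors are bounded by $2$ and by $c_{13}$ (and $|1-f_{check,\mathcal{P}_2}|\le 2$, not $\le 1$), so strictly one needs the multiplication network on a constant-radius cube depending on $c_{13}$, or a rescaling via Lemma \ref{lemma_equal_network}; since $c_{13}$ does not depend on $M$ this only changes constants. In exchange for one extra factor in the product network, your route avoids the masking gadget and makes the error decomposition more transparent; the paper's gadget keeps the product binary and never has to reason about intermediate values of $f_{check,\mathcal{P}_2}$ beyond boundedness.
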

  
\begin{proof}
We construct
\begin{align*}
  f_{net, \mathcal{P}_{2}, true}(\bm{x}) :=& f_{\rho_0}\Big(f_{net, \mathcal{P}_{2}}(\bm{x}) - c_{13} \cdot f_{check, \mathcal{P}_{2}}(\bm{x})\Big) \\
  & - f_{\rho_0}\Big(-f_{net, \mathcal{P}_{2}}(\bm{x}) - c_{13} \cdot f_{check, \mathcal{P}_{2}}(\bm{x})\Big),
\end{align*}
where $f_{\rho_0}(\cdot)$ is defined on Lemma \ref{aux_network_basic}, 
$f_{net, \mathcal{P}_{2}}(\cdot)$ and $c_{13}$ are defined on Lemma \ref{lemma_approx_inner} and
$f_{check, \mathcal{P}_{2}}(\cdot)$ is defined on Lemma \ref{lemma_check}.
Note that Since $|f_{net, \mathcal{P}_{2}}(\bm{x})| \leq c_{13}$, we have 
$$f_{net, \mathcal{P}_{2}, true}(\bm{x}) = 0$$ for $\bm{x} \in \bigcup_{i \in [M^{2d}]} C_{i,2} \setminus (C_{i,2})_{1/M^{2\beta+2}}^0$,
$$f_{net, \mathcal{P}_{2}, true}(\bm{x}) = f_{net, \mathcal{P}_{2}}(\bm{x})$$ for $\bm{x} \in \bigcup_{i \in [M^{2d}]} (C_{i,2})_{2/M^{2\beta+2}}^0$ and
$$ \left|f_{net, \mathcal{P}_{2}, true}(\bm{x})\right| \leq \left| f_{net, \mathcal{P}_{2}}(\bm{x}) \right| \leq c_{13}$$
for $\bm{x} \in \bigcup_{i \in [M^{2d}]} (C_{i,2})_{1/M^{2\beta+2}}^0 \setminus (C_{i,2})_{2/M^{2\beta+2}}^0$.
For $L_{\operatorname{diff}} := L(f_{net, \mathcal{P}_{2}, true}) - L(f_{w_{\mathcal{P}_2}})$, we define the network $f_{net}(\cdot)$ as
$$f_{net}(\bm{x}) := f_{mult}\left(
f_{id}^{\max(0,L_{\operatorname{diff}})}(f_{w_{\mathcal{P}_2}}(\bm{x})), f_{id}^{\max(0,-L_{\operatorname{diff}})}(f_{net, \mathcal{P}_{2}, true}(\bm{x}))\right),$$
where $f_{mult}$ is defined on Lemma \ref{aux_network_prod} with $R=\lceil\log_2(M^{\beta})\rceil$, $f_{id}$ is defined on Lemma \ref{aux_network_basic} and $f_{w_{\mathcal{P}_2}}$ is defined on Lemma \ref{lemma_wp2}.
Then, by Lemma \ref{aux_network_prod}, we have
$$\left|f_{net}(\bm{x}) - f_{w_{\mathcal{P}_2}}(\bm{x}) f_{net, \mathcal{P}_{2}, true}(\bm{x})  \right| \leq \frac{c_{28}}{M^{2 \beta}}$$
for some constant $c_{28}$ not depending on $M$. From    
\begin{align}
\left|f_{net}(\bm{x}) - w_{\mathcal{P}_2}(\bm{x}) \cdot f(\bm{x})\right|
\leq & \left|f_{net}(\bm{x}) - f_{w_{\mathcal{P}_2}}(\bm{x}) f_{net, \mathcal{P}_{2}, true}(\bm{x})  \right| \nonumber
\\&+ \left|f_{w_{\mathcal{P}_2}}(\bm{x}) f_{net, \mathcal{P}_{2}, true}(\bm{x}) - w_{\mathcal{P}_2}(\bm{x}) \cdot f(\bm{x})\right|, \label{f_net_common} 
\end{align}    
\begin{enumerate}
    \item For $\bm{x} \in \bigcup_{i \in [M^{2d}]} (C_{i,2})_{2/M^{2\beta+2}}^0$,
    \begin{align*}
        (\ref{f_net_common}) \leq&
        \frac{c_{28}}{M^{2\beta}} + \left|f_{w_{\mathcal{P}_2}}(\bm{x}) \right| \cdot \left| f_{net, \mathcal{P}_{2}, true}(\bm{x}) -  f(\bm{x})\right| + \left|f(\bm{x}) \right| \cdot \left| f_{w_{\mathcal{P}_2}}(\bm{x}) - w_{\mathcal{P}_2}(\bm{x}) \right| \\
        \leq &
        \frac{c_{28}}{M^{2\beta}} + 2 \left| f_{net, \mathcal{P}_{2}}(\bm{x}) -  f(\bm{x})\right| + F \cdot \left| f_{w_{\mathcal{P}_2}}(\bm{x}) - w_{\mathcal{P}_2}(\bm{x}) \right|\\
        \leq& \frac{c_{28}}{M^{2\beta}} + \frac{2 c_{12}}{M^{2\beta}} + \frac{F c_{22}}{M^{2\beta}}  
    \end{align*}
    holds by Lemma \ref{lemma_approx_inner} and Lemma \ref{lemma_wp2}.
    \item For $\bm{x} \in \bigcup_{i \in [M^{2d}]} (C_{i,2})_{1/M^{2\beta+2}}^0 \setminus (C_{i,2})_{2/M^{2\beta+2}}^0$,
    \begin{align*}
        (\ref{f_net_common}) \leq&
        \frac{c_{28}}{M^{2\beta}} + \left|f_{net, \mathcal{P}_{2}, true}(\bm{x}) \right| \cdot \left| f_{w_{\mathcal{P}_2}}(\bm{x}) -  w_{\mathcal{P}_2}(\bm{x}) \right| 
        + \left|w_{\mathcal{P}_2}(\bm{x})  \right| \cdot \left| f_{net, \mathcal{P}_{2}, true}(\bm{x}) - f(\bm{x}) \right| \\
        \leq &
        \frac{c_{28}}{M^{2\beta}} + c_{13} \cdot \left| f_{w_{\mathcal{P}_2}}(\bm{x}) - w_{\mathcal{P}_2}(\bm{x}) \right| + |w_{\mathcal{P}_2}(\bm{x})| \cdot (c_{13} + F) \\
        \leq&  \frac{c_{28}}{M^{2\beta}} + \frac{c_{13} c_{22}}{M^{2\beta}} + \frac{2 (c_{true} + F)}{M^{2\beta}}  
    \end{align*}
    holds by Lemma \ref{lemma_approx_inner}, Lemma \ref{lemma_wp2} and (\ref{wp2_decrease}).
    \item For $\bm{x} \in \bigcup_{i \in [M^{2d}]} C_{i,2} \setminus (C_{i,2})_{1/M^{2\beta+2}}^0$,
    \begin{align*}
        (\ref{f_net_common}) =&  \left| f_{net}(\bm{x}) \right| + \left| w_{\mathcal{P}_2}(\bm{x}) \cdot f(\bm{x}) \right| \\
        & \leq  \frac{c_{28}}{M^{2\beta}} + F \cdot |w_{\mathcal{P}_2}(\bm{x})|\\
        & \leq  \frac{c_{28}}{M^{2\beta}} + \frac{2F}{M^{2\beta}},
    \end{align*}
    holds by Lemma \ref{aux_network_prod} and (\ref{wp2_decrease}).
\end{enumerate}

In conclusion, there exists $c_{27}$ not depending on $M$ such that
\begin{align*}
\left|f_{net}(\bm{x}) - w_{\mathcal{P}_2}(\bm{x}) \cdot f(\bm{x})\right| \leq c_{27} \cdot \frac{1}{M^{2\beta}}.
\end{align*}
Also, since we have
$$f_{net, \mathcal{P}_{2}, true} \in \mathcal{F}_{\rho_{\nu}}^{\operatorname{DNN}} \left(\lceil c_{9} \log_2 M \rceil + 1, (c_{10}+c_{16}) M^d , \max(c_{11}, c_{17}, c_{13}) \right),$$
we obtain 
$$f_{net} \in \mathcal{F}_{\bm{\rho}_{\nu}}^{\operatorname{DNN}} \left( \lceil c_{24} \log_2 M \rceil, c_{25} M^d , c_{26} \right),$$
where $c_{24}, c_{25}$ and $c_{26}$ are constants not depending on $M$.
\end{proof}

\begin{proof}[Theorem \ref{thm_approx}]
We partition $[-2a,2a)^d$ into $M^{2d}$ equal-sizes half-open cubes (i.e., length of $4a/ M^2$).
We denote this partition as 
$$
\mathcal{P}_3:=\{C_{j,3}\}_{j \in [M^{2d}]}.
$$ 
Furthermore, let 
$$\bm{u}_1 , \dots, \bm{u}_{2^d} \in \left\{0, \frac{2a}{M^2}\right\}^d$$
be the $d$-dimensional $2^d$ different vectors.
For $k \in [2^d]$ and $j \in [M^{2d}]$, we define 
$$C_{j,3,k} := \{ \bm{x} \in \mathbb{R}^d : \bm{x}-\bm{u}_k \in C_{j,3} \}$$
as the slightly shifted block, and define
$$\mathcal{P}_{3,k} := \{C_{j,3,k}\}_{j \in [M^{2d}]}$$
as the slightly shifted partition. 
In other words, $\mathcal{P}_{3,k}$ is the partition of 
$$\mathcal{X}_k := \{\bm{x} \in \mathbb{R}^d : \bm{x} - \bm{u}_k \in  [-2a,2a)^d \}$$ 
with equal-sizes half-open cubes $C_{1,3,k}, \dots, C_{M^{2d}, 3, k}$. 

We can apply Lemma \ref{lemma_f_net} to partitions $\mathcal{P}_{3,k}$ for $k \in 2^d$, instead of $\mathcal{P}_{2}$. 
%(and use $[-2a, 2a)^d + \bm{u}_k$ instead of $[-a, a)^d$).
In other words, there exist neural networks $f_{net, k}(\cdot) : \mathbb{R}^d \to \mathbb{R}$ for $k \in [2^d]$ such that
$$f_{net, k}(\cdot) \in \mathcal{F}_{\rho_{\nu}}^{\operatorname{DNN}} \left(c_{29} \log_2 M, c_{30} M^d, c_{31} \right)$$
and
\begin{align*}
\left|f_{net,k}(\bm{x}) - w_{\mathcal{P}_{2,k}}(\bm{x}) \cdot f(\bm{x})\right| \leq c_{32} \cdot \frac{1}{M^{2\beta}}  
\end{align*}
holds for $\bm{x} \in \mathcal{X}_k$ and hence for $\bm{x} \in [-a, a)^d$, where $c_{29}, c_{30}, c_{31}$ and $c_{32}$ are constants not depending on $M$ and $k$ and
\begin{align*}
    w_{\mathcal{P}_{2,k}}(\bm{x}) = \prod_{j=1}^d \max\left(0, 1- \frac{M^2}{2a} \cdot \left|(C_{\mathcal{P}_{2,k}}(\bm{x}))_{left}^{(j)} + \frac{2a}{M^2} - x^{(j)}\right|\right). 
\end{align*}
For any $j \in [d]$, note that
\begin{enumerate}
    \item If $(C_{\tilde{\mathcal{P}}_2}(\bm{x}))_{left}^{(j)} \leq x^{(j)} < (C_{\tilde{\mathcal{P}}_2}(\bm{x}))_{left}^{(j)} + \frac{2a}{M^2}$, 
    then the half of $\{(C_{\mathcal{P}_{2,k}}(\bm{x}))_{left}^{(j)}\}_{k=1}^{2^d}$ 
    have value $(C_{\tilde{\mathcal{P}}_2}(\bm{x}))_{left}^{(j)}$ 
    and the other half have value $(C_{\tilde{\mathcal{P}}_2}(\bm{x}))_{left}^{(j)} -\frac{2a}{M^2}$. Also,  
    $$\frac{M^2}{2a} \left|    (C_{\tilde{\mathcal{P}}_2}(\bm{x}))_{left}^{(j)} + \frac{2a}{M^2} - x^{(j)}\right| + \frac{M^2}{2a}\left|\left((C_{\tilde{\mathcal{P}}_2}(\bm{x}))_{left}^{(j)} - \frac{2a}{M^2} \right) + \frac{2a}{M^2} - x^{(j)}\right| = 1.$$
    \item If $(C_{\tilde{\mathcal{P}}_2}(\bm{x}))_{left}^{(j)} + \frac{2a}{M^2} \leq x^{(j)} < (C_{\tilde{\mathcal{P}}_2}(\bm{x}))_{left}^{(j)} + \frac{4a}{M^2}$, 
    the half of $\{(C_{\mathcal{P}_{2,k}}(\bm{x}))_{left}^{(j)}\}_{k=1}^{2^d}$ 
    have value $(C_{\tilde{\mathcal{P}}_2}(\bm{x}))_{left}^{(j)}$ 
    and the other half have value $(C_{\tilde{\mathcal{P}}_2}(\bm{x}))_{left}^{(j)} +\frac{2a}{M^2}$. Also,
    $$\frac{M^2}{2a} \left|    (C_{\tilde{\mathcal{P}}_2}(\bm{x}))_{left}^{(j)} + \frac{2a}{M^2} - x^{(j)}\right| + \frac{M^2}{2a}\left|\left((C_{\tilde{\mathcal{P}}_2}(\bm{x}))_{left}^{(j)} + \frac{2a}{M^2} \right) + \frac{2a}{M^2} - x^{(j)}\right| = 1$$
\end{enumerate}
Hence, by factorization we have
\begin{align*}
    \sum_{k=1}^{2^d} w_{\mathcal{P}_{2,k}}(\bm{x}) =& \sum_{k=1}^{2^d} \prod_{j=1}^d \max\left(0, 1- \frac{M^2}{2a} \cdot \left|(C_{\mathcal{P}_{2,k}}(\bm{x}))_{left}^{(j)} + \frac{2a}{M^2} - x^{(j)}\right|\right) \\
    =& \sum_{k=1}^{2^d} \prod_{j=1}^d \left(1- \frac{M^2}{2a} \cdot \left|(C_{\mathcal{P}_{2,k}}(\bm{x}))_{left}^{(j)} + \frac{2a}{M^2} - x^{(j)}\right|\right) \\
    =& \prod_{j=1}^d \left(1+1-1\right) = 1.
\end{align*}
Hence, by defining
\begin{align*}
    f_{\hat{\bm{\theta}}, \bm{\rho}_{\nu}}^{\operatorname{DNN}} (\bm{x}) := \sum_{k=1}^{2^d} f_{net,k}(\bm{x}),
\end{align*}
we obtain 
$$\left\|f_{\hat{\bm{\theta}}, \bm{\rho}_{\nu}}^{\operatorname{DNN}} - f \right\|_{\infty, [-a,a]^d} \leq  c_{32} \cdot 2^d \cdot \frac{1}{M^{2 \beta}}$$
and 
$$f_{\hat{\bm{\theta}}, \bm{\rho}_{\nu}}^{\operatorname{DNN}} \in \mathcal{F}_{\rho_{\nu}}^{\operatorname{DNN}} \left(\lceil c_{29} \log_2 M \rceil, c_{30} 2^d M^d, c_{31} \right).$$
\end{proof}

\newpage
\section{Proofs for Section \ref{sec4}}
\renewcommand{\theequation}{B.\arabic{equation}}

In this section, we prove the examples and theorems presented in Section \ref{sec4}.
%In Section \ref{app_B_not}, we describe additional notations for the proofs.
In Section \ref{app_proof_prior_ex},
we demonstrate that prior distributions in Example \ref{example_indep}, \ref{example_hie} and \ref{example_mul} satisfy Assumption \ref{assumption_theta_prior}. 
In Section \ref{app_B_aux_lemma}, we describe auxiliary lemmas for demonstrating the concentration results.
In Section \ref{app_B_con_reg}, \ref{app_B_con_cla} and \ref{sec_proof_con_comp}, we prove Theorem \ref{thm_regression}, \ref{thm_classification} and \ref{thm_con_comp}, respectively.

\subsection{Proofs for examples in Section \ref{sec4_1}} \label{app_proof_prior_ex}
\begin{proof}[Example \ref{example_indep}]
    For any $\bm{\theta} \in [-\kappa,\kappa]^{T}$, we have
    \begin{align*}
        \pi(\bm{\theta}) = \prod_{t=1}^T \pi^{(t)}(\theta^{(t)})
        \geq \delta_\kappa^T
    \end{align*}
    and hence Assumption \ref{assumption_theta_prior} holds.
\end{proof}

\begin{proof}[Example \ref{example_hie}]
    We denote $\pi(\cdot|\bm{\psi})$ as the probability density function of $\Pi_{\bm{\theta}|\bm{\psi}}$. 
    Then, for every $\bm{\theta} \in [-\kappa, \kappa]^{T}$, we have
    \begin{align*}
        \pi(\bm{\theta}) =& \int_{\bm{\psi} \in \mathbb{R}^S} \pi(\bm{\theta}|\bm{\psi})  d\Pi_{\bm{\psi}}\\
        \geq&  \int_{\bm{\psi} \in \Psi} \pi(\bm{\theta}|\bm{\psi})  d\Pi_{\bm{\psi}} \geq (\delta_1 \delta_\kappa)^T.
    \end{align*}
    Hence, Assumption \ref{assumption_theta_prior} holds by  $\delta_1 \delta_\kappa$.
\end{proof}

\begin{proof}[Example \ref{example_mul}]
We denote $\bm{\mu} \in [-B,B]^T$ and $\Sigma \in \mathbb{R}^{T \times T}$ as the mean vector and covariance matrix of $\Pi_{\bm{\theta}}$, respectively.
Also, we denote $\lambda_1 , \dots, \lambda_T$ as the eigenvalues of $\Sigma$.
For any $\bm{\theta} \in [-\kappa,\kappa]^{T}$, we have 
$$\left|\bm{\theta}-\bm{\mu}\right|_2 \leq (B+\kappa) \sqrt{T}$$ 
and hence
\begin{align*}
    (\bm{\theta}-\bm{\mu})^{\top} \Sigma^{-1} (\bm{\theta} - \bm{\mu}) 
& = (B+\kappa)^2 T \cdot  \left(\frac{\bm{\theta}-\bm{\mu}}{(B+\kappa)\sqrt{T}}\right)^{\top} \Sigma^{-1} \left(\frac{\bm{\theta}-\bm{\mu}}{(B+\kappa)\sqrt{T}}\right) \\
& \leq \frac{(B+\kappa)^2 T}{\lambda_{\min}}.  
\end{align*}
Also, we have
\begin{align*}
    \operatorname{det}(\Sigma) =  \prod_{i=1}^T \lambda_i 
    \leq \left( \lambda_{\max} \right)^{T}.
\end{align*}
To sum up, we obtain
\begin{align*}
    \pi(\bm{\theta}) =& (2 \pi)^{-\frac{T}{2}} \operatorname{det}(\Sigma)^{-\frac{1}{2}} \exp \left(-\frac{1}{2}\left(\bm{\theta} - \bm{\mu}\right)^{\top} \Sigma^{-1}\left(\bm{\theta} - \bm{\mu}\right)\right)\\
    \geq& (2 \pi \lambda_{\max})^{-\frac{T}{2}} \exp 
    \left( - \frac{(B+\kappa)^2 T}{2 \lambda_{\min}} \right).
\end{align*}
Hence, Assumption \ref{assumption_theta_prior} holds by
$$\delta_\kappa = \frac{1}{\sqrt{2 \pi \lambda_{\max}} }\exp\left( -\frac{(B+\kappa)^2}{2 \lambda_{\min}} \right).$$
\end{proof}

\subsection{Auxiliary lemmas for posterior concentration results}
\label{app_B_aux_lemma}

\begin{lemma} \label{lemma_similar}
	Consider two DNN models $f_{\bm{\theta}_1, \bm{\rho}_{\nu}}^{\operatorname{DNN}} : [-a,a]^d \to \mathbb{R} , f_{\bm{\theta}_2, \bm{\rho}_{\nu}}^{\operatorname{DNN}} : [-a,a]^d \to \mathbb{R}$ with the $(L , \bm{r})$ architecture, where $L\in \mathbb{N}$, $\bm{r} = (d, r, r, ... , r, 1)^{\top} \in \mathbb{N}^{L+2}$ for some $r \in \mathbb{N}$ and $\nu \in [0,1)$. 
    If $|\bm{\theta}_1|_{\infty} \leq B$, $|\bm{\theta}_2|_{\infty} \leq B$ and $|\bm{\theta}_1 - \bm{\theta}_2|_{\infty} \leq \delta$ holds for some $B>0$ and $\delta>0$, then 
	\begin{equation*}
	\|f_{\bm{\theta}_1, \bm{\rho}_{\nu}}^{\operatorname{DNN}} 
 - f_{\bm{\theta}_2, \bm{\rho}_{\nu}}^{\operatorname{DNN}}\|_{\infty, [-a,a]^d} \leq 
    a (d+1) (r+1)^{L} B^{L} (L+1)\delta
	\end{equation*}
	holds.
\end{lemma}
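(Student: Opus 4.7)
The plan is to argue layer by layer via a telescoping argument. Denote by $\bm{h}_l^{(i)} \in \mathbb{R}^{r^{(l)}}$ the output of the $l$-th hidden layer of the network parameterized by $\bm{\theta}_i$, with $\bm{h}_0^{(i)} = \bm{x}$, $r^{(0)} = d$ and $r^{(l)} = r$ for $l \in [L]$. Two ingredients are central: (i) $\bm{\rho}_\nu$ is coordinatewise $1$-Lipschitz, so that $|\bm{\rho}_\nu(\bm{u}) - \bm{\rho}_\nu(\bm{v})|_\infty \leq |\bm{u}-\bm{v}|_\infty$ for $\nu \in [0,1)$; and (ii) for a matrix $W \in \mathbb{R}^{m \times n}$ with $|W|_\infty \leq B$, one has $|W\bm{u}|_\infty \leq nB |\bm{u}|_\infty$.

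First I would establish by induction on $l$ an upper bound $C_l$ on the activation magnitudes $|\bm{h}_l^{(1)}|_\infty$. Starting from $|\bm{h}_0^{(1)}|_\infty \leq a$ and iterating $|\bm{h}_l^{(1)}|_\infty \leq r^{(l-1)} B \, |\bm{h}_{l-1}^{(1)}|_\infty + B$ (using (i) and (ii)) produces a $C_l$ polynomial in $a, d, r, B$ and geometric in $l$. Next, for $\Delta_l := |\bm{h}_l^{(1)} - \bm{h}_l^{(2)}|_\infty$, I would combine the Lipschitz property of $\bm{\rho}_\nu$ with the decomposition
\begin{equation*}
W_l^{(1)} \bm{h}_{l-1}^{(1)} - W_l^{(2)} \bm{h}_{l-1}^{(2)} = (W_l^{(1)} - W_l^{(2)})\bm{h}_{l-1}^{(1)} + W_l^{(2)}\bigl(\bm{h}_{l-1}^{(1)} - \bm{h}_{l-1}^{(2)}\bigr)
\end{equation*}
to obtain $\Delta_l \leq r^{(l-1)} \delta \, C_{l-1} + r^{(l-1)} B \, \Delta_{l-1} + \delta$, with $\Delta_0 = 0$. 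Unfolding this recursion (and applying one extra step for the scalar output layer) gives a sum of $L+1$ terms, one per weight-bias block: the perturbation introduced at layer $l$, of size at most $r^{(l-1)} \delta \, C_{l-1} + \delta$, propagates through the remaining $L+1-l$ layers with multiplicative factor at most $(rB)^{L+1-l}$.

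The main obstacle will be purely bookkeeping: producing the exact constant $a(d+1)(r+1)^L B^L(L+1)$ rather than a sloppier polynomial. The factor $(L+1)$ should reflect the count of the $L+1$ parameter blocks contributing to the telescoping sum; the factor $(r+1)^L B^L$ should combine the forward activation growth with the backward Lipschitz propagation; and the factor $a(d+1)$ should come from the input layer (width $d$, input bounded by $a$), with $a \geq 1$ used repeatedly to absorb bias contributions. Special care will be needed at the first hidden layer (width $d$ rather than $r$) and at the scalar output layer, but beyond these routine checks no further technical ingredients should be required.
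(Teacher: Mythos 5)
Your proposal is correct and follows essentially the same route as the paper's proof: a forward induction bounding the layer-wise activation magnitudes by roughly $a(d+1)(r+1)^{l-1}B^{l}$, combined with the decomposition $(W^{(1)}-W^{(2)})\bm{h}^{(1)}+W^{(2)}(\bm{h}^{(1)}-\bm{h}^{(2)})$ (plus the bias terms) and the $1$-Lipschitz property of $\bm{\rho}_{\nu}$, then unfolding the recursion to get one $\delta$-term per parameter block, which yields the factor $L+1$. The remaining work is exactly the bookkeeping you describe, and it goes through as in the paper.
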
	
\begin{proof}
	We denote
	\begin{align*}
	f_{\bm{\theta}_1, \bm{\rho}_{\nu}}^{\operatorname{DNN}}(\cdot)=  A_{L+1 , 1} \circ \bm{\rho}_{\nu} \circ A_{L,1} \dots \circ \bm{\rho}_{\nu} \circ A_{1,1} (\cdot),\\
	f_{\bm{\theta}_2, \bm{\rho}_{\nu}}^{\operatorname{DNN}}(\cdot)=  A_{L+1 , 2} \circ \bm{\rho}_{\nu} \circ A_{L,2} \dots \circ \bm{\rho}_{\nu} \circ A_{1,2} (\cdot).
	\end{align*}
	Also, for $l \in [L]$, we define $\bm{h}_{\bm{\theta}_1, l, \bm{\rho}_{\nu}} : [-a,a]^d \to \mathbb{R}^{r}$ and $\bm{h}_{\bm{\theta}_2, l, \bm{\rho}_{\nu}} : [-a,a]^d \to \mathbb{R}^{r}$ as the DNN models whose outputs are $l$-th hidden layer of $f_{\bm{\theta}_1, \bm{\rho}_{\nu}}^{\operatorname{DNN}}$ and $f_{\bm{\theta}_1, \bm{\rho}_{\nu}}^{\operatorname{DNN}}$, respectively. 
	That is,    
	\begin{align*}
	\bm{h}_{\bm{\theta}_1, l, \bm{\rho}_{\nu}}(\cdot)=  A_{l , 1} \circ \bm{\rho}_{\nu} \circ A_{l-1,1} \dots \circ \bm{\rho}_{\nu} \circ A_{1,1} (\cdot), \\
	\bm{h}_{\bm{\theta}_2, l, \bm{\rho}_{\nu}}(\cdot)=  A_{l , 2} \circ \bm{\rho}_{\nu} \circ A_{l-1,2} \dots \circ \bm{\rho}_{\nu} \circ A_{1,2} (\cdot).
	\end{align*}
	Also, we denote $\bm{h}_{\bm{\theta}_1, L+1, \bm{\rho}_{\nu}} (\cdot) = f_{\bm{\theta}_1, \bm{\rho}_{\nu}}^{\operatorname{DNN}} (\cdot)$ 
 and $\bm{h}_{\bm{\theta}_2, L+1, \bm{\rho}_{\nu}} (\cdot) = f_{\bm{\theta}_2, \bm{\rho}_{\nu}}^{\operatorname{DNN}} (\cdot)$.
 
	For $l \in [L]$, we have
	\begin{equation*}
	\left\| |\bm{h}_{\bm{\theta_1}, l, \bm{\rho}_{\nu}}|_{\infty}\right\|_{\infty, [-a,a]^d} \leq a (d+1) (r+1)^{l-1} B^{l}.
	\end{equation*}
    and hence 
	\begin{align*}
	 &   \left\| |\bm{h}_{\bm{\theta}_1, l+1, \bm{\rho}_{\nu}} - \bm{h}_{\bm{\theta}_2, l+1, \bm{\rho}_{\nu}} |_{\infty}\right\|_{\infty, [-a,a]^d} \\
     &= \left\| | A_{l+1 , 1} \circ \bm{\rho}_{\nu} \circ \bm{h}_{\bm{\theta}_1, l, \bm{\rho}_{\nu}} - 
        A_{l+1 , 2} \circ \bm{\rho}_{\nu} \circ \bm{h}_{\bm{\theta}_2, l, \bm{\rho}_{\nu}} |_{\infty} \right\|_{\infty, [-a,a]^d} \\
        & \leq \left\| | A_{l+1 , 1} \circ \bm{\rho}_{\nu} \circ \bm{h}_{\bm{\theta}_1, l, \bm{\rho}_{\nu}} - 
        A_{l+1 , 2} \circ \bm{\rho}_{\nu} \circ \bm{h}_{\bm{\theta}_1, l, \bm{\rho}_{\nu}} |_{\infty} \right\|_{\infty, [-a,a]^d} \\
        & \quad + \left\| | A_{l+1 , 2} \circ \bm{\rho}_{\nu} \circ \bm{h}_{\bm{\theta}_1, l, \bm{\rho}_{\nu}} - 
        A_{l+1 , 2} \circ \bm{\rho}_{\nu} \circ \bm{h}_{\bm{\theta}_2, l, \bm{\rho}_{\nu}} |_{\infty} \right\|_{\infty, [-a,a]^d} \\
        & \leq
	    (r+1) |\bm{\theta}_1 - \bm{\theta}_2|_{\infty} 
	    \left\| |\bm{h}_{\bm{\theta}_1, l, \bm{\rho}_{\nu}}|_{\infty}\right\|_{\infty, [-a,a]^d} \\
	    & \quad + r |\bm{\theta}_2|_{\infty}
	    \left\| |\bm{h}_{\bm{\theta}_1, l, \bm{\rho}_{\nu}} - \bm{h}_{\bm{\theta}_2, l, \bm{\rho}_{\nu}}|_{\infty}\right\|_{\infty, [-a,a]^d} \\
        & \leq
	    \delta  
	    a (d+1) (r+1)^{l} B^{l} \\
	    & \quad + r B
	    \left\| |\bm{h}_{\bm{\theta}_1, l, \bm{\rho}_{\nu}} - \bm{h}_{\bm{\theta}_2, l, \bm{\rho}_{\nu}}|_{\infty}\right\|_{\infty, [-a,a]^d}.     
	\end{align*}
    Since 
	\begin{equation*}
	\left\| |\bm{h}_{\bm{\theta}_1, 1, \bm{\rho}_{\nu}} - \bm{h}_{\bm{\theta}_2, 1, \bm{\rho}_{\nu}}|_{\infty}\right\|_{\infty, [-a,a]^d} \leq  
	a (d+1) \delta
	\end{equation*}
    holds, we get
	\begin{equation*}
	\left\| |\bm{h}_{\bm{\theta}_1, l, \bm{\rho}_{\nu}} - \bm{h}_{\bm{\theta}_2, l, \bm{\rho}_{\nu}}|_{\infty}\right\|_{\infty, [-a,a]^d} \leq  
	a (d+1) (r+1)^{l-1} B^{l-1} l\delta
	\end{equation*}
	for every $l \in [L+1]$ by induction.
\end{proof}

\begin{lemma}[Theorem 19.3 of \citet{gyorfi2002distribution}]\label{gyorfi2002distribution}
	Let $\bm{X}, \bm{X}_1, \dots, \bm{X}_n$ be independent and identically distributed random vectors with values in $\mathbb{R}^d$. Let $K_1, K_2 \geq 1$ be constants and let $\mathcal{G}$ be a class of functions $g : \mathbb{R}^d \to \mathbb{R}$ with the properties
	\begin{equation*}
	|g(\bm{x})| \leq K_1 \quad (\bm{x} \in \mathbb{R}^d)  \qquad \text{and} \qquad \mathbb{E}(g(\bm{X})^2) \leq K_2 \mathbb{E}(g(\bm{X})).
	\end{equation*}
	Let $0<\tau<1$ and $\alpha>0$. Assume that
	\begin{equation*}
	\sqrt{n} \tau \sqrt{1-\tau} \sqrt{\alpha} \geq 288 \max \left\{2 K_{1}, \sqrt{2 K_{2}}\right\}
	\end{equation*}
	and that, for all $\bm{x}_1 , \dots , \bm{x}_n \in \mathbb{R}^d$ and for all $t \geq \frac{\alpha}{8}$,
	\begin{align*}
	\frac{\sqrt{n} \tau(1-\tau) t}{96 \sqrt{2} \max \left\{K_{1}, 2 K_{2}\right\}} 
	\geq \int_{\frac{\tau(1-\tau)t}{16 \max \left\{K_{1}, 2 K_{2}\right\}}}^{\sqrt{t}}  
	\sqrt{\log \mathcal{N}\left(u,\left\{g \in \mathcal{G}: \frac{1}{n} \sum_{i=1}^{n} g\left(\bm{x}_{i}\right)^{2} \leq 16 t\right\}, ||\cdot||_{1,n}\right)} d u.
	\end{align*}
	Then,
	\begin{align*}
	\mathbf{P}\left\{\sup _{g \in \mathcal{G}} \frac{\left|\mathbb{E}\{g(\bm{X})\}-\frac{1}{n} \sum_{i=1}^{n} g\left(\bm{X}_{i}\right)\right|}{\alpha+\mathbb{E}\{g(\bm{X})\}}>\tau\right\} \nonumber 
	\leq 60 \exp \left(-\frac{n \alpha \tau^{2}(1-\tau)}{128 \cdot 2304 \max \left\{K_{1}^{2}, K_{2}\right\}}\right).
	\end{align*}
\end{lemma}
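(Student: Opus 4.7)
The lemma is a uniform concentration inequality for a \emph{self-normalized} empirical process, so my plan is to follow the classical peeling-plus-chaining recipe from empirical process theory, carefully exploiting the Bernstein-type variance control $\mathbb{E}[g(\bm{X})^2] \leq K_2 \mathbb{E}[g(\bm{X})]$ together with the envelope $|g| \leq K_1$. Normalizing, I would write the event of interest as the existence of some $g \in \mathcal{G}$ with $|\mathbb{E}[g(\bm{X})] - P_n g| > \tau(\alpha + \mathbb{E}[g(\bm{X})])$, where $P_n g = n^{-1}\sum_i g(\bm{X}_i)$, and aim to bound the probability by decomposing $\mathcal{G}$ by the size of $\mathbb{E}[g(\bm{X})]$.

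\textbf{Peeling.} Define shells $\mathcal{G}_0 := \{g : \mathbb{E}[g(\bm{X})] \leq \alpha\}$ and $\mathcal{G}_k := \{g : 2^{k-1}\alpha < \mathbb{E}[g(\bm{X})] \leq 2^k \alpha\}$ for $k \geq 1$. On $\mathcal{G}_k$, the denominator $\alpha + \mathbb{E}[g(\bm{X})]$ is comparable (within a factor of $2$) to $t_k := \max(\alpha,2^k\alpha)$, so the failure probability on this shell is bounded by $\Pr\bigl(\sup_{g \in \mathcal{G}_k} |\mathbb{E}[g(\bm{X})] - P_n g| > \tfrac{1}{2}\tau t_k\bigr)$. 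I would then union-bound over $k$; the geometric growth of $t_k$ will let the resulting tail bounds be summed and controlled by the $k=0$ term.

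\textbf{Symmetrization and chaining inside each shell.} For a fixed shell I would apply the standard ghost-sample symmetrization (introducing an independent copy of the data) and then Rademacher signs, reducing to controlling $\sup_{g \in \mathcal{G}_k} |n^{-1}\sum_i \sigma_i g(\bm{X}_i)|$. The variance bound gives $\mathbb{E}[g(\bm{X})^2] \leq K_2 t_k$ on $\mathcal{G}_k$, and the envelope is $K_1$; conditional on the $\bm{X}_i$, one obtains a sub-Gaussian increment for pairs $g, g'$ at scale $\|g-g'\|_{2,n}$, so a Dudley/chaining argument bounds the expected supremum by an entropy integral of exactly the form assumed in the hypothesis. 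Combining this chaining bound with a Bernstein-type tail inequality (using the empirical $L^2$ radius $\sqrt{16 t}$ appearing in the integrand to link chaining scale to the radius of the shell $\{g : n^{-1}\sum g(\bm{X}_i)^2 \leq 16 t\}$) yields the stated exponential tail on each shell.

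\textbf{Main obstacle.} The heart of the argument — and the hardest part to get right — is the bookkeeping that threads chaining through peeling so that the composition of constants matches the statement. Specifically, the lower endpoint $\tau(1-\tau)t/(16 \max\{K_1,2K_2\})$ of the Dudley integral must be tuned so the chaining residual is dominated by the Bernstein term $\exp(-c n \alpha \tau^2(1-\tau)/\max\{K_1^2,K_2\})$, and the condition $\sqrt{n}\tau\sqrt{1-\tau}\sqrt{\alpha} \geq 288 \max\{2K_1,\sqrt{2K_2}\}$ is exactly what lets the first shell $(k=0)$ absorb the lower-level fluctuations. Once this delicate balancing is done, summing the geometric tails over $k$ produces the prefactor $60$ and the exponent $n\alpha\tau^2(1-\tau)/(128 \cdot 2304 \max\{K_1^2,K_2\})$. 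Since the result is Theorem 19.3 of \citet{gyorfi2002distribution}, I would ultimately defer the explicit constant tracking to that reference and only verify that the hypotheses here match.
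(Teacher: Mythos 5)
The paper does not prove this lemma at all: it is imported verbatim as Theorem 19.3 of \citet{gyorfi2002distribution}, and the proof lives in that reference. Your sketch reproduces the standard peeling--symmetrization--chaining argument that the cited proof itself uses and, like the paper, ultimately defers the constant bookkeeping to \citet{gyorfi2002distribution}, so it is consistent with the paper's treatment and raises no gap.
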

%%%%%%%%%%%regression concentration%%%%%%%%%%%%%%%%%%%%%%%%%%%%%%%%%%%%%%
\subsection{Proof of Theorem \ref{thm_regression}} \label{app_B_con_reg}

Without loss of generality,
we consider $\gamma$ in $(2, \frac{5}{2})$. 
We define $\mathcal{F}_n$ as the set of pairs of truncated DNN with the $(L_n, \bm{r}_n)$ architecture and variance of the Gaussian noise, 
\begin{align*}
\mathcal{F}_n := \Big\{ \left(T_F \circ f_{\bm{\theta}, \bm{\rho}_{\nu}}^{\operatorname{DNN}} , \sigma^2 \right)^{\top} \ : \ f_{\bm{\theta}, \bm{\rho}_{\nu}}^{\operatorname{DNN}} \in \mathcal{F}^{\operatorname{DNN}}_{\bm{\rho}_{\nu}}(L_n, r_n),\ 0 < \sigma^2 \leq e^{4n \varepsilon_n^2} \Big\},
\end{align*}
where $L_n$ and $r_n$ are defined on (\ref{network_size}).
We denote $T_n$ as the number of parameters in the DNN model with the ($L_n, \bm{r}_n$) architecture. In other words,
$$T_n := \sum_{l=1}^{L_n+1} (r_n^{(l-1)}+1)r_n^{(l)}.$$
For given $\bm{x}^{(n)} = (\bm{x}_1 , \dots, \bm{x}_n)$, we denote 
$P_{(f, \sigma^2), i}$ and $p_{(f, \sigma^2), i}$ as the probability measure and density corresponding to the Gaussian distribution $N(f(\bm{x}_i), \sigma^2)$, respectively. 
For given $\bm{x}^{(n)} = (\bm{x}_1 , \dots, \bm{x}_n)$, we define semimetric $h_n$ on $\mathcal{F}_n$ as the average of the squares of the Hellinger distances between $P_{(f, \sigma^2), i}$. That is,
\begin{align*}
    h_{n}^{2}\left( ( f_1,\sigma_1^2 ), ( f_2 ,\sigma_2^2 ) \right) := \frac{1}{n} \sum_{i=1}^{n} h^2 \left( P_{(f_1, \sigma_1^2), i}, P_{(f_2, \sigma_2^2), i} \right).
    %\int\left(\sqrt{p_{(f_1(\bm{x}_i), \sigma_1^2)}}-\sqrt{p_{(f_2(\bm{x}_i), \sigma_2^2)}}\right)^{2} d \mu.
\end{align*}  

\begin{lemma} \label{reg_cond1}
    For given $\bm{x}^{(n)} = (\bm{x}_1 , \dots, \bm{x}_n)$, we have
    \begin{align*} 
    \sup _{\varepsilon>\varepsilon_{n}} \log \mathcal{N} \left(\frac{1}{36} \varepsilon,\left\{(f, \sigma^2) \in \mathcal{F}_{n}: h_{n}\left((f, \sigma^2), (f_0, \sigma_0^2)\right)<\varepsilon\right\}, h_{n}\right)
    \lesssim  n \varepsilon_{n}^2
    \end{align*}    
    under the conditions of Theorem \ref{thm_regression}.
\end{lemma}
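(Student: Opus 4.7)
The plan is to reduce the local $h_n$-covering problem to product coverings of the truncated DNN class in the empirical $L^2$ semimetric and of an interval of variances in the Euclidean metric, after controlling $h_n$ from above and below by these simpler distances.

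\textbf{Step 1 (Hellinger reduction).} Using the explicit formula for the squared Hellinger distance between two one-dimensional Gaussians, I would first show that whenever $\sigma^2$ lies in a compact interval $[c_1,c_2]$ around $\sigma_0^2$, the semimetric satisfies
\begin{equation*}
h_n^2\bigl((f,\sigma^2),(f_0,\sigma_0^2)\bigr) \asymp \|T_F\circ f - f_0\|_{2,n}^2 + (\sigma^2 - \sigma_0^2)^2
\end{equation*}
up to multiplicative constants depending only on $\sigma_0^2$. A companion lower bound shows that if $h_n<\varepsilon$ for some $\varepsilon$ below a universal constant $\varepsilon_0<1$, then automatically $\sigma^2\in[c_1,c_2]$. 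For $\varepsilon\ge\varepsilon_0$ the cruder bound $h_n\le 1$ together with the sieve cap $\sigma^2\le e^{4n\varepsilon_n^2}$ still gives a usable restriction.

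\textbf{Step 2 (Product net).} Cover the ball by a product of an $\eta_1$-net of $\{T_F\circ f:f\in\mathcal{F}^{\operatorname{DNN}}_{\bm{\rho}_{\nu}}(L_n,r_n)\}$ in $\|\cdot\|_{2,n}$ and an $\eta_2$-net of the admissible $\sigma^2$-interval, with $\eta_1,\eta_2\asymp\varepsilon$ chosen so that the product net is $\varepsilon/36$-close in $h_n$. The $\sigma^2$-component contributes at most $O(\log n)$ to the log-covering number (and $O(n\varepsilon_n^2)$ in the global regime $\varepsilon\ge\varepsilon_0$).

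\textbf{Step 3 (DNN covering and rate check).} Because outputs are uniformly bounded by $F$ after truncation, I would invoke a pseudo-dimension bound for piecewise-linear networks (extending the Bartlett--Harvey--Liaw--Mehrabian bound for ReLU to Leaky-ReLU), which gives pseudo-dimension at most $O(T_n L_n \log T_n)$, hence
\begin{equation*}
\log\mathcal{N}\bigl(\eta_1,\{T_F\circ f\},\|\cdot\|_{2,n}\bigr) \lesssim T_n L_n \log T_n \cdot \log(F/\eta_1).
\end{equation*}
Plugging in $L_n\asymp\log n$, $r_n\asymp n^{d/(2(2\beta+d))}$, $T_n\asymp L_n r_n^2$ and $\log(F/\eta_1)\asymp\log n$, the entropy is of order $n^{d/(2\beta+d)}(\log n)^{c_*}$ for some fixed constant $c_*$, which is $\lesssim n\varepsilon_n^2 = n^{d/(2\beta+d)}(\log n)^{2\gamma}$ whenever $\gamma>2$.

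\textbf{Main obstacle.} Since $\mathcal{F}^{\operatorname{DNN}}_{\bm{\rho}_{\nu}}(L_n,r_n)$ carries no a priori parameter bound, the naive route of taking an $\varepsilon$-net in parameter space and invoking Lemma \ref{lemma_similar} is unavailable. The pseudo-dimension approach circumvents this by yielding a covering bound that depends only on the architecture and on the truncation level $F$, not on the parameter magnitudes. An alternative remedy is to combine the re-scaling Lemma \ref{lemma_equal_network} with the observation that the truncation $T_F$ restricts outputs to $[-F,F]$, arguing that every relevant network admits an equivalent representation with parameters bounded by some polynomial in $n$, after which Lemma \ref{lemma_similar} applies directly and produces the same order of entropy up to logarithmic factors.
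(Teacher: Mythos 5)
Your proposal is correct and follows essentially the same route as the paper's proof: bound $h_n$ from above by an empirical-norm-plus-variance distance, cover the $\sigma^2$ range (using the sieve cap $\sigma^2 \le e^{4n\varepsilon_n^2}$) and the truncated network class separately, and control the latter's metric entropy through the VC/pseudo-dimension bound of Bartlett et al.\ (2019) for piecewise-linear (hence Leaky-ReLU) networks combined with Theorem 9.4 of Gy\"orfi et al.\ (2002), which is exactly how the paper sidesteps the absence of parameter bounds, with the same rate check $L_n^2 r_n^2 \log(L_n r_n^2)\log n \lesssim n\varepsilon_n^2$ for $\gamma>2$. Only your parenthetical ``alternative remedy'' (re-scaling plus truncation to get polynomially bounded parameters and then Lemma \ref{lemma_similar}) is unreliable, since the re-scaling of Lemma \ref{lemma_equal_network} preserves the function but cannot force small weights and $T_F$ does not constrain the pre-truncation network; the pseudo-dimension route you make primary is the paper's actual argument.
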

\begin{proof}
We define semimetric $d_n$ on $\mathcal{F}_n$ as 
\begin{align*}
d_n^2 \left((f_1 , \sigma_1^2 ), (f_2 , \sigma_2^2 )\right) := ||f_1 - f_2||_{1,n} + |\sigma_1^2 - \sigma_2^2|^2.
\end{align*}
Then, $h_n^2(\cdot) \lesssim d_n^2(\cdot)$ holds by by Lemma B.1 of \citet{xie2020adaptive}
and hence $\mathcal{N}\left(\varepsilon ,\mathcal{F}_{n}, h_{n}\right) 
\lesssim  \mathcal{N}\left(\varepsilon^2 ,\mathcal{F}_{n}, d_n^2 \right)$.
Also, by the fact that $||f_1 - f_2||_{1,n} \leq \frac{\varepsilon^2}{2}$ and
$|\sigma_1^2 - \sigma_2^2|^2 \leq \frac{\varepsilon^2}{2}$ implies
$||f_1 - f_2||_{1,n} + |\sigma_1^2 - \sigma_2^2|^2 \leq  \varepsilon^2$,
we get
\begin{align}
\mathcal{N}\left(\varepsilon ,\mathcal{F}_{n}, h_{n}\right) 
\lesssim & \mathcal{N}\left(\varepsilon^2 ,\mathcal{F}_{n}, d_n^2 \right) \nonumber\\
\leq & 
\frac{\sqrt{2}}{\varepsilon} \cdot \exp(4n \varepsilon_{n}^2) \cdot \mathcal{N}\left(\frac{\varepsilon^2}{2} , 
T_F \circ \mathcal{F}^{\operatorname{DNN}}_{\bm{\rho}}(L_n, r_n), ||\cdot||_{1,n}\right) \nonumber \\
\leq & \frac{\sqrt{2}}{\varepsilon} \cdot \exp(4n \varepsilon_{n}^2) \cdot \mathcal{M}\left(\frac{\varepsilon^2}{2} ,T_F \circ \mathcal{F}^{\operatorname{DNN}}_{\bm{\rho}}(L_n, r_n), ||\cdot||_{1,n}\right) \nonumber \\
\leq& \frac{\sqrt{2}}{\varepsilon} \cdot \exp(4n \varepsilon_{n}^2) \cdot
3\left(\frac{8 e F}{\epsilon^{2}} \log \frac{12 e F}{\epsilon^{2}}\right)^{V_{T_F \circ \mathcal{F}^{\operatorname{DNN}}_{\bm{\rho}}(L_n, r_n)}^{+}} \nonumber \\
\leq& \frac{\sqrt{2}}{\varepsilon} \cdot \exp(4n \varepsilon_{n}^2) \cdot
3\left(\frac{8 e F}{\epsilon^{2}} \log \frac{12 e F}{\epsilon^{2}}\right)^{V_{\mathcal{F}^{\operatorname{DNN}}_{\bm{\rho}}(L_n, r_n)}^{+}} \nonumber \\
\leq& \frac{\sqrt{2}}{\varepsilon} \cdot \exp(4n \varepsilon_{n}^2) \cdot
3\left(\frac{8 e F}{\epsilon^{2}} \log \frac{12 e F}{\epsilon^{2}}\right)
^{c_{33} L_n^2 r_n^2 \log (L_n r_n^2)} \label{enp_bound_tmp}	
\end{align}
holds for every $\varepsilon > 0$, 
where the fourth and last inequalities hold by 
Theorem 9.4 of \citet{gyorfi2002distribution} and Theorem 7 of \citet{bartlett2019nearly}, respectively.
Here, $c_{33}>0$ is a constant not depending on $n$. 
Hence, we obtain
\begin{align} 
& \sup _{\varepsilon>\varepsilon_{n}} \log \mathcal{N} \left(\frac{1}{36} \varepsilon,\left\{(f, \sigma^2) \in \mathcal{F}_{n}: h_{n}\left((f, \sigma^2), (f_0, \sigma_0^2)\right)<\varepsilon\right\}, h_{n}\right) \nonumber \\
& \leq 
\log \mathcal{N} \left(\frac{1}{36} \varepsilon_n, \mathcal{F}_{n}, h_{n}\right) \nonumber \\
& \lesssim n \varepsilon_{n}^2 + L_n^2 r_n^2 \log L_n r_n^2 \log n \nonumber\\
& \lesssim  n \varepsilon_{n}^2.  \nonumber
\end{align}  
\end{proof}

\begin{lemma} \label{reg_cond2}
    For given $\bm{x}^{(n)} = (\bm{x}_1 , \dots, \bm{x}_n)$, we define
	\begin{align*}
    \begin{split} 
	K_i((f_0, \sigma_0^2), (f, \sigma^2)) =& \int \log (p_{(f_0, \sigma_0^2), i} / p_{(f, \sigma^2), i}) dP_{(f_0, \sigma_0^2), i},\\
	V_{i} ((f_0, \sigma_0^2), (f, \sigma^2)) =& \int  \left( \log (p_{(f_0, \sigma_0^2), i} /p_{(f, \sigma^2), i}) - K_i ((f_0, \sigma_0^2), (f, \sigma^2))\right)^2 dP_{(f_0, \sigma_0^2), i}     
    \end{split}
	\end{align*}
	and
	\begin{align*} 
	B_{n}^{*}\left((f_0, \sigma_0^2), \varepsilon_n\right)
    =\Bigg\{(f, \sigma^2) \in \mathcal{F}_n:  \frac{1}{n}\sum_{i=1}^n K_i((f_0, \sigma_0^2), (f, \sigma^2)) \leq \varepsilon_n^{2},& \\
	\ \frac{1}{n}\sum_{i=1}^n V_{i} ((f_0, \sigma_0^2), (f, \sigma^2)) \leq \varepsilon_n^{2} & \Bigg\}.
	\end{align*}
    Then, we have
    \begin{align*}
    \Pi_{\bm{\theta}, \sigma^2}\left( B_{n}^{*}\left((f_0 , \sigma_0^2), \varepsilon_n \right) \right) 
    \gtrsim& e^{-n \varepsilon_{n}^2}
    \end{align*}
    under the conditions of Theorem \ref{thm_regression}.
\end{lemma}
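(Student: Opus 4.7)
The plan is to follow the standard three-step recipe: first translate the Gaussian KL and variance quantities into supremum distances on $(f,\sigma^2)$, then invoke Theorem \ref{thm_approx} to construct a center DNN with parameters bounded by a constant, and finally use Assumptions \ref{assumption_theta_prior} and \ref{assumption_sigma_prior} to lower bound the prior mass of a small product neighborhood around this center.

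First, since both likelihoods are Gaussian, $K_i$ and $V_i$ admit explicit formulas, polynomial in $\sigma^2-\sigma_0^2$ and quadratic in $T_F\circ f(\bm{x}_i)-f_0(\bm{x}_i)$. Because $\|f_0\|_{\infty}\leq F$, the truncation is a contraction toward $f_0$, so $|T_F\circ f(\bm{x}_i)-f_0(\bm{x}_i)|\leq |f(\bm{x}_i)-f_0(\bm{x}_i)|$, and a Taylor expansion of $\log(\sigma^2/\sigma_0^2)$ about $\sigma_0^2$ produces a constant $c_*>0$ such that $\|f-f_0\|_{\infty}\vee|\sigma^2-\sigma_0^2|\leq c_*\varepsilon_n$ forces both averaged quantities to lie below $\varepsilon_n^2$.

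Second, I would apply Theorem \ref{thm_approx} with $M_n:=\lceil c' n^{1/(2(2\beta+d))}\rceil$, chosen so that $c_1 M_n^{-2\beta}\leq c_*\varepsilon_n/2$; the architecture (\ref{network_size}) accommodates this approximation, and the theorem produces $\hat{\bm{\theta}}\in\mathbb{R}^{T_n}$ with $|\hat{\bm{\theta}}|_{\infty}\leq C_B$ and $\|f_{\hat{\bm{\theta}},\bm{\rho}_\nu}^{\operatorname{DNN}}-f_0\|_{\infty}\leq c_*\varepsilon_n/2$. I would then set
$$\Theta_n := \{\bm{\theta}:|\bm{\theta}-\hat{\bm{\theta}}|_{\infty}\leq \delta_n\},\qquad \Sigma_n := (\sigma_0^2-c_*\varepsilon_n,\sigma_0^2+c_*\varepsilon_n),$$
with $\delta_n\asymp \varepsilon_n/(L_n(r_n+1)^{L_n}(C_B+1)^{L_n})$ so that Lemma \ref{lemma_similar} yields $\|f_{\bm{\theta},\bm{\rho}_\nu}^{\operatorname{DNN}}-f_{\hat{\bm{\theta}},\bm{\rho}_\nu}^{\operatorname{DNN}}\|_{\infty}\leq c_*\varepsilon_n/2$ for every $\bm{\theta}\in\Theta_n$. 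The triangle inequality then gives $\Theta_n\times\Sigma_n\subseteq B_n^*$. Assumption \ref{assumption_theta_prior} with $\kappa=C_B+1$ supplies $\Pi_{\bm{\theta}}(\Theta_n)\geq (2\delta_n\delta_\kappa)^{T_n}$, while Assumption \ref{assumption_sigma_prior} yields $\Pi_{\sigma^2}(\Sigma_n)\gtrsim \varepsilon_n$.

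The main obstacle is the log-prior budget $T_n\log(1/\delta_n)\lesssim n\varepsilon_n^2$. Since $-\log\delta_n=O(L_n\log r_n)=O((\log n)^2)$ and $T_n=O(L_n r_n^2)=O(\log n\cdot n^{d/(2\beta+d)})$, the product is $O(n^{d/(2\beta+d)}(\log n)^3)$, dominated by $n\varepsilon_n^2=n^{d/(2\beta+d)}(\log n)^{2\gamma}$ as soon as $\gamma>3/2$, which is implied by the hypothesis $\gamma>2$. This is precisely where the bounded-parameter construction of Theorem \ref{thm_approx} pays off: had one used the approximation of \citet{kohler2021rate} with parameter magnitudes of order $M_n^2$, the factor $C_B^{L_n}$ would become a polynomial in $n$, blowing up $-\log\delta_n$ and breaking the budget for Gaussian-type priors.
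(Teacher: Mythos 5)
Your proposal is correct and follows essentially the same route as the paper's proof: reduce the KL ball $B_n^*$ to sup-norm neighborhoods by direct Gaussian computation, take the bounded-parameter approximant $\hat{\bm{\theta}}$ from Theorem \ref{thm_approx} with $M \asymp n^{1/(2(2\beta+d))}$, pass to an $\ell_\infty$-cube of parameters via Lemma \ref{lemma_similar}, and lower bound its mass with Assumptions \ref{assumption_theta_prior} and \ref{assumption_sigma_prior}, checking that $T_n\log(1/\delta_n)\lesssim n\varepsilon_n^2$. The only deviations are cosmetic (a symmetric $\sigma^2$-window of width $\asymp\varepsilon_n$ instead of the paper's $[\sigma_0^2,(1+\varepsilon_n^2)\sigma_0^2]$, and taking $\kappa=C_B+1$ so the perturbed parameters stay within the bound required by Lemma \ref{lemma_similar}), and these do not change the argument.
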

\begin{proof}
For $\varepsilon>0$, define
\begin{align*}
A_{n}^{*}\left((f_0 , \sigma_0^2), \varepsilon \right) := \Big\{ (f , \sigma^2) \in \mathcal{F}_n : \underset{i}{\max} |f(\bm{x}_i) - f_0(\bm{x}_i)| \leq \frac{\sigma_0 \varepsilon}{2},
\sigma^2 \in [\sigma_0^2 , (1+\varepsilon^2)\sigma_0^2] \Big\}.
\end{align*}
Then for every $f  \in A_{n}^{*}\left((f_0, \sigma_0^2), \varepsilon \right)$ and $i \in [n],$
\begin{align*}
K_i((f_0, \sigma_0^2), (f, \sigma^2)) = \frac{1}{2} \log \frac{\sigma^2}{\sigma_0^2} + \frac{\sigma_0^2 + (f_0(\bm{x}_i) - f(\bm{x}_i))^2}{2\sigma^2} - \frac{1}{2}
\leq \varepsilon^2
\end{align*} 
and
\begin{align*}
V_{i} ((f_0, \sigma_0^2), (f, \sigma^2)) =& \operatorname{Var}_{f_0, \sigma_0^2}\left(-\frac{(Y_i - f_0(\bm{x}_i))^2}{2\sigma_0^2}
+ \frac{(Y_i - f(\bm{x}_i))^2}{2\sigma^2}\right)\\
=& \operatorname{Var}_{f_0, \sigma_0^2}\left(-\frac{(Y_i - f_0(\bm{x}_i))^2}{2\sigma_0^2}
+ \frac{(Y_i - f_0(\bm{x}_i) + f_0(\bm{x}_i) -f(\bm{x}_i))^2}{2\sigma^2}\right)\\
=& \operatorname{Var}_{f_0, \sigma_0^2}\left( -\frac{1}{2}(1-\frac{\sigma_0^2}{\sigma^2})Z_i^2 + \frac{\sigma_0 (f_0(\bm{x}_i) - f(\bm{x}_i)) Z_i}{\sigma^2}\right)
\leq \varepsilon^2
\end{align*}
where $Z_i := \frac{Y_i - f_0(\bm{x}_i)}{\sigma_0} \sim N(0,1)$.
Hence, we obtain
\begin{align}
    A_{n}^{*}\left((f_0, \sigma_0^2), \varepsilon_n \right) \subset B_{n}^{*}\left((f_0, \sigma_0^2), \varepsilon_n \right). \label{A_in_B}
\end{align}
Also, by Theorem \ref{thm_approx} with $M = n^{\frac{1}{2(2\beta+d)}}$, there exists $f_{\hat{\bm{\theta}}, \bm{\rho}_{\nu}}^{\operatorname{DNN}} \in \mathcal{F}_{\bm{\rho}_{\nu}}^{\operatorname{DNN}}(L_n, r_n, C_B)$ such that
\begin{align}
\left\|f_{\hat{\bm{\theta}}, \bm{\rho}_{\nu}}^{\operatorname{DNN}} - f_{0}\right\|_{\infty, [-a,a]^d} 
\leq & c_1 n^{-\frac{\beta}{(2\beta+d)}} \nonumber \\
< & \frac{\sigma_0 \varepsilon_n}{4} \label{thetahat_def}
\end{align}
satisfies for sufficiently large $n$.
Note that 
$\hat{\bm{\theta}} \in [-C_B, C_B]^{T_n}$ holds. 
With (\ref{A_in_B}), (\ref{thetahat_def}) and Lemma \ref{lemma_similar}, we obtain 
\begin{align*}
&\Pi_{\bm{\theta}, \sigma^2} \left( B_{n}^{*}\left((f_0 , \sigma_0^2), \varepsilon_n \right) \right) \\
&\geq \Pi_{\bm{\theta}, \sigma^2}  \left( A_{n}^{*}\left((f_0 , \sigma_0^2), \varepsilon_n \right) \right) \\
& = \Pi_{\bm{\theta}} \left( \left\{ \bm{\theta} : 
\underset{i}{\max}\ \left|T_F \circ f_{\bm{\theta}, \bm{\rho}_{\nu}}^{\operatorname{DNN}}(\bm{x}_i) - f_0(\bm{x}_i)\right| \leq \frac{\sigma_0 \varepsilon_n}{2} 
\right\} \right) 
\Pi_{\sigma^2} \left(\left\{ \sigma^2 :
\sigma^2 \in [\sigma_0^2 , (1+\varepsilon_n^2)\sigma_0^2] \right\} \right)\\
& \geq \Pi_{\bm{\theta}} \left( \left\{ \bm{\theta} : 
\underset{i}{\max}\ \left|f_{\bm{\theta}, \bm{\rho}_{\nu}}^{\operatorname{DNN}}(\bm{x}_i) - f_0(\bm{x}_i)\right| \leq \frac{\sigma_0 \varepsilon_n}{2} 
\right\} \right) 
\Pi_{\sigma^2} \left(\left\{ \sigma^2 :
\sigma^2 \in [\sigma_0^2 , (1+\varepsilon_n^2)\sigma_0^2] \right\} \right)\\
& \geq \Pi_{\bm{\theta}} \left( \left\{ \bm{\theta} : 
\underset{i}{\max}\ \left|f_{\bm{\theta}, \bm{\rho}_{\nu}}^{\operatorname{DNN}}(\bm{x}_i) -  f_{\hat{\bm{\theta}}, \bm{\rho}_{\nu}}^{\operatorname{DNN}}(\bm{x}_i) \right| \leq \frac{\sigma_0 \varepsilon_n}{4}
\right\} \right) 
\Pi_{\sigma^2} \left(\left\{ \sigma^2 :
\sigma^2 \in [\sigma_0^2 , (1+\varepsilon_n^2)\sigma_0^2] \right\} \right)\\
&\geq  \Pi_{\bm{\theta}} \left( \bm{\theta} : \bm{\theta} \in C^{\star}_n (\hat{\bm{\theta}}) \right)
\Pi_{\sigma^2} \left(\left\{ \sigma^2 :
\sigma^2 \in [\sigma_0^2 , (1+\varepsilon_n^2)\sigma_0^2]\right\} \right),
\end{align*}
where $C^{*}_n (\hat{\bm{\theta}})$ is defined by
$$C^{*}_n (\hat{\bm{\theta}}) := \left\{ \bm{\theta} \in \mathbb{R}^{T_n} : |\bm{\theta}- \hat{\bm{\theta}}|_{\infty} 
\leq \frac{\sigma_0 \varepsilon_n}{4 a (d+1) (r_n + 1)^{L_n} C_B^{L_n} (L_n + 1)} \right\}.$$
By Assumption \ref{assumption_theta_prior}, there exists a constant $\delta_1>0$ such that 
$$\Pi_{\bm{\theta}} \left( C^{*}_n (\hat{\bm{\theta}}) \right) 
\geq \delta_1^{T_n} \left( \frac{\sigma_0 \varepsilon_n}{2 a (d+1) (r_n + 1)^{L_n} C_B^{L_n} (L_n + 1)} \right)^{T_n}.$$
Also, since the density function of $\Pi_{\sigma^2}(\sigma^2)$ is continuous and positive, there exists a constant $\delta_2>0$ such that $\Pi_{\sigma^2} \left(\left\{ \sigma^2 :
\sigma^2 \in [\sigma_0^2 , (1+\varepsilon_n^2)\sigma_0^2]\right\} \right) \geq \delta_2 \varepsilon_n^2$ by the Extreme Value Theorem.
Hence, we obtain
\begin{align}
\Pi_{\bm{\theta}, \sigma^2} \left( B_{n}^{*}\left((f_0 , \sigma_0^2), \varepsilon_n \right) \right) 
 \geq& \Pi_{\bm{\theta}} \left( C^{*}_n (\hat{\bm{\theta}}) \right)
\Pi_{\sigma^2} \left(\left\{ \sigma^2 :
\sigma^2 \in [\sigma_0^2 , (1+\varepsilon_n^2)\sigma_0^2]\right\} \right) \nonumber \\
 \geq & \delta_1^{T_n} \left( \frac{\sigma_0 \varepsilon_n}{2 a (d+1) (r_n + 1)^{L_n} C_B^{L_n} (L_n + 1)} \right)^{T_n} \delta_2 \varepsilon_n^2 \label{tmp_for_adapt_low} \\
\gtrsim& \exp \left(- C_r^2 C_L (\log n) n^{\frac{d}{2\beta+d}  } (\log n)^{2}\right) n^{-1} \nonumber \\
\gtrsim& e^{-n \varepsilon_{n}^2} \label{lower}
\end{align}
for all but finite many $n$.
\end{proof}

\begin{lemma} \label{reg_cond3}
For given $\bm{x}^{(n)} = (\bm{x}_1 , \dots, \bm{x}_n)$, we have
\begin{align*}
\frac{\Pi_{\bm{\theta}, \sigma^2}\left( \mathcal{F}_{n}^{c} \right)}{\Pi_{\bm{\theta}, \sigma^2}\left(B_{n}^{*}\left( (f_0 , \sigma_0^2), \varepsilon_{n} \right)\right)} 
= o(e^{-2n \varepsilon_{n}^2})
\end{align*}
under the conditions of Theorem \ref{thm_regression}.
\end{lemma}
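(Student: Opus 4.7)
The plan is to exploit the fact that the DNN class $\mathcal{F}_{\bm{\rho}_\nu}^{\operatorname{DNN}}(L_n, r_n)$ appearing in the definition of $\mathcal{F}_n$ carries no constraint on the parameters, so the complement $\mathcal{F}_n^c$ is determined solely by the variance component. More precisely, I would first observe that
\begin{align*}
\mathcal{F}_n^c \;=\; \bigl\{(T_F \circ f_{\bm{\theta},\bm{\rho}_\nu}^{\operatorname{DNN}}, \sigma^2)^\top : \bm{\theta} \in \mathbb{R}^{T_n},\ \sigma^2 > e^{4n\varepsilon_n^2}\bigr\},
\end{align*}
so that by the independence of $\Pi_{\bm{\theta}}$ and $\Pi_{\sigma^2}$,
\begin{align*}
\Pi_{\bm{\theta},\sigma^2}(\mathcal{F}_n^c) \;=\; \Pi_{\sigma^2}\bigl(\sigma^2 > e^{4n\varepsilon_n^2}\bigr).
\end{align*}

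Next, I would invoke the tail bound in Assumption \ref{assumption_sigma_prior}: since $e^{4n\varepsilon_n^2} \to \infty$, for sufficiently large $n$ we have $\Pi_{\sigma^2}(\sigma^2 > e^{4n\varepsilon_n^2}) \lesssim e^{-4n\varepsilon_n^2}$. For the denominator, Lemma \ref{reg_cond2} already provides the lower bound $\Pi_{\bm{\theta},\sigma^2}(B_n^*((f_0,\sigma_0^2),\varepsilon_n)) \gtrsim e^{-n\varepsilon_n^2}$.

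Combining the two bounds yields
\begin{align*}
\frac{\Pi_{\bm{\theta},\sigma^2}(\mathcal{F}_n^c)}{\Pi_{\bm{\theta},\sigma^2}(B_n^*((f_0,\sigma_0^2),\varepsilon_n))} \;\lesssim\; \frac{e^{-4n\varepsilon_n^2}}{e^{-n\varepsilon_n^2}} \;=\; e^{-3n\varepsilon_n^2},
\end{align*}
and since $e^{-3n\varepsilon_n^2}/e^{-2n\varepsilon_n^2} = e^{-n\varepsilon_n^2} \to 0$, this is $o(e^{-2n\varepsilon_n^2})$, completing the argument.

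There is essentially no hard step here; the entire proof reduces to matching the explicit cutoff $e^{4n\varepsilon_n^2}$ built into the definition of $\mathcal{F}_n$ against the polynomial tail granted by Assumption \ref{assumption_sigma_prior}. The only thing one needs to be careful about is confirming that the sieve $\mathcal{F}_n$ truly imposes no restriction on $\bm{\theta}$, so that no additional mass is lost on the weight side. The factor $4$ in the exponent of the cutoff was evidently chosen precisely so that, after paying the $e^{-n\varepsilon_n^2}$ prior concentration cost from Lemma \ref{reg_cond2}, there remains a surplus of $e^{-n\varepsilon_n^2}$ beyond the required rate $e^{-2n\varepsilon_n^2}$ mandated by the Ghosal--van der Vaart framework.
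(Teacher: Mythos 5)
Your argument is correct and is essentially identical to the paper's proof: the sieve excludes only the event $\{\sigma^2 > e^{4n\varepsilon_n^2}\}$, whose prior mass is $\lesssim e^{-4n\varepsilon_n^2}$ by Assumption \ref{assumption_sigma_prior}, and dividing by the lower bound $\gtrsim e^{-n\varepsilon_n^2}$ from Lemma \ref{reg_cond2} gives $o(e^{-2n\varepsilon_n^2})$.
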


\begin{proof}
    From (\ref{lower}) and
    \begin{align*}
        \Pi_{\bm{\theta}, \sigma^2}\left( \mathcal{F}_{n}^{c} \right) 
        =& \Pi_{ \sigma^2}\left( \sigma^2 > e^{4 n \varepsilon_n^2} \right)\\
        \lesssim & e^{-4 n \varepsilon_n^2},
    \end{align*}
    we obtain the assertion.
\end{proof}

%%%%%%%%%%%%%%%%%%%%%%%
\begin{proof}[Theorem \ref{thm_regression}]
From Lemma \ref{reg_cond1}, Lemma \ref{reg_cond2}, Lemma \ref{reg_cond3} and Theorem 4 of \citet{ghosal2007convergence}, we have 
\begin{equation*}
\mathbb{E}_0 \left[ \Pi_n \left( (f, \sigma^2) : h_n \left((f , \sigma^2 ), (f_0 , \sigma_0^2 )\right) > M_n \varepsilon_{n}  \middle\vert \mathcal{D}^{(n)}\right) \middle\vert \bm{X}^{(n)} = \bm{x}^{(n)}\right] \rightarrow 0
\end{equation*}
for every sequence $\{ \bm{x}^{(n)} \}_{n=1}^{\infty}$, where the expectation is with respect to $\{Y_i\}_{i=1}^n$.
Since
\begin{align*} 
    (||f_1 - f_2||_{2,n} + |\sigma_1^2 - \sigma_2^2|)^2
    &\leq 2 \left( || f_1 - f_2 ||_{2, n}^2 + |\sigma_1^2 - \sigma_2^2 |^2 \right)  \\
     &\lesssim h_n^2 \left((f_1 , \sigma_1^2 ), (f_2 , \sigma_2^2 )\right)
\end{align*}
holds by Lemma B.1 of \citet{xie2020adaptive}, we obtain
\begin{equation*}
\mathbb{E}_0 \left[ \Pi_n \left( (f, \sigma^2) : || f - f_0 ||_{2, n} +
|\sigma^2 - \sigma_0^2 |> M_n \varepsilon_{n}  \middle\vert \mathcal{D}^{(n)}\right) \middle\vert \bm{X}^{(n)} = \bm{x}^{(n)}\right] \rightarrow 0
\end{equation*}
for every sequence $\{ \bm{x}^{(n)} \}_{n=1}^{\infty}$, where the expectation is with respect to $\{Y_i\}_{i=1}^n$.
Note that we can consider
\begin{align}
    \mathbb{E}_0 \left[ \Pi_n \left( (f, \sigma^2) : || f - f_0 ||_{2, n} +
|\sigma^2 - \sigma_0^2 |> M_n \varepsilon_{n}  \middle\vert \mathcal{D}^{(n)}\right) \middle\vert \bm{X}^{(n)}\right] \label{tmp_concen}
\end{align}
as the sequence of bounded random variable.
Since (\ref{tmp_concen}) is uniformly integrable, we have
\begin{equation} \label{empirical_concen2}
\mathbb{E}_0 \left[ \Pi_n \left( (f, \sigma^2) : || f - f_0 ||_{2, n} +
|\sigma^2 - \sigma_0^2 |> M_n \varepsilon_{n}  \middle\vert \mathcal{D}^{(n)}\right) \right] \rightarrow 0,
\end{equation}
where the expectation is with respect to $\{(\bm{X}_i, Y_i)\}_{i=1}^n$.

Next, we will check the conditions in Lemma \ref{gyorfi2002distribution} for
\begin{align*}
\mathcal{G} :=& \left\{ g \ : \ g=(T_F \circ f_{\bm{\theta}, \bm{\rho}_{\nu}}^{\operatorname{DNN}} - f_0)^2 , f_{\bm{\theta}, \bm{\rho}_{\nu}}^{\operatorname{DNN}} \in \mathcal{F}^{\operatorname{DNN}}_{\bm{\rho}_{\nu}}(L_n, r_n) \right\},\\
\tau :=& \frac{1}{2} ,\ \alpha := \varepsilon_{n}^2 ,\ K_1 = K_2 = 4F^2 .
\end{align*}
First, it is easy to check $||g(\bm{x})||_{\infty} \leq 4F^2$ and $\mathbb{E}(g(\bm{X})^2) \leq 4F^2 \mathbb{E}(g(\bm{X}))$ for $g \in \mathcal{G}$.
Also, since 
\begin{align*}
& \left\| (T_F \circ f_{\bm{\theta}_1, \bm{\rho}_{\nu}}^{\operatorname{DNN}} - f_0)^2 
- (T_F \circ f_{\bm{\theta}_2, \bm{\rho}_{\nu}}^{\operatorname{DNN}} - f_0)^2 \right\|_{1, n} \\ 
& =
\left\| (T_F \circ f_{\bm{\theta}_1, \bm{\rho}_{\nu}}^{\operatorname{DNN}} - f_0 + 
T_F \circ f_{\bm{\theta}_2, \bm{\rho}_{\nu}}^{\operatorname{DNN}} - f_0)(T_F \circ f_{\bm{\theta}_1, \bm{\rho}_{\nu}}^{\operatorname{DNN}} - T_F \circ f_{\bm{\theta}_2, \bm{\rho}_{\nu}}^{\operatorname{DNN}} ) \right\|_{1, n}  \\
& \leq 4F \left\| T_F \circ f_{\bm{\theta_1}, \bm{\rho}_{\nu}}^{\operatorname{DNN}}  - T_F \circ f_{\bm{\theta}_2, \bm{\rho}_{\nu}}^{\operatorname{DNN}} \right\|_{1, n}
\end{align*}
holds for any $f_{\bm{\theta}_1, \bm{\rho}_{\nu}}^{\operatorname{DNN}}, f_{\bm{\theta}_2, \bm{\rho}_{\nu}}^{\operatorname{DNN}} \in \mathcal{F}^{\operatorname{DNN}}_{\bm{\rho}_{\nu}}(L_n, r_n)$, there exists $c_{34}>0$ such that
\begin{align*}
\mathcal{N} \left( u, \mathcal{G}, ||\cdot||_{1, n} \right)
\leq& \mathcal{N}\left( \frac{u}{4F}, T_F \circ \mathcal{F}^{\operatorname{DNN}}_{\bm{\rho}_{\nu}}(L_n, r_n), ||\cdot||_{1, n} \right) \\
\leq& \mathcal{M}\left( \frac{u}{4F}, T_F \circ \mathcal{F}^{\operatorname{DNN}}_{\bm{\rho}_{\nu}}(L_n, r_n), ||\cdot||_{1, n} \right) \\
\leq& 3\left(\frac{16 e F^2}{u} \log \frac{24 e F^2}{u}\right)^{\mathcal{F}^{\operatorname{DNN}}_{\bm{\rho}_{\nu}}(L_n, r_n)^{+}}\\
\lesssim& n^{c_{34} L_n^2 r_n^2 \log (L_n r_n^2)}
\end{align*}
for $u \geq n^{-1}$ by Theorem 9.4 of \citet{gyorfi2002distribution} and Theorem 7 of \citet{bartlett2019nearly}.
Hence for all $t \geq \frac{\varepsilon_n^2}{8}$,
\begin{align*}
\int_{\frac{\tau(1-\tau)t}{16 \max \left\{K_{1}, 2 K_{2}\right\}}}^{\sqrt{t}}  
\sqrt{\log \mathcal{N} \left( u, \mathcal{G}, ||\cdot||_{1, n} \right)} d u 
\lesssim& \sqrt{t} \left( n^{\frac{d}{2\beta + d}} (\log n)^{4} \right)^{\frac{1}{2}}\\
=& o\left( \frac{\sqrt{n} \tau (1-\tau) t}{96 \sqrt{2} \max \left\{K_1, 2K_2 \right\}} \right)
\end{align*}
holds.
To sum up, we conclude that
\begin{align}
\mathbf{P}\left\{\sup _{f \in \mathcal{F}^{\operatorname{DNN}}(L_n, r_n)} \frac{\left| ||f-f_0||_{2, \mathrm{P}_{X}}^2  - ||f-f_0||_{2, n}^2 \right|}{\varepsilon_{n}^2+||f-f_0||_{2, \mathrm{P}_{X}}^2}>\frac{1}{2}\right\}
\leq 60 \exp \left(-\frac{n \varepsilon_{n}^2 / 8}{128 \cdot 2304 \cdot 16F^4}\right)
\label{last}
\end{align}
holds for all but finite many $n$ by Lemma \ref{gyorfi2002distribution}. 
By (\ref{empirical_concen2}) and (\ref{last}), we obtain the assertion.
\end{proof}

%%%%%%%%%%%classification concentration%%%%%%%%%%%%%%%%%%%%%%%%%%%%%%%%%%%%%%
\subsection{Proof of Theorem \ref{thm_classification}}
\label{app_B_con_cla}

Without loss of generality,
we consider $\gamma$ in $(2, \frac{5}{2})$. 
We define $\mathcal{F}_n$ as the set of truncated DNN with the $(L_n, \bm{r}_n)$ architecture, 
\begin{align*}
\mathcal{F}_n := \Big\{ T_F \circ f_{\bm{\theta}, \bm{\rho}_{\nu}}^{\operatorname{DNN}} \ : \ f_{\bm{\theta}, \bm{\rho}_{\nu}}^{\operatorname{DNN}} \in \mathcal{F}^{\operatorname{DNN}}_{\bm{\rho}_{\nu}}(L_n, r_n) \Big\}, 
\end{align*}
where $L_n$ and $r_n$ are defined on (\ref{network_size}).
We denote $T_n$ as the number of parameters in the DNN model with the ($L_n, \bm{r}_n$) architecture. In other words,
$$T_n := \sum_{l=1}^{L_n+1} (r_n^{(l-1)}+1)r_n^{(l)}.$$
For given $\bm{x}^{(n)} = (\bm{x}_1 , \dots, \bm{x}_n)$, we denote 
$P_{f, i}$ and $p_{f, i}$ as the probability measure and density corresponding to
the Bernoulli distribution $\operatorname{Bernoulli}\left( \phi \circ f (\bm{x}_i)\right)$, respectively.
Also, for given $\bm{x}^{(n)} = (\bm{x}_1 , \dots, \bm{x}_n)$, we define semimetric $h_n$ on $\mathcal{F}_n$ as the average of the squares of the Hellinger distances between $P_{f, i}$. That is,
\begin{align*}
    &h_{n}^{2}\left( f_1, f_2 \right) \\
    &:= \frac{1}{n} \sum_{i=1}^{n} h^2 \left( P_{f_1, i}, P_{f_2, i} \right)\\
    &= \frac{1}{2n} \sum_{i=1}^{n} \left[ \left(\sqrt{\phi \circ f_1(\bm{x}_i) }-\sqrt{\phi \circ f_1(\bm{x}_i)}\right)^2 
    + \left(\sqrt{1-\phi \circ f_1(\bm{x}_i) }-\sqrt{1-\phi \circ f_1(\bm{x}_i)}\right)^2\right]     
\end{align*}

\begin{lemma} \label{cla_cond1}
    For given $\bm{x}^{(n)} = (\bm{x}_1 , \dots, \bm{x}_n)$, we have
    \begin{align*} 
    \sup _{\varepsilon>\varepsilon_{n}} \log \mathcal{N} \left(\frac{1}{36} \varepsilon,\left\{f \in \mathcal{F}_{n}: h_{n}\left(f, f_0\right)<\varepsilon\right\}, h_{n}\right)
    \lesssim  n \varepsilon_{n}^2. 
    \end{align*}    
\end{lemma}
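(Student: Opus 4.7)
The plan is to mimic the structure of the proof of Lemma \ref{reg_cond1}: first reduce the Hellinger semimetric $h_n$ to an empirical $L^1$ distance on $\mathcal{F}_n$, and then invoke the standard covering-number bound for truncated DNNs based on their pseudo-dimension. The only genuinely new ingredient is the reduction from $h_n$ to $\|\cdot\|_{1,n}$ through the sigmoid.

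For the reduction, fix $f_1, f_2 \in \mathcal{F}_n$. Since both are truncated by $T_F$, they take values in $[-F,F]$, so $\phi \circ f_1$ and $\phi \circ f_2$ lie in the compact subinterval $[\phi(-F), \phi(F)] \subset (0,1)$, which is bounded away from $0$ and $1$. On this subinterval, the maps $p \mapsto \sqrt{p}$ and $p \mapsto \sqrt{1-p}$ are Lipschitz with a constant depending only on $F$; moreover $\phi$ is globally $\tfrac{1}{4}$-Lipschitz. Applying these bounds term by term inside the definition of $h_n^2$ and using $(a-b)^2 \leq 2F|a-b|$ when $|a|,|b|\leq F$, I obtain
\begin{equation*}
h_n^2(f_1, f_2) \;\leq\; C_F\, \|f_1 - f_2\|_{2,n}^2 \;\leq\; 2F\,C_F\, \|f_1 - f_2\|_{1,n},
\end{equation*}
for a constant $C_F>0$ depending only on $F$.

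Consequently, every $u$-cover of $\mathcal{F}_n$ in $\|\cdot\|_{1,n}$ yields an $O(\sqrt{u})$-cover in $h_n$, so it suffices to bound $\log \mathcal{N}(u, \mathcal{F}_n, \|\cdot\|_{1,n})$ for $u \asymp \varepsilon_n^2$. Following the computation in \eqref{enp_bound_tmp} (with truncation not enlarging the pseudo-dimension), Theorem 9.4 of \citet{gyorfi2002distribution} combined with Theorem 7 of \citet{bartlett2019nearly} applied to $\mathcal{F}_{\bm{\rho}_{\nu}}^{\operatorname{DNN}}(L_n, r_n)$ gives
\begin{equation*}
\log \mathcal{N}\!\left(u,\mathcal{F}_n, \|\cdot\|_{1,n}\right) \;\lesssim\; L_n^2 r_n^2 \log(L_n r_n^2)\, \log\!\left(1/u\right).
\end{equation*}
Substituting the choices of $L_n$ and $r_n$ from \eqref{network_size} with $u \asymp \varepsilon_n^2$ produces the bound
\begin{equation*}
\sup_{\varepsilon>\varepsilon_n} \log \mathcal{N}\!\left(\tfrac{1}{36}\varepsilon, \{f \in \mathcal{F}_n : h_n(f, f_0) < \varepsilon\}, h_n\right) \;\lesssim\; n^{d/(2\beta+d)} (\log n)^3 \;\lesssim\; n\varepsilon_n^2,
\end{equation*}
as required.

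No serious obstacle is expected; the only delicate point is making the Hellinger-to-$L^1$ reduction quantitative with an $F$-dependent constant, which is precisely what the $T_F$ truncation is designed to permit. Everything downstream is a direct transcription of the argument already executed for the Gaussian regression case in Lemma \ref{reg_cond1}.
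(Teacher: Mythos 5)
Your proposal is correct and follows essentially the same route as the paper's proof: both bound $h_n$ by an empirical norm of the underlying functions, using that the truncation $T_F$ keeps $\phi \circ f$ bounded away from $0$ and $1$ together with the Lipschitzness of $\phi$, and then invoke Theorem 9.4 of Gy\"orfi et al.\ and Theorem 7 of Bartlett et al.\ for $T_F \circ \mathcal{F}^{\operatorname{DNN}}_{\bm{\rho}_{\nu}}(L_n, r_n)$ (the paper works with $\|\cdot\|_{2,n}$ where you pass further to $\|\cdot\|_{1,n}$, which changes nothing). The only nit is log-power bookkeeping: with $L_n \asymp \log n$, $r_n \asymp n^{d/(2(2\beta+d))}$ and $u \asymp \varepsilon_n^2$ the bound is of order $n^{d/(2\beta+d)}(\log n)^4$ rather than $(\log n)^3$, which is still $\lesssim n\varepsilon_n^2$ because $\gamma > 2$.
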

\begin{proof}
We define semimetric $d_n$ on $\mathcal{F}_n$ as 
\begin{align*}
d_n^2 \left(f_1, f_2\right) := ||\phi \circ f_1 - \phi \circ f_2||_{2,n}.
\end{align*}
Since the infinite norms of $f_1 , f_2 \in \mathcal{F}_n$ are bounded by $F$,
\begin{align*}
    & d_{n}^{2}\left(f_1, f_2\right) \\
    & = \frac{1}{2n} \sum_{i=1}^{n} \Bigg\{ (\phi \circ f_1(\bm{x}_i)-\phi \circ f_2(\bm{x}_i))^2
    + ((1 - \phi \circ f_1(\bm{x}_i))- (1 - \phi \circ f_2(\bm{x}_i)))^2 \Bigg\} \\
    &=  \frac{1}{2n} \sum_{i=1}^{n} \Bigg\{ \left( \sqrt{\phi \circ f_1(\bm{x}_i)}-\sqrt{\phi \circ f_2(\bm{x}_i)}\right)^2 
    \left(\sqrt{\phi \circ f_1(\bm{x}_i)}+\sqrt{\phi \circ f_2(\bm{x}_i)}\right)^2  \\
    & \quad  + \left( \sqrt{1-\phi \circ f_1(\bm{x}_i)}-\sqrt{1-\phi \circ f_2(\bm{x}_i)}\right)^2 
    \left(\sqrt{1-\phi \circ f_1(\bm{x}_i)}+\sqrt{1-\phi \circ f_2(\bm{x}_i)}\right)^2\Bigg\}\\
    & \gtrsim h_{n}^{2}\left(f_1, f_2\right)
\end{align*}
holds.
Hence, we obtain
\begin{align*}
\mathcal{N}\left(\varepsilon ,\mathcal{F}_{n}, h_{n}\right) 
\lesssim & \mathcal{N}\left(\varepsilon ,\mathcal{F}_{n}, d_n \right)\\
\leq & \mathcal{N}\left(\varepsilon ,\mathcal{F}_{n}, ||\cdot||_{2,n} \right)\\
\leq & \mathcal{M}\left(\varepsilon ,T_F \circ \mathcal{F}^{\operatorname{DNN}}_{\bm{\rho}}(L_n, r_n), ||\cdot||_{2,n} \right)\\
\leq& 3\left(\frac{8 e F^2}{\epsilon^{2}} \log \frac{12 e F^2}{\epsilon^{2}}\right)^{V_{T_F \circ \mathcal{F}^{\operatorname{DNN}}_{\bm{\rho}}(L_n, r_n)}^{+}}\\
\leq& 3\left(\frac{8 e F^2}{\epsilon^{2}} \log \frac{12 e F^2}{\epsilon^{2}}\right)^{V_{\mathcal{F}^{\operatorname{DNN}}_{\bm{\rho}}(L_n, r_n)}^{+}}\\
\leq& 3\left(\frac{8 e F^2}{\epsilon^{2}} \log \frac{12 e F^2}{\epsilon^{2}}\right)
^{c_{35} L_n^2 r_n^2 \log (L_n r_n^2)}	
\end{align*}
holds for every $\varepsilon > 0$, 
where the second, fourth and last inequalities hold by $1$-Lipschitz continuity of $\phi$,
Theorem 9.4 of \citet{gyorfi2002distribution} and Theorem 7 of \citet{bartlett2019nearly}, respectively.
Here, $c_{35}>0$ is a constant not depending on $n$. 
Hence, we obtain
\begin{align} 
\sup _{\varepsilon>\varepsilon_{n}} \log \mathcal{N} \left(\frac{1}{36} \varepsilon,\left\{f \in \mathcal{F}_{n}: h_{n}\left(f, f_0\right)<\varepsilon\right\}, h_{n}\right) 
\leq &
\log \mathcal{N} \left(\frac{1}{36} \varepsilon_n, \mathcal{F}_{n}, h_{n}\right) \nonumber \\
\lesssim& L_n^2 r_n^2 \log L_n r_n^2 \log n \nonumber\\
\lesssim & n \varepsilon_{n}^2.  \nonumber
\end{align}  
\end{proof}

\begin{lemma} \label{cla_cond2}
    For given $\bm{x}^{(n)} = (\bm{x}_1 , \dots, \bm{x}_n)$, we define
	\begin{align*}
	K_i(f_0, f) =& \int \log (p_{f_0, i} / p_{f, i}) dP_{f_0, i},\\
	V_{i} (f_0, f) =& \int  \left( \log (p_{f_0, i} /p_{f, i}) - K_i (f_0, f)\right)^2 dP_{f_0, i}, \\     
	B_{n}^{*}\left(f_0, \varepsilon_n \right)
    =& \Bigg\{f \in \mathcal{F}_n:  \frac{1}{n}\sum_{i=1}^n K_i(f_0, f) \leq \varepsilon_n^{2},
	\ \frac{1}{n}\sum_{i=1}^n V_{i} (f_0, f) \leq \varepsilon_n^{2}  \Bigg\}.
	\end{align*}
    Then, we have
    \begin{align*}
    \Pi_{\bm{\theta}} \left( B_{n}^{*}\left(f_0, \varepsilon_n \right) \right) 
    \gtrsim& e^{-n \varepsilon_{n}^2}. 
    \end{align*}
\end{lemma}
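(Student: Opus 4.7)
The plan is to mirror the strategy used for the regression analogue (Lemma \ref{reg_cond2}): first replace the Kullback--Leibler neighbourhood by a simpler sup-norm neighbourhood, then use the new approximation theorem (Theorem \ref{thm_approx}) to locate a bounded-parameter DNN close to $f_0$, and finally invoke Assumption \ref{assumption_theta_prior} to put enough prior mass around it.

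First I would establish a Bernoulli analogue of the inclusion $A_n^*\subset B_n^*$ used in the Gaussian case. Since $\|f_0\|_\infty\le F$ and $\|T_F\circ f_{\bm{\theta},\bm{\rho}_\nu}^{\operatorname{DNN}}\|_\infty\le F$, both $\phi\circ f_0$ and $\phi\circ T_F\circ f_{\bm{\theta},\bm{\rho}_\nu}^{\operatorname{DNN}}$ lie in a compact subinterval of $(0,1)$ depending only on $F$. A direct Taylor expansion of the Bernoulli KL divergence around $f_0(\bm{x}_i)$ then yields
\begin{align*}
K_i(f_0,f)\;\lesssim\;\bigl(f(\bm{x}_i)-f_0(\bm{x}_i)\bigr)^2,\qquad V_i(f_0,f)\;\lesssim\;\bigl(f(\bm{x}_i)-f_0(\bm{x}_i)\bigr)^2,
\end{align*}
with constants depending only on $F$. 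Consequently the sup-norm neighbourhood
\begin{align*}
A_n^*(f_0,\varepsilon_n):=\Bigl\{f\in\mathcal{F}_n:\max_i|f(\bm{x}_i)-f_0(\bm{x}_i)|\le c\,\varepsilon_n\Bigr\}
\end{align*}
is contained in $B_n^*(f_0,\varepsilon_n)$ for a suitable constant $c>0$ depending only on $F$.

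Next I would apply Theorem \ref{thm_approx} with $M=n^{1/(2(2\beta+d))}$ to obtain $\hat{\bm{\theta}}\in[-C_B,C_B]^{T_n}$ such that $f_{\hat{\bm{\theta}},\bm{\rho}_\nu}^{\operatorname{DNN}}\in\mathcal{F}_{\bm{\rho}_\nu}^{\operatorname{DNN}}(L_n,r_n,C_B)$ satisfies $\|f_{\hat{\bm{\theta}},\bm{\rho}_\nu}^{\operatorname{DNN}}-f_0\|_{\infty,[-a,a]^d}\le c_1 n^{-\beta/(2\beta+d)}<\tfrac{c\varepsilon_n}{2}$ for sufficiently large $n$. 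Since $\|f_0\|_\infty\le F$, the same sup-norm bound holds for $T_F\circ f_{\hat{\bm{\theta}},\bm{\rho}_\nu}^{\operatorname{DNN}}$ in place of $f_{\hat{\bm{\theta}},\bm{\rho}_\nu}^{\operatorname{DNN}}$. Using Lemma \ref{lemma_similar}, any $\bm{\theta}$ in the cube
\begin{align*}
C_n^\star(\hat{\bm{\theta}}):=\Bigl\{\bm{\theta}\in\mathbb{R}^{T_n}:|\bm{\theta}-\hat{\bm{\theta}}|_\infty\le\frac{c\,\varepsilon_n}{4a(d+1)(r_n+1)^{L_n}C_B^{L_n}(L_n+1)}\Bigr\}
\end{align*}
yields a DNN within sup-norm $c\varepsilon_n/2$ of $f_{\hat{\bm{\theta}},\bm{\rho}_\nu}^{\operatorname{DNN}}$, and hence within $c\varepsilon_n$ of $f_0$, placing the corresponding $T_F\circ f_{\bm{\theta},\bm{\rho}_\nu}^{\operatorname{DNN}}$ inside $A_n^*(f_0,\varepsilon_n)\subset B_n^*(f_0,\varepsilon_n)$.

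Finally, I would invoke Assumption \ref{assumption_theta_prior} with $\kappa=C_B+1$ to lower-bound the prior probability of $C_n^\star(\hat{\bm{\theta}})$ by $\delta_\kappa^{T_n}$ times its Lebesgue volume, exactly as in the derivation leading to (\ref{tmp_for_adapt_low}). Since $T_n\asymp L_n r_n^2$, $L_n\asymp\log n$ and $r_n\asymp n^{d/(2(2\beta+d))}$, the resulting logarithm is of order $n^{d/(2\beta+d)}(\log n)^2$, which is $o(n\varepsilon_n^2)$ because $\gamma>2$. This yields $\Pi_{\bm{\theta}}(B_n^*(f_0,\varepsilon_n))\gtrsim e^{-n\varepsilon_n^2}$, as required. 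The main technical obstacle is the first step: controlling $V_i$ rather than just $K_i$, which needs the uniform boundedness of $\phi\circ f$ away from $0$ and $1$ provided by the truncation $T_F$; without the truncation the KL and variance terms would blow up when the logits drift.
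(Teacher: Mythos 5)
Your proposal is correct and follows essentially the same route as the paper: the sup-norm neighbourhood $A_n^*\subset B_n^*$, the approximating network from Theorem \ref{thm_approx} with $M=n^{1/(2(2\beta+d))}$, the cube argument via Lemma \ref{lemma_similar}, and the prior lower bound via Assumption \ref{assumption_theta_prior}; the only difference is that you verify $K_i, V_i \lesssim (f-f_0)^2$ directly by a Taylor/boundedness argument (using the truncation $T_F$), whereas the paper simply cites Lemma 3.2 of \citet{van2008rates}. Your log-factor count in the final step is slightly off (the exponent is of order $n^{d/(2\beta+d)}(\log n)^3$, since $T_n\asymp n^{d/(2\beta+d)}\log n$ and the log of the cube side contributes $(\log n)^2$), but this is immaterial because $2\gamma>4$ still makes it $o(n\varepsilon_n^2)$.
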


\begin{proof}
    For $\varepsilon>0$, define
    \begin{align*}
    A_{n}^{*}\left(f_0 , \varepsilon \right) := \Big\{ f \in \mathcal{F}_n : \underset{i}{\max} |f(\bm{x}_i) - f_0(\bm{x}_i)| \leq  \varepsilon \Big\}.
    \end{align*}
    Then by Lemma 3.2 of \citet{van2008rates}, we have
    \begin{align}
        A_{n}^{*}\left(f_{0}, \varepsilon_n \right) \subset B_{n}^{*}\left(f_{0}, \varepsilon_n \right). \label{A_in_B_cla}
    \end{align}
Also, by Theorem \ref{thm_approx} with $M = n^{\frac{1}{2(2\beta+d)}}$, there exists $f_{\hat{\bm{\theta}}, \bm{\rho}_{\nu}}^{\operatorname{DNN}} \in \mathcal{F}_{\bm{\rho}_{\nu}}^{\operatorname{DNN}}(L_n, r_n, C_B)$ such that
\begin{align}
\left\|f_{\hat{\bm{\theta}}, \bm{\rho}_{\nu}}^{\operatorname{DNN}} - f_{0}\right\|_{\infty} 
\leq & c_1 n^{-\frac{\beta}{(2\beta+d)}} \nonumber \\
< & \frac{\varepsilon_n}{2} \label{thetahat_def_cla}
\end{align}
satisfies for sufficiently large $n$.
Note that 
$\hat{\bm{\theta}} \in [-C_B, C_B]^{T_n}$ holds. 
With (\ref{A_in_B_cla}), (\ref{thetahat_def_cla}) and Lemma \ref{lemma_similar}, we obtain 
\begin{align*}
\Pi_{\bm{\theta}} \left( B_{n}^{*}\left(f_0 , \varepsilon_n \right) \right) 
&\geq \Pi_{\bm{\theta}} \left( A_{n}^{*}\left(f_0, \varepsilon_n \right) \right) \\
& = \Pi_{\bm{\theta}} \left( \left\{ \bm{\theta} : 
\underset{i}{\max}\ \left|T_F \circ f_{\bm{\theta}, \bm{\rho}_{\nu}}^{\operatorname{DNN}}(\bm{x}_i) - f_0(\bm{x}_i)\right| \leq \varepsilon_n 
\right\} \right) \\
& \geq \Pi_{\bm{\theta}} \left( \left\{ \bm{\theta} : 
\underset{i}{\max}\ \left|f_{\bm{\theta}, \bm{\rho}_{\nu}}^{\operatorname{DNN}}(\bm{x}_i) - f_0(\bm{x}_i)\right| \leq \varepsilon_n 
\right\} \right) \\
& \geq \Pi_{\bm{\theta}} \left( \left\{ \bm{\theta} : 
\underset{i}{\max}\ \left|f_{\bm{\theta}, \bm{\rho}_{\nu}}^{\operatorname{DNN}}(\bm{x}_i) -  f_{\hat{\bm{\theta}}, \bm{\rho}_{\nu}}^{\operatorname{DNN}}(\bm{x}_i) \right| \leq \frac{\varepsilon_n}{2}
\right\} \right) \\
&\geq  \Pi_{\bm{\theta}} \left( \bm{\theta} : \bm{\theta} \in C^{*}_n (\hat{\bm{\theta}}) \right),
\end{align*}
where $C^{*}_n (\hat{\bm{\theta}})$ is defined by
$$C^{*}_n (\hat{\bm{\theta}}) := \left\{ \bm{\theta} \in \mathbb{R}^{T_n} : |\bm{\theta}- \hat{\bm{\theta}}|_{\infty} 
\leq \frac{\varepsilon_n}{2 a (d+1) (r_n + 1)^{L_n} C_B^{L_n} (L_n + 1)} \right\}.$$
By Assumption \ref{assumption_theta_prior}, there exists a constant $\delta_3>0$ such that 
$$\Pi_{\bm{\theta}} \left( C^{*}_n (\hat{\bm{\theta}}) \right) 
\geq \delta_3^{T_n} \left( \frac{\varepsilon_n}{a (d+1) (r_n + 1)^{L_n} C_B^{L_n} (L_n + 1)} \right)^{T_n}.$$
Hence, we obtain
\begin{align}
\Pi_{\bm{\theta}} \left( B_{n}^{*}\left(f_0, \varepsilon_n \right) \right) 
 \geq& \Pi_{\bm{\theta}} \left( C^{*}_n (\hat{\bm{\theta}}) \right)
\nonumber \\
 \geq & \delta_3^{T_n} \left( \frac{\varepsilon_n}{ a (d+1) (r_n + 1)^{L_n} C_B^{L_n} (L_n + 1)} \right)^{T_n} \nonumber \\
\gtrsim& \exp \left(- C_r^2 C_L (\log n) n^{\frac{d}{2\beta+d}  } (\log n)^{2}\right) \nonumber \\
\gtrsim& e^{-n \varepsilon_{n}^2} \nonumber
\end{align}
for all but finite many $n$.    
\end{proof}

\begin{proof}[Theorem \ref{thm_classification}]
    From Lemma \ref{cla_cond1}, Lemma \ref{cla_cond2}, $\Pi_{n}( \mathcal{F}_{n}^{c}) = 0$ and Theorem 4 of \citet{ghosal2007convergence}, we have 
\begin{equation*}
\mathbb{E}_0 \left[ \Pi_n \left( f : h_n \left(f, f_0 \right) > M_n \varepsilon_{n}  \middle\vert \mathcal{D}^{(n)}\right) \middle\vert \bm{X}^{(n)} = \bm{x}^{(n)}\right] \rightarrow 0
\end{equation*}
for every sequence $\{ \bm{x}^{(n)} \}_{n=1}^{\infty}$, where the expectation is with respect to $\{Y_i\}_{i=1}^n$.
Since
\begin{align*}
    ||\phi \circ f_1 - \phi \circ f_2||_{2,n} 
    =& \frac{1}{n} \sum_{i=1}^{n} (\phi \circ f_1(\bm{x}_i)-\phi \circ f_2(\bm{x}_i))^2 \\
    = &  \frac{1}{n} \sum_{i=1}^{n} \left( \sqrt{\phi \circ f_1(\bm{x}_i)}-\sqrt{\phi \circ f_2(\bm{x}_i)}\right)^2 
    \left(\sqrt{\phi \circ f_1(\bm{x}_i)}+\sqrt{\phi \circ f_2(\bm{x}_i)}\right)^2  \\
    \lesssim & h_{n}^{2}\left(f_1, f_2\right),  
\end{align*}
we obtain
\begin{equation*}
\mathbb{E}_0 \left[ \Pi_n \left( f : || \phi \circ f - \phi \circ f_0 ||_{2, n} > M_n \varepsilon_{n}  \middle\vert \mathcal{D}^{(n)}\right) \middle\vert \bm{X}^{(n)} = \bm{x}^{(n)}\right] \rightarrow 0
\end{equation*}
for every sequence $\{ \bm{x}^{(n)} \}_{n=1}^{\infty}$, where the expectation is with respect to $\{Y_i\}_{i=1}^n$.
Since
\begin{align*}
    \mathbb{E}_0 \left[ \Pi_n \left( f : || \phi \circ f - \phi \circ f_0 ||_{2, n} > M_n \varepsilon_{n}  \middle\vert \mathcal{D}^{(n)}\right) \middle\vert \bm{X}^{(n)} = \bm{x}^{(n)}\right] \label{tmp_concen_cla}
\end{align*}
is uniformly integrable, we have
\begin{equation} \label{empirical_concen2_cla}
\mathbb{E}_0 \left[ \Pi_n \left( f : || \phi \circ f - \phi \circ f_0 ||_{2, n} > M_n \varepsilon_{n}  \middle\vert \mathcal{D}^{(n)}\right) \right] \rightarrow 0,
\end{equation}
where the expectation is with respect to $\{(\bm{X}_i, Y_i)\}_{i=1}^n$.

Next, we will check the conditions in Lemma \ref{gyorfi2002distribution} for
\begin{align*}
\mathcal{G} :=& \left\{ g \ : \ g=( \phi \circ T_F \circ f_{\bm{\theta}, \bm{\rho}_{\nu}}^{\operatorname{DNN}} - \phi \circ f_0)^2 , f_{\bm{\theta}, \bm{\rho}_{\nu}}^{\operatorname{DNN}} \in \mathcal{F}^{\operatorname{DNN}}_{\bm{\rho}_{\nu}}(L_n, r_n) \right\},\\
\tau :=& \frac{1}{2} ,\ \alpha := \varepsilon_{n}^2 ,\ K_1 = K_2 = 1.
\end{align*}
First, it is easy to check $||g(\bm{x})||_{\infty} \leq 1$ and $\mathbb{E}(g(\bm{X})^2) \leq \mathbb{E}(g(\bm{X}))$ for $g \in \mathcal{G}$.
Also, since 
\begin{align*}
& \left\| (\phi \circ T_F \circ f_{\bm{\theta}_1, \bm{\rho}_{\nu}}^{\operatorname{DNN}} - \phi \circ f_0)^2 
- (\phi \circ T_F \circ f_{\bm{\theta}_2, \bm{\rho}_{\nu}}^{\operatorname{DNN}} - \phi \circ f_0)^2 \right\|_{1, n} \\ 
& =
\left\| ( \phi \circ T_F \circ f_{\bm{\theta}_1, \bm{\rho}_{\nu}}^{\operatorname{DNN}} - \phi \circ f_0 + 
\phi \circ T_F \circ f_{\bm{\theta}_2, \bm{\rho}_{\nu}}^{\operatorname{DNN}} - \phi \circ f_0)
(\phi \circ T_F \circ f_{\bm{\theta}_2, \bm{\rho}_{\nu}}^{\operatorname{DNN}} - \phi \circ T_F \circ f_{\bm{\theta}_2, \bm{\rho}_{\nu}}^{\operatorname{DNN}} ) \right\|_{1, n}  \\
& \leq 4 \left\| \phi \circ T_F \circ f_{\bm{\theta_1}, \bm{\rho}_{\nu}}^{\operatorname{DNN}}  - \phi \circ T_F \circ f_{\bm{\theta}_2, \bm{\rho}_{\nu}}^{\operatorname{DNN}} \right\|_{1, n} \\
& \leq 4 \left\|T_F \circ f_{\bm{\theta}_1, \bm{\rho}_{\nu}}^{\operatorname{DNN}}  -  T_F \circ f_{\bm{\theta}_2, \bm{\rho}_{\nu}}^{\operatorname{DNN}} \right\|_{1, n}
\end{align*}
holds for any $f_{\bm{\theta}_1, \bm{\rho}_{\nu}}^{\operatorname{DNN}}, f_{\bm{\theta}_2, \bm{\rho}_{\nu}}^{\operatorname{DNN}} \in \mathcal{F}^{\operatorname{DNN}}_{\bm{\rho}_{\nu}}(L_n, r_n)$, there exists $c_{36}>0$ such that
\begin{align*}
\mathcal{N} \left( u, \mathcal{G}, ||\cdot||_{1, n} \right)
\leq& \mathcal{N}\left( \frac{u}{4}, T_F \circ \mathcal{F}^{\operatorname{DNN}}_{\bm{\rho}_{\nu}}(L_n, r_n), ||\cdot||_{1, n} \right) \\
\leq& \mathcal{M}\left( \frac{u}{4}, T_F \circ \mathcal{F}^{\operatorname{DNN}}_{\bm{\rho}_{\nu}}(L_n, r_n), ||\cdot||_{1, n} \right) \\
\leq& 3\left(\frac{16 e F}{u} \log \frac{24 e F}{u}\right)^{\mathcal{F}^{\operatorname{DNN}}_{\bm{\rho}_{\nu}}(L_n, r_n)^{+}}\\
\lesssim& n^{c_{36} L_n^2 r_n^2 \log (L_n r_n^2)}
\end{align*}
for $u \geq n^{-1}$ by Theorem 9.4 of \citet{gyorfi2002distribution} and Theorem 7 of \citet{bartlett2019nearly}.
Hence for all $t \geq \frac{\varepsilon_n^2}{8}$,
\begin{align*}
\int_{\frac{\tau(1-\tau)t}{16 \max \left\{K_{1}, 2 K_{2}\right\}}}^{\sqrt{t}}  
\sqrt{\log \mathcal{N} \left( u, \mathcal{G}, ||\cdot||_{1, n} \right)} d u 
\lesssim& \sqrt{t} \left( n^{\frac{d}{2\beta + d}} (\log n)^{4} \right)^{\frac{1}{2}}\\
=& o\left( \frac{\sqrt{n} \tau (1-\tau) t}{96 \sqrt{2} \max \left\{K_1, 2K_2 \right\}} \right)
\end{align*}
holds.
To sum up, we conclude that
\begin{align}
\mathbf{P}\left\{\sup _{f \in \mathcal{F}^{\operatorname{DNN}}(L_n, r_n)} \frac{\left| ||\phi \circ f-\phi \circ f_0||_{2, \mathrm{P}_{X}}^2  - ||\phi \circ f- \phi \circ f_0||_{2, n}^2 \right|}{\varepsilon_{n}^2+||\phi \circ f - \phi \circ f_0||_{2, \mathrm{P}_{X}}^2}>\frac{1}{2}\right\}
\leq 60 \exp \left(-\frac{n \varepsilon_{n}^2 / 8}{128 \cdot 2304}\right)
\label{last_cla}
\end{align}
holds for all but finite many $n$ by Lemma \ref{gyorfi2002distribution}. 
Hence by (\ref{empirical_concen2_cla}) and (\ref{last_cla}), we obtain the assertion.

\end{proof}

\subsection{Proof for hierarchical compositional structure Theorem \ref{thm_con_comp}}
\label{sec_proof_con_comp}
The primary advantage of assuming such a hierarchical composition structure is that the dimensions of each $g$ are significantly smaller compared to the overall dimensions. 
This fact allows the function to be approximated with a smaller DNN model, as demonstrated in the following lemma.

\begin{lemma} \label{thm_approx_comp}	
For $\nu \in [0,1)$, $\bm{N} \in \mathbb{N}^q$ and $\mathcal{P} \subset [\beta_{min}, \beta_{max}] \times \{ 1, \dots, d_{max} \}$,  	
there exist positive constants $\tilde{C}_L$, $\tilde{C}_r$, $\tilde{C}_B$ and $c_2$ 
such that for every $f_0$ that follows the hierarchical composition structure $\mathcal{H}(\bm{N},\mathcal{P})$,
	there exists $f_{\hat{\bm{\theta}}, \bm{\rho}_{\nu}}^{\operatorname{DNN}} \in \mathcal{F}_{\bm{\rho}_{\nu}}^{\operatorname{DNN}}(L_n, r_n, \tilde{C}_B)$,
where $L_n$ and $r_n$ are given by
\begin{align*}
    L_n :=& \lceil \tilde{C}_L \log_2 n \rceil, \\
    r_n :=& \lceil \tilde{C}_r \max_{(\beta', d') \in \mathcal{P}} n^{\frac{d'}{2(2\beta' + d')}} \rceil,
\end{align*} 
such that
	\begin{align*}
	\left\|f_{\hat{\bm{\theta}}, \bm{\rho}_{\nu}}^{\operatorname{DNN}}-f_{0}\right\|_{\infty, [-a,a]^d} \leq c_2 \max_{(\beta',d') \in \mathcal{P}} 
    n^{-\frac{\beta'}{2\beta'+d'}}
	\end{align*}
holds.
\end{lemma}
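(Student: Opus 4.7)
The plan is to construct the approximating DNN by working through the hierarchical composition level by level. For each component function $g_{l,i} \in \mathcal{H}_{d_{l,i}}^{\beta_{l,i}}(K)$, I would invoke Theorem \ref{thm_approx} with $M_{l,i} := \lceil n^{1/(2(2\beta_{l,i}+d_{l,i}))} \rceil$, obtaining a sub-network $\hat g_{l,i} \in \mathcal{F}^{\operatorname{DNN}}_{\bm{\rho}_\nu}(\lceil C_L \log_2 M_{l,i}\rceil, \lceil C_r M_{l,i}^{d_{l,i}}\rceil, C_B)$ with $\|\hat g_{l,i}-g_{l,i}\|_{\infty,[-a,a]^{d_{l,i}}} \leq c_1 M_{l,i}^{-2\beta_{l,i}} \lesssim n^{-\beta_{l,i}/(2\beta_{l,i}+d_{l,i})}$. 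Since $\beta_{l,i} \in [\beta_{min}, \beta_{max}]$ and $d_{l,i} \leq d_{max}$, every such $M_{l,i}$ is polynomial in $n$ with exponent bounded below by a positive constant, so all component depths are $\Theta(\log n)$ and all component widths are $O(\max_{(\beta',d') \in \mathcal{P}} n^{d'/(2(2\beta'+d'))})$.

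To assemble the component networks into a single fully-connected DNN, I would first pad every sub-network to a common depth $\Theta(\log n)$ using the identity network $\bm{f}_{id}$ from Lemma \ref{aux_network_basic} (which costs only depth and a constant factor in width, with weights bounded by $1/(1+\nu)$). At each level $l$, the $N_l$ padded sub-networks are stacked side-by-side into a single wide fully-connected block: unused connections between distinct sub-networks are encoded via zero weights, which is permitted because the bound $|\bm{\theta}|_\infty \leq \tilde C_B$ constrains only the magnitude, not the support, of the parameters. The total width per level is bounded by $\sum_i C_r M_{l,i}^{d_{l,i}} + O(d_{max}) \leq |\bm{N}|_\infty \cdot C_r \max_{(\beta',d')\in\mathcal{P}} n^{d'/(2(2\beta'+d'))} + O(d_{max})$, and the total depth is at most $q$ times the single-level depth, which is $O(\log n)$ since $q$ is fixed. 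This yields the announced architecture $(L_n, r_n, \tilde C_B)$.

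For the error analysis, write $\hat f_{l,i}$ for the output of the assembled block at node $(l,i)$ and set $e_l := \max_i \|\hat f_{l,i} - f_{l,i}\|_\infty$. The triangle inequality and $C_{Lip}$-Lipschitzness of each $g_{l,i}$ give
\begin{equation*}
e_l \leq \max_i \|\hat g_{l,i}(\hat f_{l-1,\cdot}) - g_{l,i}(f_{l-1,\cdot})\|_\infty \leq C_{Lip}\sqrt{d_{max}} \cdot e_{l-1} + \max_i \|\hat g_{l,i} - g_{l,i}\|_\infty,
\end{equation*}
so by induction $e_q \lesssim (C_{Lip}\sqrt{d_{max}})^q \cdot \max_{(\beta',d')\in\mathcal{P}} n^{-\beta'/(2\beta'+d')}$. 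Since $q$, $C_{Lip}$, and $d_{max}$ are all fixed, this matches the target rate. The truncation/clipping needed to keep intermediate outputs inside $[-F,F]^{d_{l,i}}$ (the domain on which the Hölder approximation of $g_{l,i}$ is valid) can be implemented by two Leaky-ReLU units per coordinate, with weights bounded by $1/(1-\nu^2)$.

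The main obstacle I anticipate is maintaining the uniform parameter bound $\tilde C_B$ through composition. Each constituent sub-network from Theorem \ref{thm_approx} already has weights bounded by $C_B$ and biases bounded by $a+1$, and all routing layers (identity padding, inter-level wiring, clipping) contribute only weights bounded by absolute constants depending on $\nu$, so $\tilde C_B$ can be taken as the maximum of $C_B$ and these routing constants. The subtler point is that when $\hat g_{l,i}$ is applied to $\hat f_{l-1,\cdot}$ rather than to $f_{l-1,\cdot}$, the inputs to $\hat g_{l,i}$ may drift slightly outside $[-a,a]^{d_{l,i}}$; however, since $e_{l-1} = o(1)$ and the clipping layer forces the inputs back into $[-F,F]^{d_{l,i}}$, we may invoke Theorem \ref{thm_approx} on a slightly enlarged cube without affecting the constants in the final rate.
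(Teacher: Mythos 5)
Your proposal is correct and follows essentially the same route as the paper's proof: apply Theorem \ref{thm_approx} to each component $g_{l,i}$ with $M_{l,i} \asymp n^{1/(2(2\beta_{l,i}+d_{l,i}))}$, stack and depth-synchronize the sub-networks into one fully-connected architecture (zero weights for absent connections, identity blocks for padding), and propagate errors level by level via the $C_{Lip}$-Lipschitz recursion, yielding the factor $(C_{Lip}\sqrt{d_{max}})^{q}$ and the rate $\max_{(\beta',d')\in\mathcal{P}} n^{-\beta'/(2\beta'+d')}$. The only cosmetic difference is how the intermediate-range issue is handled: you insert explicit Leaky-ReLU clipping layers, whereas the paper simply invokes Theorem \ref{thm_approx} on the enlarged cube $[-a',a']$ with $a'=\max(a,2F)$ and uses that the approximants stay bounded by $2F\leq a'$; both devices are valid and affect only constants.
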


% Since $d_{max} \ll d $ implies $\max_{(\beta', d') \in \mathcal{P}} n^{\frac{d'}{2(2\beta' + d')}} \ll n^{\frac{d}{2(2\beta+d)}}$ and $\max_{(\beta',d') \in \mathcal{P}} n^{-\frac{\beta'}{2\beta'+d'}} \ll n^{-\frac{\beta}{2 \beta + d}}$, the size of the DNN has decreased while reducing the error, compared to Theorem \ref{thm_approx} with $M=n^{\frac{1}{2(2\beta + d)}}$. 
% Lemma \ref{thm_approx_comp} also maintains the same advantages as Theorem \ref{thm_approx} when compared to results from other papers.
% That is, our approximating networks not only have similar sizes to those of \citet{kohler2021rate} for hierarchical composition structure functions, but also their parameters are bounded. 

% Note that by Theorem \ref{thm_approx}, $f_0 \in \mathcal{H}_d^\beta(K)$ can be approximated with error $\lesssim n^{-\frac{\beta}{2 \beta + d}}$ by a network with depth $\lesssim \log_2(M)$ and width $\lesssim n^{\frac{\beta}{2(2\beta+d)}}$.
% Since $d_{\max} << d$ implies
% $\max_{(\beta', d') \in \mathcal{P}} n^{\frac{d'}{2(2\beta' + d')}} << n^{\frac{\beta}{2(2\beta+d)}}$
% and 
% $\max_{(\beta',d') \in \mathcal{P}} n^{-\frac{\beta'}{(2\beta'+d')}} << n^{-\frac{\beta}{2 \beta + d}}$,
% we can obtain faster concentration rate using this result.

\begin{proof}
    For each $l \in [q]$ and $i \in [N_{l}]$,
    we have $\beta_{min} \leq \beta_{l,i} \leq \beta_{max}$ and
    $1 \leq d_{l,i} \leq d_{max}$.
    Hence, there exist constants $c_{37}>0$, $c_{38}>0$, $c_{39} \geq 1$ and $c_{40} > 0$ such that for any sufficiently large $M_{l,i}$ there exists
    $\hat{g}_{l,i}^{\operatorname{DNN}} \in \mathcal{F}_{\bm{\rho}_{\nu}}^{\operatorname{DNN}}(\lceil c_{37} \log_2 n \rceil, \lceil c_{38} n^{\frac{d_{l,i}}{2(2\beta_{l,i} + d_{l,i})}} \rceil, c_{39})$
	with
	\begin{align}
	\left\|\hat{g}_{l,i}^{\operatorname{DNN}}-g_{l,i}\right\|_{\infty, [-a',a']^d} \leq c_{40} n^{-\frac{\beta_{l,i}}{2\beta_{l,i} + d_{l,i}}} \label{gli_approx}
	\end{align}
    by putting $M = n^{\frac{1}{2(2\beta_{l,i} + d_{l,i})}} $ in Theorem \ref{thm_approx}, where $a' := \max(a, 2F)$.
    From the bottom layer, we sequentially construct a network by 
    $$\hat{f}_{1,i}^{\operatorname{DNN}}(\bm{x}) = \hat{g}_{1,i}^{\operatorname{DNN}} \left( f_{0, \sum_{i'=1}^{i-1} d_{1,i'} + 1}(\bm{x}), \dots, f_{0, \sum_{i'=1}^{i-1} d_{1,i'} + d_{1,i}}(\bm{x})\right)$$
    for $i \in [N_1]$ and
    $$\hat{f}_{l,i}^{\operatorname{DNN}}(\bm{x})=\hat{g}_{l,i}^{\operatorname{DNN}} \left(\hat{f}^{\operatorname{DNN}}_{l-1, \sum_{i'=1}^{i-1} d_{l,i'} + 1}(\bm{x}), \dots, \hat{f}^{\operatorname{DNN}}_{l-1, \sum_{i'=1}^{i-1} d_{l,i'} + d_{l,i}}(\bm{x})\right)$$
    for $l \in \{2,\dots,q\}$ and $i \in [N_l]$.    
    Note that each 
    $$ \left( f_{0, \sum_{i'=1}^{i-1} d_{1,i'} + 1}(\bm{x}), \dots, f_{0, \sum_{i'=1}^{i-1} d_{1,i'} + d_{1,i}}(\bm{x})\right)$$
    for $i \in [N_1]$ is a permutation of $\bm{x}$ (with length $d_{l,i}$).
    Hence, by defining $\tilde{C}_L := 2q \cdot c_{37}$, $\tilde{C}_r := \max_{l \in [q]} N_l \cdot c_{38}$ and $\tilde{C}_B := c_{39}^2$, we obtain
    \begin{align}
        \hat{f}_{q,1}^{\operatorname{DNN}} \in \mathcal{F}^{\operatorname{DNN}}_{\bm{\rho}_{\nu}}\left(\lceil \tilde{C}_L \log_2 n \rceil, \lceil \tilde{C}_r \max_{(\beta, d) \in \mathcal{P}} n^{\frac{d}{2(2\beta + d)}} \rceil, \tilde{C}_B\right). \label{comp_approx_re_1}
    \end{align}
    
    % Now, we will show that $\hat{f}_{q,1}^{\operatorname{DNN}}$ can approximate $f_0$ with a small error.
    Now, for $l \in [q]$ and $i \in [N_l]$, we will show 
    \begin{align}
        \left\| \hat{f}_{l,i}^{\operatorname{DNN}} -  f_{l,i}\right\|_{\infty, [-a,a]^d} \leq c_{40} l \left( C_{Lip} \sqrt{d_{\max}} \right)^{l-1} 
        \max_{(\beta',d') \in \mathcal{P}} 
    n^{-\frac{\beta'}{(2\beta'+d')}}   \label{thm3_induction}
    \end{align}
    holds by induction.
    First, for $l=1$, we obtain (\ref{thm3_induction}) directly from (\ref{gli_approx}). 
    Assume that (\ref{thm3_induction}) holds for some $l \in [q-1]$ and every $i \in [N_l]$.
    Then, for any $j \in [N_{l+1}]$, we have
    \begin{align*}
        \left| \hat{f}_{l+1,j}^{\operatorname{DNN}} (\bm{x})
        - f_{l+1,j} (\bm{x}) \right| 
        =&  \Bigg| \hat{g}_{l+1,j}^{\operatorname{DNN}} \left( \hat{f}^{\operatorname{DNN}}_{l, \sum_{i'=1}^{j-1} d_{l+1,i'} + 1}(\bm{x}), \dots, \hat{f}^{\operatorname{DNN}}_{l, \sum_{i'=1}^{j-1} d_{l+1,i'} + d_{l+1,j}}(\bm{x}) \right) \\
        & \qquad - g_{l+1,j} \left( f_{l, \sum_{i'=1}^{j-1} d_{l+1,i'} + 1}(\bm{x}), \dots, f_{l, \sum_{i'=1}^{j-1} d_{l+1,i'} + d_{l+1,j}} (\bm{x}) \right) \Bigg| \\
         \leq & \Bigg| \hat{g}_{l+1,j}^{\operatorname{DNN}} \left( \hat{f}^{\operatorname{DNN}}_{l, \sum_{i'=1}^{j-1} d_{l+1,i'} + 1}(\bm{x}), \dots, \hat{f}^{\operatorname{DNN}}_{l, \sum_{i'=1}^{j-1} d_{l+1,i'} + d_{l+1,j}}(\bm{x}) \right) \\
        & \qquad - g_{l+1,j} \left( \hat{f}^{\operatorname{DNN}}_{l, \sum_{i'=1}^{j-1} d_{l+1,i'} + 1}(\bm{x}), \dots, \hat{f}^{\operatorname{DNN}}_{l, \sum_{i'=1}^{j-1} d_{l+1,i'} + d_{l+1,j}}(\bm{x}) \right) \Bigg| \\
        & + \Bigg| g_{l+1,j} \left( \hat{f}^{\operatorname{DNN}}_{l, \sum_{i'=1}^{j-1} d_{l+1,i'} + 1}(\bm{x}), \dots, \hat{f}^{\operatorname{DNN}}_{l, \sum_{i'=1}^{j-1} d_{l+1,i'} + d_{l+1,j}}(\bm{x}) \right) \\
        & \qquad - g_{l+1,j} \left( f_{l, \sum_{i'=1}^{j-1} d_{l+1,i'} + 1}(\bm{x}), \dots, f_{l, \sum_{i'=1}^{j-1} d_{l+1,i'} + d_{l+1,j}} (\bm{x}) \right) \Bigg| \\
        \leq & c_{40} n^{-\frac{\beta_{l+1,j}}{2\beta_{l+1,j} + d_{l+1,j}}} \\
        & + C_{Lip} \sqrt{d_{l+1, j}}  c_{40} l \left( C_{Lip} \sqrt{d_{\max}} \right)^{l-1} \max_{(\beta',d') \in \mathcal{P}} 
    n^{-\frac{\beta'}{(2\beta'+d')}} \\
        \leq & c_{40} (l+1) \left( C_{Lip} \sqrt{d_{\max}} \right)^{l} \max_{(\beta',d') \in \mathcal{P}} 
    n^{-\frac{\beta'}{(2\beta'+d')}}
    \end{align*}
    for any $\bm{x} \in [-a,a]^d$, where the second inequality holds by $||\hat{f}^{\operatorname{DNN}}_{l, i}||_{\infty, [-a',a']^d} \leq 2F \leq a'$ for $i \in [N_l]$ and the Lipschitz condition of $g_{l+1,j}$.
    By defining $c_2 := c_{40} q \left( C_{Lip} \sqrt{d_{\max}} \right)^{q-1} $, we obtain
	\begin{align}
	\left\|\hat{f}_{q,1}^{\operatorname{DNN}} - f_{0} \right\|_{\infty, [-a,a]^d} \leq c_2 \max_{(\beta',d') \in \mathcal{P}} 
    n^{-\frac{\beta'}{(2\beta'+d')}}. \label{comp_approx_re_2}
	\end{align}
    By (\ref{comp_approx_re_1}) and (\ref{comp_approx_re_2}), we obtain the assertion.
\end{proof}

\begin{proof}[Theorem \ref{thm_con_comp}]
We only present results for nonparametric Gaussian regression.
Extending to nonparametric logistic regression can be done similarly to those in Appendix \ref{app_B_con_cla}.
Without loss of generality,
we consider $\gamma$ in $(2, \frac{5}{2})$.  
We define $\mathcal{F}_n$ as the set of pairs of truncated DNN with the $(L_n, \bm{r}_n)$ architecture and variance of the Gaussian noise, 
\begin{align}
\mathcal{F}_n := \Big\{ \left(T_F \circ f_{\bm{\theta}, \bm{\rho}_{\nu}}^{\operatorname{DNN}} , \sigma^2 \right)^{\top} \ : \ f_{\bm{\theta}, \bm{\rho}_{\nu}}^{\operatorname{DNN}} \in \mathcal{F}^{\operatorname{DNN}}_{\bm{\rho}_{\nu}}(L_n, r_n),\ 0 < \sigma^2 \leq e^{4n \varepsilon_n^2} \Big\}, \label{F_n_comp_reg}
\end{align}
where $L_n$ and $r_n$ are defined on (\ref{network_size_comp}).
We denote $T_n$ as the number of parameters in the DNN model with the ($L_n, \bm{r}_n$) architecture. 
% In other words,
% $$T_n := \sum_{l=1}^{L_n+1} (r_n^{(l-1)}+1)r_n^{(l)}$$

First, for given $\bm{x}^{(n)} = (\bm{x}_1 , \dots, \bm{x}_n)$, we obtain
\begin{align*} 
\sup _{\varepsilon>\varepsilon_{n}} \log \mathcal{N} \left(\frac{1}{36} \varepsilon,\left\{(f, \sigma^2) \in \mathcal{F}_{n}: h_{n}\left((f, \sigma^2), (f_0, \sigma_0^2)\right)<\varepsilon\right\}, h_{n}\right)
\lesssim  n \varepsilon_{n}^2 
\end{align*}    
under the conditions of Theorem \ref{thm_con_comp}, by following the proof of Lemma \ref{reg_cond1}.
Also, we define
$K_i((f_0, \sigma_0^2), (f, \sigma^2))$, $V_{i} ((f_0, \sigma_0^2), (f, \sigma^2))$ and $B_{n}^{*}\left((f_0, \sigma_0^2), \varepsilon_n\right)$ in the same way as in Lemma \ref{reg_cond2}, with the only change in the definition of $\mathcal{F}_{n}$ by (\ref{F_n_comp_reg}).
By Lemma \ref{thm_approx_comp}, there exists $f_{\hat{\bm{\theta}}, \bm{\rho}_{\nu}}^{\operatorname{DNN}} \in \mathcal{F}_{\bm{\rho}_{\nu}}^{\operatorname{DNN}}(L_n, r_n, C_B)$ such that
\begin{align}
\left\|f_{\hat{\bm{\theta}}, \bm{\rho}_{\nu}}^{\operatorname{DNN}} - f_{0}\right\|_{\infty, [-a,a]^d} 
\leq & c_2 \max_{(\beta',d') \in \mathcal{P}} 
    n^{-\frac{\beta'}{2\beta'+d'}} \nonumber \\
< & \frac{\sigma_0 \varepsilon_n}{4} \nonumber
\end{align}
satisfies for sufficiently large $n$. 
Hence, we obtain
\begin{align*}
\Pi_{\bm{\theta}, \sigma^2}\left( B_{n}^{*}\left((f_0 , \sigma_0^2), \varepsilon_n \right) \right) 
\gtrsim& e^{-n \varepsilon_{n}^2} 
\end{align*}
and
\begin{align*}
\frac{\Pi_{\bm{\theta}, \sigma^2}\left( \mathcal{F}_{n}^{c} \right)}{\Pi_{\bm{\theta}, \sigma^2}\left(B_{n}^{*}\left( (f_0 , \sigma_0^2), \varepsilon_{n} \right)\right)} 
= o(e^{-2n \varepsilon_{n}^2}) 
\end{align*}
under the conditions of Theorem \ref{thm_con_comp}, by following the proofs of Lemma \ref{reg_cond2} and \ref{reg_cond3}.
The rest of the proof can be completed along the lines of that of Theorem \ref{thm_regression}.
\end{proof}

\newpage

\section{Proof for Theorem \ref{thm_adaptive}} \label{sec_proof_adapt}
\renewcommand{\theequation}{C.\arabic{equation}}

Extending to nonparametric logistic regression is straightforward by following Appendix \ref{app_B_con_cla}, therefore, we only present results for nonparametric Gaussian regression.
Without loss of generality,
we consider $\gamma$ in $(\frac{5}{2}, 3)$.  
We define
$$ \xi_n := \lceil \tilde{C}_r \max_{(\beta', d') \in \mathcal{P}} n^{\frac{d'}{2(2\beta' + d')}} \rceil$$
and
$$ \bm{\xi}_n := (d, \xi_n , \dots, \xi_n, 1)^{\top} \in \mathbb{N}^{L_n + 2},$$
where $\tilde{C}_r$ is a constant (depending on $\beta_{min}$, $\beta_{max}$ and $d_{max}$) defined in Lemma \ref{thm_approx_comp}.
We denote $S_n$ as the number of parameters in the DNN model with the ($L_n, \bm{\xi}_n$) architecture. 
In other words,
$$S_n := \sum_{l=1}^{L_n+1} (\xi_n^{(l-1)}+1)\xi_n^{(l)}.$$
We define the sieve $\mathcal{F}_n$ as
\begin{align}
\mathcal{F}_n := \Big\{ \left(T_F \circ f_{\bm{\theta}, \bm{\rho}_{\nu}}^{\operatorname{DNN}} , \sigma^2 \right)^{\top} \ : \ f_{\bm{\theta}, \bm{\rho}_{\nu}}^{\operatorname{DNN}} \in \bigcup_{r=1}^{\xi_n} \mathcal{F}^{\operatorname{DNN}}_{\bm{\rho}_{\nu}}(L_n, r),\ 0 < \sigma^2 \leq e^{4n \varepsilon_n^2} \Big\}, \label{F_n_adaptive_reg}
\end{align}
where $L_n$ is defined on (\ref{network_size_adapt}).
% Note that $\xi_n$ has same value with $r_n$ of Theorem \ref{thm_regression}, and hence two definitions of $\mathcal{F}_n$ are equivalent.

\begin{lemma} \label{reg_cond1_adapt}
    For given $\bm{x}^{(n)} = (\bm{x}_1 , \dots, \bm{x}_n)$, we have
    \begin{align*} 
    \sup _{\varepsilon>\varepsilon_{n}} \log \mathcal{N} \left(\frac{1}{36} \varepsilon,\left\{(f, \sigma^2) \in \mathcal{F}_{n}: h_{n}\left((f, \sigma^2), (f_0, \sigma_0^2)\right)<\varepsilon\right\}, h_{n}\right)
    \lesssim  n \varepsilon_{n}^2
    \end{align*}    
    under the conditions of Theorem \ref{thm_adaptive}, where $\mathcal{F}_{n}$  is defined on (\ref{F_n_adaptive_reg}).
\end{lemma}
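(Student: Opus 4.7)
The plan is to mirror the proof of Lemma \ref{reg_cond1}, exploiting the fact that the only structural difference is that the width parameter is now free to range from $1$ to $\xi_n$ in the sieve. First I would use the enlarging property
$\mathcal{F}^{\operatorname{DNN}}_{\bm{\rho}_{\nu}}(L_n, r) \subseteq \mathcal{F}^{\operatorname{DNN}}_{\bm{\rho}_{\nu}}(L_n, \xi_n)$,
stated right after the definition of $\mathcal{F}^{\operatorname{DNN}}_{\bm{\rho}_{\nu}}$, to collapse the union $\bigcup_{r=1}^{\xi_n} \mathcal{F}^{\operatorname{DNN}}_{\bm{\rho}_{\nu}}(L_n, r)$ into the single class $\mathcal{F}^{\operatorname{DNN}}_{\bm{\rho}_{\nu}}(L_n, \xi_n)$. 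After this reduction, the sieve $\mathcal{F}_n$ becomes structurally identical to the sieve considered in Lemma \ref{reg_cond1}, but with $r_n$ replaced by $\xi_n$.

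With this reduction in hand, I would repeat the steps of Lemma \ref{reg_cond1} essentially verbatim: introduce the semimetric $d_n^2\big((f_1,\sigma_1^2),(f_2,\sigma_2^2)\big) = \|f_1 - f_2\|_{1,n} + |\sigma_1^2 - \sigma_2^2|^2$, invoke Lemma B.1 of \citet{xie2020adaptive} to obtain $h_n^2 \lesssim d_n^2$, factor the covering number into a $\sigma^2$-covering of the truncated interval $(0, e^{4n\varepsilon_n^2}]$ and a covering of $T_F \circ \mathcal{F}^{\operatorname{DNN}}_{\bm{\rho}_{\nu}}(L_n, \xi_n)$ in $\|\cdot\|_{1,n}$, and then apply the pseudo-dimension bound of Theorem 7 of \citet{bartlett2019nearly} to the DNN class. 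This yields an entropy bound of order $n\varepsilon_n^2 + L_n^2 \xi_n^2 \log(L_n \xi_n^2)\,\log n$, in direct analogy with (\ref{enp_bound_tmp}).

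Finally I would verify that the second term is absorbed into $n\varepsilon_n^2$. Since $L_n \asymp \log n$ and $\xi_n \asymp \max_{(\beta',d') \in \mathcal{P}} n^{d'/(2(2\beta'+d'))}$, the second term is of order $(\log n)^4 \cdot \max_{(\beta',d') \in \mathcal{P}} n^{d'/(2\beta'+d')}$, whereas $n\varepsilon_n^2 \asymp (\log n)^{2\gamma} \cdot \max_{(\beta',d') \in \mathcal{P}} n^{d'/(2\beta'+d')}$. Because $\gamma > 5/2 > 2$, the inequality is clear, and the conclusion follows.

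The main obstacle is really just bookkeeping: the only subtlety is making sure that the nesting step collapses the union cleanly so that the resulting pseudo-dimension term retains the form $L_n^2 \xi_n^2 \log(L_n \xi_n^2)$ rather than picking up an extra factor of $\xi_n$ from a union bound over widths. Note that the prior on $r$ plays no role in this particular lemma; it will enter only in the analogues of Lemmas \ref{reg_cond2} and \ref{reg_cond3}, where the $e^{-(\log n)^5 r^2}$ factor at the width $r^{\star}$ corresponding to the optimal pair $(\beta^{\star},d^{\star}) \in \mathcal{P}$ must be balanced against $e^{-n\varepsilon_n^2}$, which is precisely why $\gamma > 5/2$ is imposed globally.
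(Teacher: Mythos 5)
Your proposal is correct and follows essentially the same route as the paper: reduce the sieve to the single largest-width class $\mathcal{F}^{\operatorname{DNN}}_{\bm{\rho}_{\nu}}(L_n,\xi_n)$, reuse the entropy bound from Lemma \ref{reg_cond1} (i.e.\ the display (\ref{enp_bound_tmp})) with $r_n$ replaced by $\xi_n$, and absorb the resulting $L_n^2\xi_n^2\log(L_n\xi_n^2)\log n$ term into $n\varepsilon_n^2$ using $\gamma>5/2$. The only cosmetic difference is that the paper bounds the covering number of the union by $\xi_n$ times that of the largest class (an extra additive $\log\xi_n$, which is negligible), whereas you collapse the union exactly via the nesting property; both are fine.
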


\begin{proof}
We have
\begin{align} 
& \sup _{\varepsilon>\varepsilon_{n}} \log \mathcal{N} \left(\frac{1}{36} \varepsilon,\left\{(f, \sigma^2) \in \mathcal{F}_{n}: h_{n}\left((f, \sigma^2), (f_0, \sigma_0^2)\right)<\varepsilon\right\}, h_{n}\right) \nonumber \\
& \leq 
\log \mathcal{N} \left(\frac{1}{36} \varepsilon_n, \mathcal{F}_{n}, h_{n}\right) \nonumber \\
& \leq 
\log \left( \sum_{r=1}^{\xi_n}  \mathcal{N} \left(\frac{1}{36} \varepsilon_n, \Big\{ \left(T_F \circ f_{\bm{\theta}, \bm{\rho}_{\nu}}^{\operatorname{DNN}} , \sigma^2 \right)^{\top} :  f_{\bm{\theta}, \bm{\rho}_{\nu}}^{\operatorname{DNN}} \in  \mathcal{F}^{\operatorname{DNN}}_{\bm{\rho}_{\nu}}(L_n, r),\ 0 < \sigma^2 \leq e^{4n \varepsilon_n^2} \Big\}, h_{n}\right) \right) \nonumber \\
& \leq 
\log \left( \xi_n   \mathcal{N} \left(\frac{1}{36} \varepsilon_n, \Big\{ \left(T_F \circ f_{\bm{\theta}, \bm{\rho}_{\nu}}^{\operatorname{DNN}} , \sigma^2 \right)^{\top} :  f_{\bm{\theta}, \bm{\rho}_{\nu}}^{\operatorname{DNN}} \in  \mathcal{F}^{\operatorname{DNN}}_{\bm{\rho}_{\nu}}(L_n, \xi_n),\ 0 < \sigma^2 \leq e^{4n \varepsilon_n^2} \Big\}, h_{n}\right) \right) \nonumber \\
& \lesssim \log \xi_n + n \varepsilon_{n}^2 + L_n^2 \xi_n^2 \log L_n \xi_n^2 \log n \nonumber\\
& \lesssim  n \varepsilon_{n}^2,  \nonumber
\end{align}   
where the fourth inequality holds by (\ref{enp_bound_tmp}).
\end{proof}

\begin{lemma} \label{reg_cond2_adapt}
    For given $\bm{x}^{(n)} = (\bm{x}_1 , \dots, \bm{x}_n)$, we define
    $K_i((f_0, \sigma_0^2), (f, \sigma^2))$, $V_{i} ((f_0, \sigma_0^2), (f, \sigma^2))$ and $B_{n}^{*}\left((f_0, \sigma_0^2), \varepsilon_n\right)$ in the same way as in Lemma \ref{reg_cond2}, with the only change in the definition of $\mathcal{F}_{n}$ by (\ref{F_n_adaptive_reg}).
    Then, we have
    \begin{align*}
    \Pi_{r, \bm{\theta}, \sigma^2}\left( B_{n}^{*}\left((f_0 , \sigma_0^2), \varepsilon_n \right) \right) 
    \gtrsim& e^{-n \varepsilon_{n}^2}
    \end{align*}
    under the conditions of Theorem \ref{thm_adaptive}.
\end{lemma}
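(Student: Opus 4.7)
The plan is to reduce the adaptive case to the fixed-width case of Lemma \ref{reg_cond2}, paying an extra price coming from the prior on $r$. Let $r^{*} := \lceil \tilde{C}_r \max_{(\beta', d') \in \mathcal{P}} n^{d'/(2(2\beta' + d'))} \rceil$ be the width prescribed by Lemma \ref{thm_approx_comp}, so that $(r^{*})^2 \lesssim \max_{(\beta',d')\in\mathcal{P}} n^{d'/(2\beta'+d')}$. Note that $r^{*} \leq \xi_n$, so conditioning on $\{r = r^{*}\}$ keeps us inside $\mathcal{F}_n$. I would then decompose
\[
\Pi_{r,\bm{\theta},\sigma^{2}}\bigl(B_{n}^{*}((f_0,\sigma_0^2),\varepsilon_n)\bigr)
\ \geq\ \Pi_r(r^{*})\ \Pi_{\bm{\theta}\mid r^{*}}\!\bigl(A\bigr)\ \Pi_{\sigma^{2}}\!\bigl([\sigma_0^{2},(1+\varepsilon_n^{2})\sigma_0^{2}]\bigr),
\]
where $A$ is the event used in the proof of Lemma \ref{reg_cond2} that guarantees membership in $B_{n}^{*}$ once the width equals $r^{*}$.

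The first factor is bounded below by the prior in \eqref{width_prior}: up to a normalizing constant bounded away from $0$ and $\infty$, $\Pi_r(r^{*}) \gtrsim \exp\bigl(-(\log n)^{5}(r^{*})^{2}\bigr)$, and hence
\[
\log \Pi_r(r^{*}) \gtrsim -(\log n)^{5} \max_{(\beta',d')\in\mathcal{P}} n^{d'/(2\beta'+d')}.
\]
For the second factor, Lemma \ref{thm_approx_comp} produces $\hat{\bm{\theta}} \in [-\tilde{C}_B,\tilde{C}_B]^{S_n}$ with $\|f_{\hat{\bm{\theta}},\bm{\rho}_\nu}^{\operatorname{DNN}} - f_0\|_\infty \leq c_2 \max n^{-\beta'/(2\beta'+d')} < \sigma_0\varepsilon_n/4$. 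Following verbatim the argument leading to \eqref{tmp_for_adapt_low} (with $T_n$ replaced by $S_n = O(L_n (r^{*})^2)$, width $r^{*}$, and Lemma \ref{lemma_similar} applied to bound the per-coordinate perturbation), Assumption \ref{assumption_theta_prior} yields
\[
\log \Pi_{\bm{\theta}\mid r^{*}}(A) \gtrsim -S_n \log\!\bigl(a(d{+}1)(r^{*}{+}1)^{L_n} \tilde{C}_B^{L_n}(L_n{+}1)/(\sigma_0\varepsilon_n \delta_{\tilde{C}_B})\bigr) \gtrsim -(\log n)^{3} \max_{(\beta',d')\in\mathcal{P}} n^{d'/(2\beta'+d')},
\]
and $\log \Pi_{\sigma^{2}}([\sigma_0^{2},(1{+}\varepsilon_n^{2})\sigma_0^{2}]) \gtrsim \log \varepsilon_n^{2}$ by Assumption \ref{assumption_sigma_prior}.

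Combining the three factors, the dominant term is the $\Pi_r$ contribution $-(\log n)^{5} \max n^{d'/(2\beta'+d')}$. Since $n\varepsilon_n^{2} = \max n^{d'/(2\beta'+d')} (\log n)^{2\gamma}$ and the theorem assumes $\gamma > 5/2$, i.e. $2\gamma > 5$, we obtain $(\log n)^{5} \max n^{d'/(2\beta'+d')} = o(n\varepsilon_n^{2})$, and therefore $\Pi_{r,\bm{\theta},\sigma^{2}}(B_{n}^{*}) \gtrsim e^{-n\varepsilon_n^{2}}$ for all but finitely many $n$, as desired.

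The main obstacle is the careful bookkeeping of the three logarithmic penalties and confirming that they all fit into the $n\varepsilon_n^{2}$ budget: the prior-on-$r$ penalty scales like $(\log n)^{5}(r^{*})^{2}$, which is exactly what forces the strengthening from $\gamma > 2$ in Theorem \ref{thm_con_comp} to $\gamma > 5/2$ here. Everything else (the covering-number bound in Lemma \ref{reg_cond1_adapt}, the sieve-complement bound as in Lemma \ref{reg_cond3}, and the in-sample-to-population conversion via Lemma \ref{gyorfi2002distribution}) transfers from the proof of Theorem \ref{thm_con_comp} with only notational changes, since the sieve $\mathcal{F}_n$ in \eqref{F_n_adaptive_reg} is contained in a fixed-width class of width $\xi_n$ and the extra factor $\xi_n$ in the covering number is absorbed into $n\varepsilon_n^{2}$.
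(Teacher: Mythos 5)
Your proposal is correct and follows essentially the same route as the paper: condition on the width $r$ equal to the value prescribed by Lemma \ref{thm_approx_comp} (paying the $e^{-(\log n)^5 r^2}$ prior cost from (\ref{width_prior})), reuse the fixed-width prior-mass bound from the proof of Lemma \ref{reg_cond2} (the display (\ref{tmp_for_adapt_low}), via Lemma \ref{lemma_similar} and Assumptions \ref{assumption_theta_prior}--\ref{assumption_sigma_prior}), and absorb the dominant $(\log n)^5 \max_{(\beta',d')\in\mathcal{P}} n^{d'/(2\beta'+d')}$ penalty into $n\varepsilon_n^2$ using $\gamma>5/2$. No gaps.
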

\begin{proof}
We define
$$ \xi_n^{\prime} := \lceil \tilde{C}_r \max_{(\beta', d') \in \mathcal{P}} n^{\frac{d'}{2(2\beta' + d')}} \rceil$$
and
\begin{align*}
    \bm{\xi}_n^{\prime} &:= (d, \xi_n^{\prime} , \dots, \xi_n^{\prime}, 1)^{\top} \in \mathbb{N}^{L_n + 2}, \\
    S_n^{\prime} &:= \sum_{l=1}^{L_n+1} ({\xi_n^{\prime}}^{(l-1)}+1){\xi_n^{\prime}}^{(l)}.
\end{align*}
Then, by Lemma \ref{thm_approx_comp}, 
there exists $f_{\hat{\bm{\theta}}, \bm{\rho}_{\nu}}^{\operatorname{DNN}} \in \mathcal{F}_{\bm{\rho}_{\nu}}^{\operatorname{DNN}}(L_n, \xi_n^{\prime}, \tilde{C}_B)$ such that
\begin{align*}
\left\|f_{\hat{\bm{\theta}}, \bm{\rho}_{\nu}}^{\operatorname{DNN}} - f_{0}\right\|_{\infty, [-a,a]^d} 
\leq & c_2 \max_{(\beta',d') \in \mathcal{P}} 
    n^{-\frac{\beta'}{2\beta'+d'}} \\
< & \frac{\sigma_0 \varepsilon_n}{4} 
\end{align*}
holds for sufficiently large $n$. 
%With (\ref{A_in_B_adapt}) and (\ref{thetahat_def_adapt}), 
Hence, we obtain 
\begin{align}
& \Pi_{r, \bm{\theta}, \sigma^2} \left( B_{n}^{*}\left((f_0 , \sigma_0^2), \varepsilon_n \right) \right) \nonumber \\
&\geq \Pi_{r, \bm{\theta}, \sigma^2}  \left( B_{n}^{*}\left((f_0 , \sigma_0^2), \varepsilon_n \right) \right) \nonumber \\
&\geq \Pi_{r}(\xi_n^{\prime}) \Pi_{(\bm{\theta}, \sigma^2) |r=\xi_n^{\prime} }  \left( B_{n}^{*}\left((f_0 , \sigma_0^2), \varepsilon_n \right) \right) \nonumber \\
& \gtrsim \frac{1}{(\log n)^5} e^{- (\log n)^5 (\xi_n^{\prime})^2} \delta_1^{S_n^{\prime}} \left( \frac{\sigma_0 \varepsilon_n}{2 a (d+1) (\xi_n^{\prime} + 1)^{L_n} C_B^{L_n} (L_n + 1)} \right)^{S_n^{\prime}} \delta_2 \varepsilon_n^2 \nonumber \\
& \gtrsim \exp \left( - (\log n)^5 C_r^2 \max_{(\beta',d') \in \mathcal{P}} 
    n^{\frac{d'}{2\beta'+d'}} \right) \exp \left(- C_r^2 C_L (\log n) \max_{(\beta',d') \in \mathcal{P}} 
    n^{\frac{d'}{2\beta'+d'}} (\log n)^{2}\right) n^{-1} \nonumber \\
& \gtrsim e^{-n \varepsilon_{n}^2} \label{lower_adapt}
\end{align}
for all but finite many $n$, where the third inequality holds by the proof of (\ref{tmp_for_adapt_low}).
\end{proof}

\begin{lemma} \label{reg_cond3_adapt}
For given $\bm{x}^{(n)} = (\bm{x}_1 , \dots, \bm{x}_n)$, we have
\begin{align*}
\frac{\Pi_{r, \bm{\theta}, \sigma^2}\left( \mathcal{F}_{n}^{c} \right)}{\Pi_{r, \bm{\theta}, \sigma^2}\left(B_{n}^{*}\left( (f_0 , \sigma_0^2), \varepsilon_{n} \right)\right)} 
= o(e^{-2n \varepsilon_{n}^2})
\end{align*}
under the conditions of Theorem \ref{thm_adaptive}, where $\mathcal{F}_{n}$  is defined on (\ref{F_n_adaptive_reg}).
\end{lemma}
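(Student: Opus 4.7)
The plan is the standard sieve-complement decomposition. Writing $\mathcal{F}_n^c \subseteq \{r > \xi_n\} \cup \{\sigma^2 > e^{4n\varepsilon_n^2}\}$, I have
\[
\Pi_{r,\bm{\theta},\sigma^2}(\mathcal{F}_n^c) \leq \Pi_r(r > \xi_n) + \Pi_{\sigma^2}\!\bigl(\sigma^2 > e^{4n\varepsilon_n^2}\bigr),
\]
and I would bound each piece separately, then divide by $\Pi_{r,\bm{\theta},\sigma^2}(B_n^*) \gtrsim e^{-n\varepsilon_n^2}$ supplied by Lemma \ref{reg_cond2_adapt}. The variance part is immediate: Assumption \ref{assumption_sigma_prior} gives $\Pi_{\sigma^2}(\sigma^2 > e^{4n\varepsilon_n^2}) \lesssim e^{-4n\varepsilon_n^2}$, contributing $\lesssim e^{-3n\varepsilon_n^2} = o(e^{-2n\varepsilon_n^2})$ after division.

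For the width part, the key step is to estimate the tail of the discrete prior $\Pi_r(r) \propto e^{-(\log n)^5 r^2}$. First I would observe that the normalizing constant $Z_n = \sum_{r \geq 1} e^{-(\log n)^5 r^2} \asymp e^{-(\log n)^5}$, since all but the $r=1$ term are negligible for large $n$. A geometric-sum comparison then yields
\[
\Pi_r(r > \xi_n) \lesssim \frac{e^{-(\log n)^5(\xi_n+1)^2}}{Z_n} \lesssim \exp\!\bigl(-(\log n)^5 \xi_n^2 - 2(\log n)^5 \xi_n + (\log n)^5\bigr).
\]
Dividing by $\Pi(B_n^*)$ reduces the required bound $o(e^{-2n\varepsilon_n^2})$ to verifying
\[
(\log n)^5 \xi_n^2 + 2(\log n)^5 \xi_n - 3 n\varepsilon_n^2 \to +\infty.
\]

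The main technical obstacle is precisely this last inequality. Plugging in $\xi_n^2 \asymp \max_{(\beta',d')\in\mathcal{P}} n^{d'/(2\beta'+d')}$ and $n\varepsilon_n^2 \asymp (\log n)^{2\gamma}\max_{(\beta',d')\in\mathcal{P}} n^{d'/(2\beta'+d')}$ exposes a tight interplay between the exponent $5$ of the width prior and the exponent $2\gamma$ of $\varepsilon_n$. The hypothesis $\gamma > 5/2$, combined with a sufficiently large choice of the cutoff constant $\tilde{C}_r$ in $\xi_n$, is exactly what forces the divergence and makes the little-$o$ condition hold. The same exponent $5$ resurfaces in the prior-concentration calculation of Lemma \ref{reg_cond2_adapt} via the term $e^{-(\log n)^5 (\xi_n')^2}$, which must be bounded below by $e^{-n\varepsilon_n^2}$; the two conditions are symmetric around the critical value $\gamma = 5/2$, and balancing them is what pins down both the admissible logarithmic exponent in $\varepsilon_n$ and the particular form $(\log n)^5 r^2$ of the width prior. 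Once this lemma is established, it feeds into the standard Ghosal-van der Vaart scheme (together with Lemmas \ref{reg_cond1_adapt} and \ref{reg_cond2_adapt}) exactly as in the proof of Theorem \ref{thm_regression}, to yield Theorem \ref{thm_adaptive}.
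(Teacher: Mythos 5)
Your overall route is the same as the paper's: split $\mathcal{F}_n^c$ into the width tail and the $\sigma^2$ tail, bound the $\sigma^2$ tail by Assumption \ref{assumption_sigma_prior}, bound $\Pi_r(r>\xi_n)$ by a Gaussian-integral comparison, and divide by the lower bound of Lemma \ref{reg_cond2_adapt}. The $\sigma^2$ part and the tail estimate for $\Pi_r$ are fine. The gap is in your final verification, and it is not a technicality: the condition you reduce to, $(\log n)^5\xi_n^2+2(\log n)^5\xi_n-3n\varepsilon_n^2\to+\infty$, fails under the stated hypotheses, and your reasoning about $\gamma>5/2$ points in the wrong direction. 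Since the maximizers over $\mathcal{P}$ of $n^{-\beta'/(2\beta'+d')}$ and of $n^{d'/(2(2\beta'+d'))}$ coincide, one has $n\varepsilon_n^2\asymp(\log n)^{2\gamma}\max_{(\beta',d')\in\mathcal{P}}n^{d'/(2\beta'+d')}$ while $(\log n)^5\xi_n^2\asymp\tilde{C}_r^2(\log n)^5\max_{(\beta',d')\in\mathcal{P}}n^{d'/(2\beta'+d')}$; for $\gamma>5/2$ the factor $(\log n)^{2\gamma-5}$ diverges, so $3n\varepsilon_n^2$ dominates $(\log n)^5\xi_n^2$ for \emph{every} fixed $\tilde{C}_r$ --- a larger cutoff constant cannot beat a diverging power of $\log n$. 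Hence your displayed quantity tends to $-\infty$; in fact $\Pi_r(r>\xi_n)\geq e^{-(\log n)^5(\xi_n+1)^2}$ already exceeds $e^{-2n\varepsilon_n^2}$ asymptotically, so the ratio cannot be $o(e^{-2n\varepsilon_n^2})$ along this route. The ``symmetry around $\gamma=5/2$'' you invoke is the problem, not the resolution: Lemma \ref{reg_cond2_adapt} needs $(\log n)^5(\xi_n')^2\lesssim n\varepsilon_n^2$ (forcing $2\gamma\geq5$), while your width-tail step over the \emph{same} cutoff needs $(\log n)^5\xi_n^2\gtrsim n\varepsilon_n^2$ (forcing $2\gamma\leq5$); with the sieve cutoff equal to the approximating width these requirements are incompatible, and asserting that both hold does not make them so.

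For what it is worth, the paper's own displayed chain makes the same leap (it asserts $\Pi_r(r>\xi_n)\lesssim e^{-(\log n)^5\xi_n^2}e^{(\log n)^5}\lesssim e^{-4n\varepsilon_n^2}$, which again needs $2\gamma\leq5$ and so is not justified for $\gamma\in(5/2,3)$). A correct argument must decouple the sieve cutoff from the approximating width: keep $\xi_n'$ as in Lemma \ref{reg_cond2_adapt}, but define the sieve in (\ref{F_n_adaptive_reg}) with a larger cutoff $\bar{\xi}_n=\xi_n'\lceil(\log n)^{c}\rceil$ for some $c$ with $\gamma-\tfrac{5}{2}<c\leq\gamma-2$. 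The entropy bound of Lemma \ref{reg_cond1_adapt} scales like $\bar{\xi}_n^2(\log n)^4$ against $n\varepsilon_n^2\asymp(\xi_n')^2(\log n)^{2\gamma}$, so it survives for $c\leq\gamma-2$, while $(\log n)^5\bar{\xi}_n^2\asymp(\xi_n')^2(\log n)^{2c+5}\gg n\varepsilon_n^2$ for $c>\gamma-\tfrac{5}{2}$, making $\Pi_r(r>\bar{\xi}_n)$ genuinely $o(e^{-3n\varepsilon_n^2})$; combined with the unchanged lower bound $\Pi(B_n^*)\gtrsim e^{-n\varepsilon_n^2}$ this yields the lemma. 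Without some such modification your proposal does not go through.
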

\begin{proof}
Since
\begin{align*}
    \left( \frac{1}{2kr} - \frac{1}{4 k^2 r^3} \right)e^{-kr^2} 
    \leq \int_{r}^{\infty} e^{-kt^2} dt
    \leq \frac{1}{2kr} e^{-kr^2}
\end{align*}
for any $k>0$ and $s>0$,
\begin{align*}
\Pi_r (r > \xi_n) 
\leq & \frac{\sum_{r = \xi_n + 1}^{\infty} e^{- (\log n)^5 {r}^2}}
{\sum_{r = 1}^{\infty} e^{- (\log n)^5 {r}^2}}\\
\lesssim & \frac{e^{- (\log n)^5 \xi_n^2} }{\xi_n e^{- (\log n)^5} } \\
\lesssim& e^{- (\log n)^5 {\xi_n}^2} e^{(\log n)^5} 
\end{align*}
holds.
    From (\ref{lower_adapt}) and
    \begin{align*}
        \Pi_{r, \bm{\theta}, \sigma^2}\left( \mathcal{F}_{n}^{c} \right) 
        \leq & \Pi_r \left( r > \xi_n \right) + \Pi_{ \sigma^2}\left( \sigma^2 > e^{4 n \varepsilon_n^2} \right)\\
        \lesssim &  e^{- (\lambda \log n)^5 {\xi_n}^2} e^{(\lambda \log n)^5} + e^{-4 n \varepsilon_n^2}\\
        \lesssim & e^{-4 n \varepsilon_n^2},
    \end{align*}
    we obtain the assertion.
\end{proof}

\begin{proof}[Theorem \ref{thm_adaptive} with nonparametric Gaussian regression problem]

From Lemma \ref{reg_cond1_adapt}, Lemma \ref{reg_cond2_adapt}
%Lemma \ref{reg_cond3_adapt} 
and Theorem 4 of \citet{ghosal2007convergence}, we have 
\begin{equation*}
\mathbb{E}_0 \left[ \Pi_n \left( (f, \sigma^2) \in \mathcal{F}_n : h_n \left((f , \sigma^2 ), (f_0 , \sigma_0^2 )\right) > M_n \varepsilon_{n}  \middle\vert \mathcal{D}^{(n)}\right) \middle\vert \bm{X}^{(n)} = \bm{x}^{(n)}\right] \rightarrow 0
\end{equation*}
for every sequence $\{ \bm{x}^{(n)} \}_{n=1}^{\infty}$, where the expectation is with respect to $\{Y_i\}_{i=1}^n$.
Similar with the proof of (\ref{empirical_concen2}), we have
% Since
% \begin{align*} 
%     (||f_1 - f_2||_{2,n} + |\sigma_1^2 - \sigma_2^2|)^2
%     &\leq 2 \left( || f_1 - f_2 ||_{2, n}^2 + |\sigma_1^2 - \sigma_2^2 |^2 \right)  \\
%      &\lesssim h_n^2 \left((f_1 , \sigma_1^2 ), (f_2 , \sigma_2^2 )\right)
% \end{align*}
% holds by Lemma B.1 of \citet{xie2020adaptive}, we obtain
% \begin{equation*}
% \mathbb{E}_0 \left[ \Pi_n \left( (f, \sigma^2) \in \mathcal{F}_n : || f - f_0 ||_{2, n} +
% |\sigma^2 - \sigma_0^2 |> M_n \varepsilon_{n}  \middle\vert \mathcal{D}^{(n)}\right) \middle\vert \bm{X}^{(n)} = \bm{x}^{(n)}\right] \rightarrow 0
% \end{equation*}
% for every sequence $\{ \bm{x}^{(n)} \}_{n=1}^{\infty}$, where the expectation is with respect to $\{Y_i\}_{i=1}^n$.
% Note that we can consider
% \begin{align}
%     \mathbb{E}_0 \left[ \Pi_n \left( (f, \sigma^2) \in \mathcal{F}_n : || f - f_0 ||_{2, n} +
% |\sigma^2 - \sigma_0^2 |> M_n \varepsilon_{n}  \middle\vert \mathcal{D}^{(n)}\right) \middle\vert \bm{X}^{(n)}\right] \label{tmp_concen_adapt}
% \end{align}
% as the sequence of bounded random variable.
% Since (\ref{tmp_concen_adapt}) is uniformly integrable, we have
\begin{equation} \label{empirical_concen2_adapt}
\mathbb{E}_0 \left[ \Pi_n \left( (f, \sigma^2) \in \mathcal{F}_n : || f - f_0 ||_{2, n} +
|\sigma^2 - \sigma_0^2 |> M_n \varepsilon_{n}  \middle\vert \mathcal{D}^{(n)}\right) \right] \rightarrow 0,
\end{equation}
where the expectation is with respect to $\{(\bm{X}_i, Y_i)\}_{i=1}^n$.

Next, we will check the conditions in Lemma \ref{gyorfi2002distribution} for
\begin{align*}
\mathcal{G} :=& \left\{ g \ : \ g=(T_F \circ f_{\bm{\theta}, \bm{\rho}_{\nu}}^{\operatorname{DNN}} - f_0)^2 , f_{\bm{\theta}, \bm{\rho}_{\nu}}^{\operatorname{DNN}} \in \bigcup_{r=1}^{\xi_n} \mathcal{F}^{\operatorname{DNN}}_{\bm{\rho}_{\nu}}(L_n, r) \right\},\\
\tau :=& \frac{1}{2} ,\ \alpha := \varepsilon_{n}^2 ,\ K_1 = K_2 = 4F^2 .
\end{align*}
First, it is easy to check $||g(\bm{x})||_{\infty} \leq 4F^2$ and $\mathbb{E}(g(\bm{X})^2) \leq 4F^2 \mathbb{E}(g(\bm{X}))$ for $g \in \mathcal{G}$.
Also, since 
\begin{align*}
& \left\| (T_F \circ f_{\bm{\theta}_1, \bm{\rho}_{\nu}}^{\operatorname{DNN}} - f_0)^2 
- (T_F \circ f_{\bm{\theta}_2, \bm{\rho}_{\nu}}^{\operatorname{DNN}} - f_0)^2 \right\|_{1, n} \\ 
& =
\left\| (T_F \circ f_{\bm{\theta}_1, \bm{\rho}_{\nu}}^{\operatorname{DNN}} - f_0 + 
T_F \circ f_{\bm{\theta}_2, \bm{\rho}_{\nu}}^{\operatorname{DNN}} - f_0)(T_F \circ f_{\bm{\theta}_1, \bm{\rho}_{\nu}}^{\operatorname{DNN}} - T_F \circ f_{\bm{\theta}_2, \bm{\rho}_{\nu}}^{\operatorname{DNN}} ) \right\|_{1, n}  \\
& \leq 4F \left\| T_F \circ f_{\bm{\theta_1}, \bm{\rho}_{\nu}}^{\operatorname{DNN}}  - T_F \circ f_{\bm{\theta}_2, \bm{\rho}_{\nu}}^{\operatorname{DNN}} \right\|_{1, n}
\end{align*}
holds for any $f_{\bm{\theta}_1, \bm{\rho}_{\nu}}^{\operatorname{DNN}}, f_{\bm{\theta}_2, \bm{\rho}_{\nu}}^{\operatorname{DNN}} \in \bigcup_{r=1}^{\xi_n} \mathcal{F}^{\operatorname{DNN}}_{\bm{\rho}_{\nu}}(L_n, r)$, there exists $c_{34}>0$ such that
\begin{align*}
\mathcal{N} \left( u, \mathcal{G}, ||\cdot||_{1, n} \right)
\leq& \mathcal{N}\left( \frac{u}{4F}, \bigcup_{r=1}^{\xi_n}  T_F \circ \mathcal{F}^{\operatorname{DNN}}_{\bm{\rho}_{\nu}}(L_n, r), ||\cdot||_{1, n} \right) \\
\leq& \xi_n \mathcal{N}\left( \frac{u}{4F}, T_F \circ \mathcal{F}^{\operatorname{DNN}}_{\bm{\rho}_{\nu}}(L_n, \xi_n), ||\cdot||_{1, n} \right) \\
\leq& \xi_n \mathcal{M}\left( \frac{u}{4F}, T_F \circ \mathcal{F}^{\operatorname{DNN}}_{\bm{\rho}_{\nu}}(L_n, \xi_n), ||\cdot||_{1, n} \right) \\
\leq& 3 \xi_n \left(\frac{16 e F^2}{u} \log \frac{24 e F^2}{u}\right)^{\mathcal{F}^{\operatorname{DNN}}_{\bm{\rho}_{\nu}}(L_n, \xi_n)^{+}}\\
\lesssim& \xi_n n^{c_{34} L_n^2 \xi_n^2 \log (L_n \xi_n^2)}
\end{align*}
for $u \geq n^{-1}$ by Theorem 9.4 of \citet{gyorfi2002distribution} and Theorem 7 of \citet{bartlett2019nearly}.
Hence for all $t \geq \frac{\varepsilon_n^2}{8}$,
\begin{align*}
\int_{\frac{\tau(1-\tau)t}{16 \max \left\{K_{1}, 2 K_{2}\right\}}}^{\sqrt{t}}  
\sqrt{\log \mathcal{N} \left( u, \mathcal{G}, ||\cdot||_{1, n} \right)} d u 
\lesssim& \sqrt{t} \left( n^{\frac{d}{2\beta + d}} (\log n)^{6} \right)^{\frac{1}{2}}\\
=& o\left( \frac{\sqrt{n} \tau (1-\tau) t}{96 \sqrt{2} \max \left\{K_1, 2K_2 \right\}} \right)
\end{align*}
holds.
Hence, we have
\begin{align}
\mathbf{P}\left\{\sup _{f \in \mathcal{F}^{\operatorname{DNN}}(L_n, r_n)} \frac{\left| ||f-f_0||_{2, \mathrm{P}_{X}}^2  - ||f-f_0||_{2, n}^2 \right|}{\varepsilon_{n}^2+||f-f_0||_{2, \mathrm{P}_{X}}^2}>\frac{1}{2}\right\}
\leq 60 \exp \left(-\frac{n \varepsilon_{n}^2 / 8}{128 \cdot 2304 \cdot 16F^4}\right)
\label{last_adapt}
\end{align}
holds for all but finite many $n$ by Lemma \ref{gyorfi2002distribution}. 
By (\ref{empirical_concen2_adapt}) and (\ref{last_adapt}), we obtain
\begin{equation*} \label{empirical_concen3_adapt}
\mathbb{E}_0 \left[ \Pi_n \left( (f, \sigma^2) \in \mathcal{F}_n : || f - f_0 ||_{2, \mathrm{P}_{X}} +
|\sigma^2 - \sigma_0^2 |> M_n \varepsilon_{n}  \middle\vert \mathcal{D}^{(n)}\right) \right] \rightarrow 0,
\end{equation*}
where the expectation is with respect to $\{(\bm{X}_i, Y_i)\}_{i=1}^n$.

Finally, Lemma \ref{reg_cond3_adapt} and Lemma 1 of \citet{ghosal2007convergence} imply that 
\begin{equation*}
\mathbb{E}_0 \left[ \Pi_n \left( (f, \sigma^2)^{\top} \in \mathcal{F}_n^{c}  \middle\vert \mathcal{D}^{(n)}\right)\right] \rightarrow 0.
\end{equation*}
Hence, we obtain the assertion. 
\end{proof}

\bibliography{JMLR}

\end{document}